\newtheorem{theorem}{Theorem}
\newtheorem{proposition}{Proposition}
\newtheorem{claim}{Claim}
\newtheorem{lemma}{Lemma}
\newtheorem{definition}{Definition}
\newtheorem{remark}{Remark}
\def\H{\mathcal{H}}
\def\G{\mathcal{G}}
\def\X{\mathcal{X}}
\def\T{\mathcal{T}}
\def\D{\mathcal{D}}
\def\G{\mathcal{G}}
\def\E{\mathbb{E}}
\def\1{\mathbf{1}}
\def\P{\mathbb{P}}
\def\R{\mathbb{R}}
\def\N{\mathbb{N}}
\newcommand{\mc}[1]{\mathcal{#1}}
\newcommand{\mb}[1]{\mathbf{#1}}
\newcommand{\norm}[1]{\Big\lVert#1\Big\rVert}
\DeclareMathOperator*{\argmax}{arg\,max}
\DeclareMathOperator*{\argmin}{arg\,min}
\DeclareMathOperator*{\simp}{\mathbf{\triangle}}
\def\t{\top}
\def\one{\mathbbm{1}}
\def\zero{\bm{0}}
\newcommand{\ignore}[1]{}
\newcommand{\floor}[1]{\Big\lfloor #1 \Big\rfloor}
\newcommand{\wt}[1]{\widetilde{#1}}
\DeclareMathOperator*{\dis}{\theta}
\DeclareMathOperator*{\disr}{\text{DIS}}
\DeclareMathOperator*{\err}{\text{err}}
\newcommand{\param}{\mu}
\def\hstar{h^*}
\newcommand{\errstar}{\nu}
\newcommand{\vc}{d}
\DeclareMathOperator*{\labn}{\eta}
\DeclareMathOperator*{\Hx}{\H_{x}}
\newcommand{\rad}{{\zeta}}
\begin{document}

\twocolumn[
\icmltitle{Improved Algorithms for Agnostic Pool-based Active Classification}

% It is OKAY to include author information, even for blind
% submissions: the style file will automatically remove it for you
% unless you've provided the [accepted] option to the icml2021
% package.

% List of affiliations: The first argument should be a (short)
% identifier you will use later to specify author affiliations
% Academic affiliations should list Department, University, City, Region, Country
% Industry affiliations should list Company, City, Region, Country

% You can specify symbols, otherwise they are numbered in order.
% Ideally, you should not use this facility. Affiliations will be numbered
% in order of appearance and this is the preferred way.
\icmlsetsymbol{equal}{*}

\begin{icmlauthorlist}
\icmlauthor{Julian Katz-Samuels}{to}
\icmlauthor{Jifan Zhang}{to}
\icmlauthor{Lalit Jain}{to}
\icmlauthor{Kevin Jamieson}{to}
\end{icmlauthorlist}

\icmlaffiliation{to}{Paul G. Allen School of Computer Science and Engineering, University of Washington, Seattle, WA}
% \icmlaffiliation{goo}{Googol ShallowMind, New London, Michigan, USA}
% \icmlaffiliation{ed}{School of Computation, University of Edenborrow, Edenborrow, United Kingdom}

\icmlcorrespondingauthor{Kevin Jamieson}{jamieson@cs.washington.edu}

% You may provide any keywords that you
% find helpful for describing your paper; these are used to populate
% the "keywords" metadata in the PDF but will not be shown in the document
% \icmlkeywords{Machine Learning, ICML}

\vskip 0.3in
]

% this must go after the closing bracket ] following \twocolumn[ ...

% This command actually creates the footnote in the first column
% listing the affiliations and the copyright notice.
% The command takes one argument, which is text to display at the start of the footnote.
% The \icmlEqualContribution command is standard text for equal contribution.
% Remove it (just {}) if you do not need this facility.

\printAffiliationsAndNotice{}  % leave blank if no need to mention equal contribution
% \printAffiliationsAndNotice{\icmlEqualContribution} % otherwise use the standard text.

\begin{abstract}
We consider active learning for binary classification in the agnostic pool-based setting.
The vast majority of works in active learning in the agnostic setting are inspired by the CAL algorithm where each query is uniformly sampled from the disagreement region of the current version space.
The sample complexity of such algorithms is described by a quantity known as the disagreement coefficient which captures both the geometry of the hypothesis space as well as the underlying probability space.
To date, the disagreement coefficient has been justified by minimax lower bounds only, leaving the door open for superior instance dependent sample complexities. 
In this work we propose an algorithm that, in contrast to uniform sampling over the disagreement region, solves an experimental design problem to determine a distribution over examples from which to request labels. 
We show that the new approach achieves sample complexity bounds that are never worse than the best disagreement coefficient-based bounds, but in specific cases can be dramatically smaller.
From a practical perspective, the proposed algorithm requires no hyperparameters to tune (e.g., to control the aggressiveness of sampling), and is computationally efficient by means of assuming access to an empirical risk minimization oracle (without any constraints). 
Empirically, we demonstrate that our algorithm is superior to state of the art agnostic active learning algorithms on image classification datasets.
\end{abstract}

\section{Introduction}

Most applications of machine learning have an enormous amount of unlabeled data. Yet, many powerful machine learning methods require that this data be labeled and reliable labels are costly since they require human intervention. The cost of providing labels has become one of the main bottlenecks in applications of machine learning, generating much interest in the problem of \emph{active classification} where the learner is given an unlabeled pool of examples and her goal is to identify an accurate hypothesis using the minimum number of labels possible \cite{settles2011theories}. 

One of the most popular algorithmic paradigms is disagreement-based active classification \cite{hanneke2014theory}. Under this approach, after observing $k$ labels a version space $\mc{V}_k$ of the most promising classifiers is maintained, and the learner queries an example $x$ if there are two hypotheses $h_1$ and $h_2$ belonging to $\mc{V}_k$ that disagree on the label of $x$. This approach has received much attention because it applies to generic hypothesis classes, it can be made robust to label noise, and it can be efficient by using a constrained cost-sensitive classification oracle, a problem for which there are many reasonable heuristics~\cite{agarwal2018reductions, beygelzimer2010agnostic}.

However, disagreement-based active classification suffers from two significant shortcomings. First, it queries uniformly any example on which there is disagreement even though intuitively some of these examples may be much more informative than others. Second, disagreement-based active classification algorithms tend to take a naive union bound over all hypotheses, which ignores many of the dependencies among the hypotheses. 
Indeed, recent work in pure exploration combinatorial and linear bandits has shown that such naive union bounds can be highly suboptimal and have a significant impact on empirical performance \cite{cao2019disagreement,jain2019new,katz2020empirical}.
Given that these naive union bounds are very loose and appear in the confidence bounds used by the algorithms, in practice, many works instead replace these union bounds with a constant that can be tuned to control the aggressiveness of the algorithm \cite{beygelzimer2010agnostic, huang2015efficient}.
Unfortunately, this constant introduces a hyperparameter to the active learning algorithm that is difficult to set before seeing lots of data.
%\lalit{switch this to being about hyperparameters in the confidence interval, not the union bound explicitly}

We design a new algorithm for pool-based active classification that addresses these shortcomings. It optimizes a novel experimental design objective that finds the best subset of examples in the disagreement region to query in order to identify the best classifier. It avoids wasteful union bounds by adapting to the geometry of the hypothesis space and thus avoiding the need to choose hyperparameters. We introduce a new notion of sample complexity inspired by experimental design that improves on disagreement-based active classification by a factor up to $\sqrt{n}$ where $n$ is the size of the pool while being only a logarithmic factor worse than disagreement-based learning in the worst case.

\subsection{Preliminaries}

Let $\X$ denote the input space, and let $\{x_1,\ldots, x_n\} \subset \X$ denote a pool of examples.  Let $\H$ denote a class of hypotheses where each $h : \X \mapsto \{0,1\}$ assigns a label to each example in the pool. Let $\Hx:= \{(h(x_i))_{i \in [n]} : h \in \H \}$ denote the set of labelings over the pool induced by the hypothesis class $\H$. Let $\vc$ denote the VC dimension of $\H$. 
When example $i \in[n]$ is queried, the agent receives label $Y_i \sim \text{Bern}({\labn}_{i})$ where $\eta = (\eta_i)_{i=1}^n\in [0,1]^n$. We define the error of a hypothesis $h \in \H$ on the pool of examples as given by
\begin{align}
\err(h) 
&= \frac{1}{n} \sum_{i=1}^n \P(Y_i \neq h(x_i) ) \label{eqn:metric_of_merit}\\
&= \frac{1}{n}\sum_{i \in [n] } {\labn}_i(1- h(x_i)) + (1-{\labn}_i)h(x_i).\nonumber
%&= \frac{1}{n}\Big(\sum_{i \in [n] : h(x_i) = 0 } {\labn}_i + \sum_{i \in [n] : h(x_i) = 1 } (1-{\labn}_i)\Big).\nonumber
\end{align}
%We also write $\err(h) := n \err(h)$ to denote the unnormalized error. 
Let $h^* : = \argmin_{h \in \H} \err(h)$ be the hypothesis of minimum error, and let $\errstar = \err(\hstar)$. The goal in active classification is to find an $h \in \H$ with error close to that of $h^*$ using as few label queries as possible. In this paper, we quantify performance as follows: 

\textbf{Problem. Agnostic Pool Based PAC Active Classification:} Given $\epsilon > 0, \delta \in (0,1)$, identify an $\epsilon$-good classifier, that is, an $h \in \H$ such that $\err(h) - \err(h^*) \leq \epsilon $ with probability at least $1-\delta$ using as few labels as possible.

%The goal of finding an $\epsilon$-good classifier over a pool of examples is closely related to the goal of using an active classification algorithm to find a classifier with good generalization. To keep matters simple, we demonstrate this connection in the setting where there is a finite number of classifiers and a deterministic labeling function. \kevin{I think I'd like to adjust the presentation of this proposition and its preceding parargaph, but I like where this is, I think. Maybe in a remark?}

\begin{remark}\label{prop:generalization}
The goal of finding an $\epsilon$-good classifier over a pool of examples is closely related to the goal of using an active classification algorithm to find a classifier with good generalization. Suppose $VCdim(\H) =d$ and let $\D$ be a distribution over $\X \times \{0,1\}$. For $i=1,\dots,n$ let  $(x_i,y_i) \sim \D$. If $\widehat{h}$ satisfies $\err(\widehat{h}) \leq \min_{h \in \H} \err(h) + \epsilon$, then with probability at least $1-\delta$
\begin{align*}
&\P_{(x,y) \sim \D}(\widehat{h}(x) \neq y) \leq \\
&\quad \min_{h \in \H} \P_{(x,y) \sim \D}(h(x) \neq y) + \epsilon + O\Big(\sqrt{\frac{d\ln(1/\delta)}{n}}\Big) .
\end{align*}
by standard passive generalization bounds \cite{boucheron2005theory}.
\end{remark}

\subsection{Main contributions}
We briefly summarize our contributions:
\begin{itemize}[leftmargin=*, noitemsep, topsep=0pt]
    \item We cast pool-based active binary classification as an \emph{adaptive experimental design problem} that computes an optimal sampling distribution over the pool of unlabelled examples. We demonstrate that an $\epsilon$-good classifier can be obtained with probability at least $1-\delta$ by requesting just $\gamma^*(\epsilon) + \rho^*(\epsilon) \log(1/\delta)$ labels if examples to label are drawn from the optimal design, where $\gamma^*(\epsilon)$ and $\rho^*(\epsilon)$ are problem-dependent quantities defined in the next section.
    \item Since this optimal design uses problem dependent information like $\eta$, it is not a constructive strategy or algorithm for a learner. 
    Treating the sample complexity achieved by this optimal design as a target, we design an algorithm that performs sequential stages of experimental design to match the sample complexity of the optimal design, $\gamma^*(\epsilon) + \rho^*(\epsilon) \log(1/\delta)$ up to a $\log(1/\epsilon)$ factor. The algorithm employs the use of a novel estimator that appeals to a chaining argument. Unfortunately, the method is not computationally efficient.  
    \item We propose a second algorithm that is computationally efficient given access to an empirical risk minimization oracle. The price for computational tractability is a slightly worse sample complexity.
    Besides being computationally efficient, our approach avoids the need to tune hyperparameters and the use of a \emph{constrained} empirical risk minimization oracle which are required by other active learning algorithms \cite{beygelzimer2010agnostic,huang2015efficient}.
    \item We compare our sample complexity results to those of state-of-the-art disagreement-based learning algorithms that are given in terms of the so-called disagreement coefficient. We demonstrate that our results, up to log factors, are never worse than previous results, but can be substantially better in certain cases. 
    \item Empirically, we compare our procedure to state-of-the-art algorithms for the agnostic setting including variants of the importance weighted active learning algorithm (IWAL) \cite{beygelzimer2010agnostic} and active cover \cite{huang2015efficient}. We demonstrate that our method is superior across four image classification tasks. 
\end{itemize}

\section{Experimental Design for Active Classification}
We seek to identify an $\epsilon$-good classifier by seeing as few labels as possible. To this end, we can take motivation from \emph{experimental design} to consider the \emph{optimal} sampling distribution over our pool of unlabeled examples $[n]$. 
For an arbitrary distribution $\lambda\in \triangle_n:=\{p\in \mathbb{R}^n: p_i\geq 0, \forall i\in [n]; \sum_{i=1}^n p_i = 1\}$ suppose we sampled $I_1, \cdots, I_t\sim \lambda$ and then observed $y_s$ for each $s \in [t]$. 
Then an unbiased natural estimator for the error of a classifier $h\in \mc{H}$ defined by \eqref{eqn:metric_of_merit} is given by 
\begin{align*}
    \wt{\err}(h) = \frac{1}{t}\sum_{s=1}^t \frac{1/n}{\lambda_{I_s}} \one\{h(x_{I_s}) \neq y_s\}.
\end{align*}
Indeed, by i.i.d. sampling from $\lambda$, we have for any $s \in [t]$ 
\begin{align*}
    \E[\wt{\err}(h))] &= \E\Big[ \frac{1/n}{\lambda_{I_s}} \one\{h(x_{I_s}) \neq y_s\} \Big] \\
    &= \sum_{i=1}^n \P( I_s = i ) \frac{1/n}{\lambda_{i}} \E[ \one\{h(x_{i}) \neq y_s\} | I_s = i ] \\
    &= \frac{1}{n} \sum_{i=1}^n \P(Y_i \neq h(x_i) ) = \err(h) 
\end{align*}
since by definition, $\P(I_s = i) = \lambda_i$.
Likewise, an estimator for the excess risk is given by
\begin{align}
\wt{\err}(h) &- \wt{\err}(\hstar) = \label{eq:standard_est} \\
&\frac{1}{t} \sum_{s=1}^t \frac{1/n}{\lambda_i}( \one\{h(x_{I_s}) \neq y_s\} - \one\{\hstar(x_{I_s}) \neq y_s\}). \nonumber
\end{align}
It is straightforward to show that the variance of $\wt{\err}(h) - \wt{\err}(\hstar)$ is upper bounded by 
    $\frac{1}{n} \sum_{i=1}^n \frac{1}{\lambda_i n^2}\one\{h^*(x_i) \neq h(x_i)\}$, 
using the upper bound $\one\{h(x_{I}) \neq y_s\} - \one\{\hstar(x_{I}) \neq y_s\})\leq \one\{h^*(x_I) \neq h(x_I)\}$.
% $\V_{I \sim \lambda}(\frac{1/n}{\lambda_I}( \one\{h(x_{I}) \neq y_s\} - \one\{\hstar(x_{I}) \neq y_s\})  \leq \E_{I \sim \lambda}\left[ \frac{1/n}{\lambda_I} \one\{h^*(x_I) \neq h(x_I)\} \right]$
Applying Bernstein's inequality (and ignoring the $1/t$ term) with probability at least $1-\delta$
\begin{align}
& |\wt{\err}(h) - \wt{\err}(\hstar) - (\err(h) - \err(\hstar))| \lessapprox \nonumber \\
&\quad \sqrt{ \frac{ \sum_{i=1}^n \frac{1}{\lambda_i n^2} \one\{h^*(x_i) \neq h(x_i)\}   \log(|\mc{H}_x|/\delta) }{t} } \label{eq:bernstein_bound_motive}.
\end{align}
This then suggests that to estimate the excess error of this particular $h$ with probability at least $1-\delta$, it suffices to take $t$ large enough to make the RHS of \eqref{eq:bernstein_bound_motive} less than $\epsilon$.
To upper bound the excess risk of every $h \in \mc{H}$ simultaneously, it suffices to take $t \geq \sup_{h \in \mc{H}}  \frac{ \sum_{i=1}^n \frac{1}{\lambda_i n^2} \one\{h^*(x_i) \neq h(x_i)\} }{\max\{ \epsilon^2, (\err(h)-\err(\hstar))^2\}} \log(|\mc{H}|/\delta)$.
If we seek to \emph{minimize} the total number of observations, we simply minimize over all $\lambda \in \triangle_n$, motivating the complexity measure:
\begin{align*}
\rho^*(\epsilon) & := \inf_{\lambda \in \simp_n}  \sup_{h \in \H \setminus \{\hstar \}} \frac{ \sum_{i=1}^n \frac{1}{\lambda_i n^2} \one\{h^*(x_i) \neq h(x_i)\} }{\max(\err(h) - \err(\hstar), \epsilon)^2 } .
\end{align*}
Thus, we'd expect that if $t \geq \rho^*(\epsilon) \log(|\mc{H}|/\delta)$ samples are drawn from the $\lambda$ that minimizes $\rho^*(\epsilon)$, then $\widehat{h} = \arg\min_{h \in \mc{H}} \wt{\err}(h)$ will be $\epsilon$-good.

\subsection{Sidestepping the Naive Union Bound}
A significant shortcoming of the standard approach of applying Bernstein's inequality with a naive union bound is that the the naive union bound incurs an additional factor of $\log(|\Hx|)$ in the sample complexity. For infinite classes, $\log(|\Hx|)$ can be replaced by the VC-dimension of $\Hx$, however this can still be very loose. In practice, active learning algorithms replace $\log(|\Hx|)$ by a tunable parameter $C_0$ \citep{beygelzimer2010agnostic, huang2015efficient}. Ideally $C_0$ would be chosen via cross-validation but since our data is being chosen adaptively, under an active algorithm that depends on $C_0$, it is unclear how to make the choice a priori.
%In general it is unclear how to choose $C_0$, and unlike the passive setting since our data is being chosen adaptively, cross-validation can't be applied. 

To improve upon the naive union-bound we appeal to results from empirical process theory. Appealing to the Talagrand/Bousquet inequality \cite{boucheron2005theory}, for all $h\in \mc{H}$, especially the empirical risk minimizer $\widehat{h} = \argmin_{h\in \mc{H}}\wt{\err}(h)$, we have
%We consider the empirical process indexed by $h\in \mc{H}$
\begin{align*}
\hspace{.15in}&\hspace{-.15in}\wt{\err}(h) - \wt{\err}(h^{\ast})-( \err(h) - \err{h^{\ast}}) \\
\leq& 2\E[\sup_{h\in \H} |\wt{\err}(h) - \wt{\err}(h^{\ast})-( \err(h) - \err{h^{\ast}})|] \\
&+ \sqrt{\frac{\sup_{h\in \H}\sum_{i=1}^n \frac{1}{\lambda_i n^2} \one\{h^*(x_i) \neq h(x_i)\}\log(1/\delta)}{t}}\\
&+\frac{4\sup_{i\in [n]} 1/\lambda_i\log(1/\delta)}{3t}.
\end{align*}
Traditionally, we compute the expectation of the suprema using symmeterization to obtain the Rademacher complexity of $\mc{H}\backslash\{\hstar\}$. In general, the Rademacher complexity is within a $\log(n)$ factor of the Gaussian Width \cite{bartlett2002rademacher}. In particular, 
\begin{align*}
    \E[&\sup_{h\in \H} |\wt{\err}(h) - \wt{\err}(h^{\ast})-( \err(h) - \err{h^{\ast}})|]\\
    &\leq   \frac{1}{\sqrt{t}} \E_{\rad \sim N(0,I)} \Big[ \sup_{h \in \H } \sum_{i\in [n]} \frac{\rad_i}{n\lambda^{1/2}}(\hstar(x_i) - h(x_i))  \Big].
\end{align*}
Using the same argument that motivated $\rho^*(\epsilon)$ but applying Bousquet's inequality instead of Bernstein's inequality, we introduce the following new complexity measure for active classification:
\begin{align*}
\gamma^*(\epsilon) & := \inf_{\lambda \in \simp_n} \E_{\rad} \Big[ \sup_{h \in \H} \frac{\sum_{i\in [n]} \frac{\rad_i (\hstar(x_i) - h(x_i))}{n\lambda_i^{1/2}} }{ \max(\err(h) - \err(\hstar) ,\epsilon)} \Big]^2.
\end{align*}
 Analogous to above, if we ignore the $1/t$ term, we'd expect that if $t \geq \gamma^*(\epsilon) + \rho^*(\epsilon) \log(1/\delta)$ samples are drawn from the $\lambda$ that minimizes the maximum of  $\gamma^*(\epsilon)$ and $\rho^*(\epsilon)$, then $\widehat{h} = \arg\min_{h \in \mc{H}} \wt{\err}(h)$ will be $\epsilon$-good.

%\kevin{This argument was pretty good!}

We can relate $\gamma^*(\epsilon)$ to $\rho^*(\epsilon)$ in the following way.
\begin{proposition}[\citet{katz2020empirical}]\label{prop:rho_gamma_rel}
$ \gamma^*(\epsilon) \leq c\log(|\Hx|) \rho^*(\epsilon) \leq c\vc \log(\frac{n}{\vc})  \rho^*(\epsilon)$. 
\end{proposition}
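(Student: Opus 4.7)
The plan is to relate $\gamma^*(\epsilon)$ to $\rho^*(\epsilon)$ by viewing the quantity inside the outer expectation in the definition of $\gamma^*(\epsilon)$ as the supremum of a centered Gaussian process indexed by $h$, and bounding it by a standard maximal inequality for finitely many Gaussian random variables. The Sauer-Shelah lemma then converts $\log |\Hx|$ into the VC-dimension bound $\vc \log(n/\vc)$.

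More concretely, let $\lambda^\star \in \triangle_n$ be a minimizer of $\rho^*(\epsilon)$. First, I would upper bound $\gamma^*(\epsilon)$ by plugging $\lambda^\star$ in place of the infimum in its definition:
\[
\gamma^*(\epsilon) \le \E_{\rad} \Big[ \sup_{h\in\H} Z_h \Big]^2,
\qquad Z_h := \frac{\sum_{i\in[n]} \frac{\rad_i(\hstar(x_i)-h(x_i))}{n(\lambda^\star_i)^{1/2}}}{\max(\err(h)-\err(\hstar),\epsilon)}.
\]
Since the quantity $Z_h$ depends on $h$ only through the vector $(h(x_i))_{i\in[n]}$, the supremum is really over $\Hx$, which has at most $|\Hx|$ elements; this is the key observation that replaces a naive count over $\H$ by the combinatorial one.

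Next, because $\rad_i \sim N(0,1)$ are independent, each $Z_h$ is a centered Gaussian with variance
\[
\V(Z_h) = \frac{\sum_{i\in[n]} \frac{(\hstar(x_i)-h(x_i))^2}{n^2 \lambda^\star_i}}{\max(\err(h)-\err(\hstar),\epsilon)^2} = \frac{\sum_{i\in[n]} \frac{\one\{\hstar(x_i)\neq h(x_i)\}}{n^2 \lambda^\star_i}}{\max(\err(h)-\err(\hstar),\epsilon)^2} \le \rho^*(\epsilon),
\]
where I used that $\hstar(x_i), h(x_i) \in \{0,1\}$ so the squared difference is an indicator, and then compared with the definition of $\rho^*(\epsilon)$ evaluated at $\lambda^\star$. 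Note $Z_{\hstar} = 0$ so $\E[\sup_h Z_h] \ge 0$.

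Now I would apply the standard maximal inequality for a finite family of centered Gaussian random variables, which gives $\E[\sup_h Z_h] \le \sqrt{2 \log |\Hx|}\, \sup_h \sqrt{\V(Z_h)}$. Squaring and combining with the variance bound yields $\gamma^*(\epsilon) \le 2 \log(|\Hx|)\, \rho^*(\epsilon)$, which is the first inequality (with an absolute constant $c$). Finally, the Sauer-Shelah lemma gives $|\Hx| \le (en/\vc)^\vc$, hence $\log|\Hx| \le \vc \log(en/\vc)$, which yields the second inequality after absorbing the constant. No step looks especially delicate here; the only point that requires a bit of care is recognizing that $(\hstar(x_i)-h(x_i))^2 = \one\{\hstar(x_i) \neq h(x_i)\}$ so that the variance of $Z_h$ matches exactly the quantity appearing in $\rho^*(\epsilon)$, which is what makes the reduction tight.
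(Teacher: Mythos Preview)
Your proposal is correct and follows essentially the same approach as the paper: the paper attributes the first inequality to Massart's finite class lemma (which is exactly the Gaussian maximal inequality $\E[\sup_h Z_h]\le \sqrt{2\log|\Hx|}\,\sup_h\sqrt{\V(Z_h)}$ you apply after plugging in the $\rho^*$-optimal $\lambda^\star$), and the second inequality to the Sauer--Shelah lemma. Your observation that $Z_{h^*}=0$ ensures the expectation is nonnegative before squaring, and your identification $(\hstar(x_i)-h(x_i))^2=\one\{\hstar(x_i)\neq h(x_i)\}$ is precisely what matches the variance of $Z_h$ to the $\rho^*(\epsilon)$ objective.
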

The first inequality parallels the application of Massart's finite class lemma to bound the Rademacher complexity in statistical learning theory and the second inequality follows from the Sauer-Shelah Lemma. 
\citet{katz2020empirical} also demonstrates a lower bound on $\gamma^*(\epsilon)$ that is dominated by $\rho^*(\epsilon)$.
In the appendix, we show that $\gamma^*(\epsilon)$ matches the minimax rates for classification given for the hypothesis class of thresholds in \cite{castro2008minimax}. 

\textbf{Main Takeaway:} In Section~\ref{sec:upper_bound} we will establish an algorithm that achieves a sample complexity of $(\gamma^{\ast}(\epsilon) + \rho^{\ast}(\epsilon)\log(1/\delta))\log(1/\epsilon)$ to obtain an $\epsilon$-good classifier with probability greater than $1-\delta$. 
In the next section we compare this result to disagreement based methods.
Note that we will write $\rho^* := \rho^*(0)$ and $\gamma^* := \gamma^*(0)$.

\subsection{Comparison with the Disagreement Coefficient} \label{sec:disagreement}
To date, theoretically grounded active learning algorithms in the agnostic setting are disagreement region sampling methods.
At the beginning of each round $t$ these algorithms construct a version space $\mc{V}\subset\H$ which is defined to be the set of classifiers that have yet to be ruled out by the algorithm using the observed labels up to round $t-1$. These algorithms then choose $x_{I_t}$ to be uniformly sampled from $\disr(\mc{V})$, the disagreement region, which is the set of points on which any two hypotheses in $\mc{V}$ disagree:
\begin{align*}
\disr(\mc{V}) & = \{i : \exists h,h^\prime \in \mc{V} \text{ s.t. } h(x_i) \neq h^\prime(x_i) \}.
\end{align*}
In the notation of the previous section, these algorithms are sampling from $\lambda_t$ where $\lambda_t$ is the uniform distribution supported on $\disr(\mc{V})$ \cite{hanneke2014theory}.%\lalit{TODO:citations.}

The main complexity measure considered for disagreement based algorithms is the disagreement coefficient defined as 
% \vspace{-\intextsep}
\begin{align*}
\dis(\xi) & = \sup_{r \geq \xi  }  \frac{|\disr(B(h_*,r))|/n}{r} 
%& = \sup_{r \geq \xi  }   \frac{|\{i \in [n] : \exists h,h^\prime \in B(h_*,r) \text{ and } h(x_i) \neq h^\prime(x_i) \}|}{nr}.
\end{align*}
where $B(h_*,r)$ is the ball of radius $r$ centered at $\hstar$:
\begin{align*}
B(h_*,r) & = \{h \in \H : \frac{ \sum_{i \in [n]} \one\{\hstar(x_i) \neq h(x_i) \}}{n} \leq r \}.
\end{align*}

We consider sample complexity results for finding an $h$ with $\err(h)\leq \nu + \epsilon$, where $\nu = \err(h^{\ast})$  under two common settings. 
\begin{enumerate}[leftmargin=*, noitemsep, topsep=0pt]
\item \textbf{The Agnostic Setting}: we make no assumptions on $\eta\in [0,1]^n$. In this case the best known sample complexities scale like
\begin{align*}
\dis(\epsilon )( \frac{\nu^2}{\epsilon^2} + \log(1/\epsilon)) \vc
\end{align*}
where $d$ is the VC dimension of $\mc{H}$ \cite{hanneke2014theory}. Note that the noiseless setting of $\eta\in \{0,1\}^n$ is a special case.
%\lalit{TODO: put a bunch of citations here.}
%\kevin{I don't like the word ``noiseless'' because it make me think realizable, which this is not. How about ``Agnostic setting with persistent noise''?}} $\labn \in \{0,1\}^n$.
\item \textbf{The Tsybakov noise condition:} for some $a \in [1, \infty)$ and $\alpha \in (0,1]$ every $h \in \H \setminus \{\hstar\}$ satisfies 
\begin{align*}
\frac{ \sum_{i \in [n]} \one \{\hstar(x_i) \neq h(x_i)\} }{n} & \leq a (\err(h) - \err(\hstar))^\alpha.
\end{align*}
In this case the best known sample complexities scale like:
\begin{align*}
a^2 \frac{1}{\epsilon^{2-2 \alpha}} \dis(a \epsilon^\alpha) \vc \log(1/\epsilon).
\end{align*}
\end{enumerate}

%The noiseless case is quite broad and practically relevant since there is no noise in the labels in many common machine learning settings. The Tsabakov noise condition is well-studied in active classification and can be thought of as a low noise condition. Note that realizability is a special case of the Tsabokov noise condition (with $a =\alpha = 1$).

%Next, we give the state-of-the-art sample complexities from disagreement-based active classification from \cite{hanneke2014theory}. In the noiseless case holds, the state of the art sample complexity to identify $h$ with $\err(h) \leq \errstar + \epsilon$ is at least
%\kevin{We should be careful here, because what we mean is that their \textit{bounds} scale like $\Omega( \cdot )$ but their algorithms may not}
% \begin{align*}
% \Omega(\dis(\epsilon )( \frac{\nu^2}{\epsilon^2} + \log(1/\epsilon)) \vc).
% \end{align*}
% If the Tsybakov noise condition holds, then the state of the art sample complexity to identify $h$ with $\err(h) \leq \errstar + \epsilon$ is at least
% \begin{align*}
% \Omega(a^2 \frac{1}{\epsilon^{2-2 \alpha}} \dis(a \epsilon^\alpha) \vc \log(1/\epsilon)).
% \end{align*}

We now compare our claimed sample complexity of $\gamma^{\ast}(\epsilon)+\rho^{\ast}(\epsilon)\log(1/\delta)$ to these known sample complexity results. Define $\Delta_{min} := \min_{h \in \H \setminus \{h_*\}} \err(h) - \err(\hstar)$.

\begin{proposition}\label{prop:disag_coeff_rhostar}
%Let $\epsilon \in [\Delta_{min},1]$. 
%\jifan{Maybe remind the reader $\nu = \err(\hstar)$. Also, I think we talked about this during the meeting, but we might want to spend a sentence or two justifying this assumption, i.e., why $[\frac{\Delta_{min}}{n},1]$ instead of $[0, 1]$.}
\begin{itemize}[leftmargin=*, noitemsep, topsep=0pt]
\item Suppose that $\labn \in \{0,1\}^n$. 
% \begin{align*}
%     \rho^*(\epsilon)\leq \begin{cases}
%     c\dis(\epsilon ) \log(n) & \epsilon \in [\nu,1]\\
%     c\dis(\epsilon ) \frac{\nu^2}{\epsilon^2} \log(n)&\epsilon \in [\frac{\Delta_{min}}{n}, \nu)
%     \end{cases}
% \end{align*}
\begin{align*}
    \rho^*(\epsilon)\leq c\log( n\Delta_{min}^{-1} \vee  \epsilon^{-1})\dis(\epsilon )[1  + \frac{\nu^2}{\epsilon^2}].
\end{align*}
%If $\epsilon \in [\nu,1]$, then
%\begin{align*}
%\rho^*(\epsilon ) \leq c\dis(\epsilon ) \log(n)
%\end{align*}
%and if $\epsilon \in [\frac{\Delta_{min}}{n}, \nu)$
%\begin{align*}
%\rho^*(\epsilon) \leq c\dis(\epsilon ) \frac{\nu^2}{\epsilon^2} \log(n)
%\end{align*}
%\jifan{It's a bit hard to parse this $n\epsilon$ for me here, probably because we were using $\rho^*(\epsilon)$ before.}

\item Suppose that the Tsabokov noise condition holds for some $a \in [1, \infty)$ and $\alpha \in (0,1]$. Then,
\begin{align*}
\rho^*(\epsilon ) \leq c a^2 \frac{1}{\epsilon^{2-2 \alpha}} \dis(a \epsilon^\alpha) \log( n\Delta_{min}^{-1} \vee  \epsilon^{-1}).
\end{align*}

\end{itemize}
\vspace{-\intextsep}

\end{proposition}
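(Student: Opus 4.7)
The plan is to exhibit an explicit distribution $\lambda$ achieving the claimed bound and then take the infimum. I would define the geometric grid of radii $r_k = 2^k \epsilon$ for $k = 0, 1, \dots, K$ with $K = \lceil \log_2(n\Delta_{min}^{-1} \vee \epsilon^{-1}) \rceil$, set $D_k = \disr(B(\hstar, r_k))$, and take
\[
\lambda_i := \frac{1}{K+1}\sum_{k=0}^{K} \frac{\one\{i \in D_k\}}{|D_k|},
\]
interpreting empty levels with weight zero. This is a uniform mixture over scales of the uniform distribution within each disagreement ball.

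Next I would fix $h \in \H \setminus \{\hstar\}$, write $\Delta_h := \err(h) - \err(\hstar)$ and $d(h) := \tfrac{1}{n}\sum_i \one\{h(x_i)\neq \hstar(x_i)\}$, and establish the noise-dependent relation between $d(h)$ and $\Delta_h$. In the noiseless setting $\eta \in \{0,1\}^n$, splitting disagreements into those on which $\hstar$ is right versus wrong yields $d(h) \leq \Delta_h + 2\errstar$; under the Tsybakov condition, $d(h) \leq a \Delta_h^\alpha$ is given. I would then assign $h$ to scale $k^\ast$, the smallest index with $r_{k^\ast} \geq d(h)$ (or $k^\ast = 0$ if $d(h) \leq \epsilon$); this guarantees $h \in B(\hstar, r_{k^\ast})$ and every coordinate of disagreement between $h$ and $\hstar$ lies in $D_{k^\ast}$.

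The core estimate then combines $\lambda_i \geq \frac{1}{(K+1)|D_{k^\ast}|}$ on $D_{k^\ast}$ with the disagreement-coefficient bound $|D_{k^\ast}|/n \leq r_{k^\ast} \dis(r_{k^\ast})$ to give
\[
\sum_{i=1}^n \frac{\one\{h(x_i)\neq \hstar(x_i)\}}{\lambda_i n^2} \;\leq\; (K+1)\, r_{k^\ast}\, \dis(r_{k^\ast})\, d(h) \;\leq\; 2(K+1)\, \dis(\epsilon)\, \max(d(h), \epsilon)^2,
\]
using $r_{k^\ast} \leq 2 \max(d(h),\epsilon)$, monotonicity $\dis(r_{k^\ast}) \leq \dis(\epsilon)$, and $r_{k^\ast} \geq \epsilon$. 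Dividing by $\max(\Delta_h,\epsilon)^2$ and inserting the noise bound finishes the argument: in the noiseless case $d(h)^2 \leq 2\Delta_h^2 + 8\errstar^2$ produces the factor $1 + \errstar^2/\epsilon^2$; in the Tsybakov case $d(h)^2 \leq a^2 \Delta_h^{2\alpha}$ combined with $\max(\Delta_h,\epsilon)^{2\alpha-2} \leq \epsilon^{2\alpha-2}$ produces $a^2/\epsilon^{2-2\alpha}$, and shifting the grid to start at $r_0 = a\epsilon^\alpha$ replaces $\dis(\epsilon)$ by $\dis(a\epsilon^\alpha)$.

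I expect the main technical nuisance to be the small-disagreement case $d(h) < \epsilon$, where the doubling bound $r_{k^\ast} \leq 2 d(h)$ fails and one must instead use $r_{k^\ast} = \epsilon$ together with $r_{k^\ast}\, d(h) \leq \epsilon^2$ to recover the same estimate. Letting $K$ grow like $\log(n\Delta_{min}^{-1} \vee \epsilon^{-1})$ — rather than merely $\log(1/\epsilon)$ — ensures the grid reaches the finest disagreements (as small as $1/n$) that may occur for the $h$ realizing $\Delta_{min}$, which is what produces the stated logarithmic factor.
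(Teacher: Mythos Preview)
Your proof is correct and follows essentially the same peeling-plus-mixture strategy as the paper: both construct $\lambda$ as an average of uniform distributions over nested disagreement regions at dyadic scales, the only cosmetic difference being that you peel in the disagreement distance $d(h)$ while the paper peels in the excess risk $\Delta_h$ and packages the per-level estimate as a separate lemma (Lemma~\ref{lem:disagreement_peeling_lemma}). One small correction to your final paragraph: the grid $r_k = 2^k\epsilon$ is \emph{increasing} in $k$, so $K = \lceil\log_2(1/\epsilon)\rceil$ already guarantees $r_K \geq 1 \geq d(h)$ for every $h$ --- your larger choice $K = \lceil\log_2(n\Delta_{\min}^{-1} \vee \epsilon^{-1})\rceil$ is harmless but not needed to reach ``the finest disagreements,'' which are handled by the $k^\ast = 0$ case you already described.
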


Recall that Propositions \ref{prop:rho_gamma_rel} shows $\gamma^{\ast}(\epsilon) \leq cd\rho^{\ast}(\epsilon)\log(n/d)$. Hence from Proposition \ref{prop:disag_coeff_rhostar}, we see that our sample complexity, $\gamma^* + \rho^*\log(1/\delta)$ is always as good as the state-of-the-art sample complexities of disagreement-based learning up to logarithmic factors in $n$ and $\epsilon^{-1}$ in both settings. 

However, the converse is not true. In general the disagreement based active classification sample complexities can be substantially larger than $\rho^*$ and $\gamma^*$. 

\begin{proposition}\label{prop:disag_coeff_loose}
There exists an instance where for sufficiently small $\xi$, $\dis(\xi) \geq \Omega(n^{1/2})$ while $\rho^* = O(1)$ and $\gamma^* = \log(n)$.
\end{proposition}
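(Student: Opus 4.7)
The claim is an existence statement, so the plan is to exhibit an explicit instance $(\{x_1, \ldots, x_n\}, \H, \eta)$ and then bound each of $\dis(\xi)$, $\rho^*$, and $\gamma^*$ directly. The disagreement-coefficient lower bound is the easiest of the three: I would choose a radius $r$ of order $1/\sqrt{n}$ and arrange that the Hamming ball $B(\hstar, r)$ contains a family of hypotheses whose disagreement sets collectively span $\Omega(n)$ pool points, so that
\[
\dis(\xi) \;\geq\; \frac{|\disr(B(\hstar, r))|/n}{r} \;=\; \Omega(\sqrt{n})
\]
for all $\xi \leq r$, giving the first bound.

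The $\rho^* = O(1)$ upper bound is the most delicate part. My plan is to choose both $\H$ and $\eta$ so that every $h \neq \hstar$ has an excess risk $\err(h) - \err(\hstar)$ of the same order as the weighted disagreement mass $\sum_{i \in D(h)} 1/(\lambda_i n^2)$ under some sampling distribution $\lambda$. Concretely, I would design the hypotheses close to $\hstar$ in Hamming distance so that their disagreement sets heavily overlap in a small ``informative'' subset of the pool; then placing $\lambda$ with constant mass on this informative subset plus a background component lets each hypothesis's excess risk dominate its weighted disagreement cost, bounding the ratio in the definition of $\rho^*$ uniformly over $h$.

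Given $\rho^* = O(1)$, the upper bound $\gamma^* \leq O(\log n)$ is immediate from Proposition~\ref{prop:rho_gamma_rel} since $|\Hx|$ is polynomial in $n$. For the matching lower bound $\gamma^* \geq \Omega(\log n)$ I would directly evaluate the Gaussian width of the normalized disagreement vectors of $\H$: a Sudakov minoration applied to a packing of the normalized disagreement vectors $\{(\one\{h \neq \hstar\}(x_i)/(n\lambda_i^{1/2}))_i : h \in \H\}$ of size $n^{\Omega(1)}$ will yield the $\Omega(\log n)$ bound.

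The main technical obstacle is the construction in the second step. Large $\dis$ requires many hypotheses clustered close to $\hstar$ in Hamming distance, but those same hypotheses must be collectively cheap to estimate under a single sampling distribution for $\rho^*$ to be $O(1)$. Reconciling these two requirements appears to force the close-Hamming hypotheses to share common highly-informative structure (e.g., a shared ``pivot'' point carrying large $1-2\eta_i$), with $\eta$ tuned so that most of each hypothesis's excess risk is attributable to this shared structure. Verifying all these properties together on one concrete instance is the heart of the argument, and one must additionally check that no pathological hypothesis with small excess risk but sparsely $\lambda$-covered disagreement sneaks into $\H$ and spoils the $\rho^*$ bound.
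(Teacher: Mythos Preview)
Your plan is essentially the paper's, but you are overcomplicating the construction. The paper's instance is simply $n = m + m^2$, $\hstar = \emptyset$, $h_i = [m] \cup \{m+i\}$ for $i = 1, \ldots, m^2$, and $\eta \equiv 0$ (fully noiseless). The shared ``informative subset'' you describe is $[m]$; the witnessing distribution puts mass $1/(2m)$ on each point of $[m]$ and $1/(2m^2)$ on each of the remaining $m^2$ points, and a two-line calculation gives $\rho^* \leq 4$. No tuning of $\eta$ is involved: in the noiseless case excess risk equals normalized Hamming distance, so your concern about ``pivot points carrying large $1-2\eta_i$'' and carefully attributing excess risk to shared structure never arises. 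For the disagreement coefficient, each $h_i$ is at Hamming distance $m+1$ from $\hstar$, so with $r = (m+1)/n \approx 1/\sqrt{n}$ the ball $B(\hstar,r)$ already contains every $h_i$ and their disagreement sets cover all of $[n]$, giving $\dis(\xi) \geq n/(m+1) \geq \tfrac{1}{2\sqrt{2}}\sqrt{n}$.

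Two further simplifications relative to your outline. First, the paper does not prove a lower bound on $\gamma^*$; only the upper bound $\gamma^* \leq c\log(|\Hx|)\rho^* = O(\log n)$ via Proposition~\ref{prop:rho_gamma_rel} is used, which suffices to exhibit the separation from $\dis(\xi)$. Second, your Sudakov plan for the lower side would in any case not be enough as stated: $\gamma^*$ is an \emph{infimum} over $\lambda$, so a lower bound requires showing the Gaussian width is large for \emph{every} $\lambda$, which Sudakov on a single packing does not give without an additional argument.
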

We emphasize that this is not just a feature of the analysis; any algorithm that selects queries uniformly at random in the region of disagreement will perform poorly on the instance in the proposition. 
This gap demonstrates a provable improvement over prior art.

\section{Fixed Confidence Algorithm}\label{sec:upper_bound}

\begin{algorithm}[t]\small

\begin{algorithmic}
\STATE {\bfseries Input:} Confidence level $\delta \in (0,1)$.

\STATE$\H_1 \longleftarrow \H$, $k \longleftarrow 1$, $\delta_k \longleftarrow \delta/2k^2$.

\WHILE{$|\H_k| > 1$}

    \STATE Let $\lambda_k$ and $\tau_k$ be the solution and value of the following optimization problem
        \begin{align}
            \hspace{-.05cm} \inf_{\lambda \in \simp_n} & \E_{\rad \sim N(0,I)}\Big[ \max_{h \in \H_k} \sum_{i \in [n]} h(x_i) \frac{\rad_i}{n \lambda_i^{1/2}}\Big]^2  \label{eq:design_chaining} \\ +&2\log(\frac{1}{\delta_k}) \max_{h, h^\prime \in \H_k} \max_{h, h^\prime \in \G}  \sum_{i=1}^n \frac{1}{\lambda_i n^2} \one\{h(x_i) \neq h'(x_i)\} \nonumber
        \end{align}

    \STATE Set $N_k \longleftarrow c\tau_k2^{2(k+1)}$ where $c$ is a universal constant.

    \STATE Query $I_1, \ldots, I_{N_k} \sim \lambda_k$ and receive rewards $y_1, \ldots, y_{N_k}$.

    \STATE Let $\hat{\eta}_k:=\hat\eta(\H_k,\delta_k)$ be the estimator defined in Theorem \ref{thm:exists_est_gamma} for $\H_k$ with failure probability $\delta_k$ using the samples $\{(x_{I_s}, y_{s})\}_{s=1}^{N_k}$
    
    \STATE $\H_{k+1} \longleftarrow \H_k \setminus \{h \in \H_k : \exists h^\prime \text{ such that } \wt{\err}(h^\prime, \hat{\eta}_k) - \wt{\err}(h, \hat{\eta}_k) + \frac{1}{2^{k+1}} \leq 0 \}$.

    \STATE $k \longleftarrow k+1$
\ENDWHILE

\STATE {\bfseries Return:} $\H_{k} = \{\widehat{h}\}$.

\caption{ACED (Active Classification using Experimental Design).}
\label{alg:action_chaining}

\end{algorithmic}
\end{algorithm} 

Algorithm \ref{alg:action_chaining} is an elimination-style algorithm, in the style of $A^2$ \cite{balcan2009agnostic,dasgupta2007general,huang2015efficient,jain2019new}, but optimizes the querying distribution similarly to algorithms from the pure exploration linear bandits literature \cite{fiez2019sequential,katz2020empirical}. It chooses a distribution $\lambda_k$ over the examples in \eqref{eq:design_chaining} that minimizes the confidence bounds from Theorem \ref{thm:exists_est_gamma} and queries enough random examples from $\lambda_k$ to ensure that the estimates of the difference in error rates, $\err(h) - \err(\hstar)$, improve at least by a factor of $2$ for all remaining hypotheses $h \in \H_k$. Using these improved estimates of the gaps, it then eliminates all hypotheses that can be shown to be suboptimal using the confidence bound in Theorem \ref{thm:exists_est_gamma}. 

Given an estimator $\hat{\eta}$ for $\eta$, denote the induced estimate for the error as 
\begin{align}
\wt{\err}(h, \hat{\eta}) &= \frac{1}{n}\sum_{i \in [n] } {\hat{\labn}}_i(1- h(x_i)) + (1-{\hat{\labn}}_i)h(x_i).\nonumber
%&= \frac{1}{n}\Big(\sum_{i \in [n] : h(x_i) = 0 } {\labn}_i + \sum_{i \in [n] : h(x_i) = 1 } (1-{\labn}_i)\Big).\nonumber
\end{align}

%Next, we present an algorithm attaining a state-of-the-art sample complexity for active classification, given in Algorithm \ref{alg:action_chaining}. It relies on the following result.

%\jifan{Can we add a paragraph on the intuition of this algorithm? Something like we estimate the best hypothesis based on the samples while optimize for the allocation based on our estimated best hypothesis so far.}

\begin{theorem}
\label{thm:exists_est_gamma}
Let $\G \subset \H$. There exists an estimator $\hat{\eta}(\G,\delta)$ for $\eta$
%$\wt{\err} : \H \longrightarrow [0,1]$ 
constructed from $t$ samples drawn i.i.d. from $\lambda$ such that with probability at least $1-\delta$,
\begin{align*}
\hspace{.1in}&\hspace{-.1in}\sup_{h,h^\prime \in \G} |[\wt{\err}(h, \hat{\eta}) - \wt{\err}(h^\prime,\hat{\eta})] - [\err(h) - \err(h^\prime)]  | \\
\lesssim& \sqrt{\frac{\log(2/\delta) \max_{h, h^\prime \in \G}  \sum_{i=1}^n \frac{1}{\lambda_i n^2} \one\{h(x_i) \neq h'(x_i)\}}{t}}  \\
&+  \sqrt{\frac{\E[\sup_{h \in \G} \sum_{i \in [n]} h(x_i) \frac{\rad_i}{n \lambda_i^{1/2}} ]^2}{t}} .
\end{align*}
\end{theorem}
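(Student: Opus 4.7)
The plan is to take $\hat\eta$ to be the inverse-propensity-score (IPS) estimator $\hat\eta_i:=\frac{1}{t\lambda_i}\sum_{s=1}^{t}\one\{I_s=i\}y_s$ and to analyze the resulting centered empirical process indexed by $\G\times\G$ via Talagrand's inequality. A short algebraic computation from $\wt{\err}(h,\hat\eta)=\frac{1}{n}\sum_i\hat\eta_i(1-h(x_i))+(1-\hat\eta_i)h(x_i)$ shows that
\begin{align*}
\wt{\err}(h,\hat\eta)-\wt{\err}(h',\hat\eta)-(\err(h)-\err(h'))=\frac{1}{t}\sum_{s=1}^{t}f_{h,h'}(I_s,y_s),
\end{align*}
where $f_{h,h'}(I,y):=\frac{2(\eta_I-y)(h(x_I)-h'(x_I))}{n\lambda_I}$ is mean zero, with variance at most $\frac{4}{n^2}\sum_i \one\{h(x_i)\neq h'(x_i)\}/\lambda_i$ (using $\eta_i(1-\eta_i)\le 1$) and sup-norm $O((n\min_i\lambda_i)^{-1})$.

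Next I would apply Bousquet's form of Talagrand's inequality to $\sup_{h,h'\in\G}|\tfrac{1}{t}\sum_s f_{h,h'}(I_s,y_s)|$, which, with probability at least $1-\delta$, yields an upper bound of the form
\begin{align*}
2\,\E\!\left[\sup_{h,h'\in\G}\left|\frac{1}{t}\sum_s f_{h,h'}(I_s,y_s)\right|\right]+\sqrt{\frac{2\sigma_\G^2\log(2/\delta)}{t}}+\frac{C\log(2/\delta)}{t\,n\min_i\lambda_i},
\end{align*}
where $\sigma_\G^2:=\sup_{h,h'}\V[f_{h,h'}]$. Plugging in the variance bound produces the first term claimed in the theorem, while the $1/t$ boundedness term is lower order and can be absorbed into the implicit constants.

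The main task is to bound the expected supremum by $\frac{1}{\sqrt t}\,\E_\zeta[\sup_{h\in\G}\sum_i h(x_i)\zeta_i/(n\lambda_i^{1/2})]$. Standard symmetrization replaces $f_{h,h'}(I_s,y_s)$ by $\sigma_s f_{h,h'}(I_s,y_s)$ for Rademacher $\sigma_s$; since $|y_s-\eta_{I_s}|\le 1$, the variable $\sigma_s(y_s-\eta_{I_s})$ is $1$-sub-Gaussian, so Gaussian domination lets me replace it by i.i.d.\ $N(0,1)$ variables $g_s$ at the cost of a universal constant. Conditioning on the indices $I=(I_1,\ldots,I_t)$ and writing $N_i:=|\{s:I_s=i\}|$,
\begin{align*}
\sum_s g_s\frac{h(x_{I_s})-h'(x_{I_s})}{n\lambda_{I_s}}\;\stackrel{d}{=}\;\sum_{i=1}^n\frac{(h(x_i)-h'(x_i))\sqrt{N_i}\,\zeta_i}{n\lambda_i}.
\end{align*}
A Sudakov--Fernique comparison against $\sum_i(h(x_i)-h'(x_i))\sqrt{t}\,\zeta_i/(n\lambda_i^{1/2})$, whose expected squared increments dominate since $\E[N_i]=t\lambda_i$, combined with $\sup_{h,h'}|\cdot|\le 2\sup_h\,\cdot$ by symmetry of $\zeta$, yields the Gaussian-width bound after dividing by $t$.

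The delicate step is this last comparison: the pointwise inequality $\sqrt{N_i}/\lambda_i\le\sqrt{t}/\lambda_i^{1/2}$ holds only in expectation, not almost surely, so Slepian/Sudakov--Fernique does not apply directly at the conditional level. I would handle this by conditioning on the high-probability event $\{N_i\le 2t\lambda_i+O(\log(n/\delta))\}$ (from a Chernoff bound plus union bound over $i\in[n]$), on which the comparison applies, and absorbing the complement into the lower-order boundedness term already produced by Bousquet. An alternative, possibly cleaner, route is to apply Talagrand's generic chaining directly to the conditional Gaussian process and integrate over $I$ using Jensen at each chaining scale, which bypasses the almost-sure issue entirely.
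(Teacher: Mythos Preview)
Your approach has a genuine gap: the boundedness term from Bousquet's inequality cannot be ``absorbed into the implicit constants.'' That term is
\[
\frac{C\log(2/\delta)}{t}\,\max_{h,h'\in\G}\ \max_{i:\,h(x_i)\neq h'(x_i)}\frac{1}{n\lambda_i},
\]
and the constants in the theorem are \emph{universal}, independent of $\lambda$. Nothing on the right-hand side of the stated bound controls $\max_i 1/(n\lambda_i)$; in fact the paper remarks explicitly (just after the theorem) that the whole point of this estimator is that, unlike Talagrand/Bousquet applied to IPS, ``this confidence interval does not have a term depending on the inverse of the worst case importance weight.'' The paper later shows (Proposition~\ref{prop:bernstein_loose} and the discussion of $\psi^*(\epsilon)$) instances where this extra term is polynomially larger than $\gamma^*+\rho^*$, so it is not lower order.

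Your fix for the Sudakov--Fernique step fails for the same underlying reason. On the Chernoff event you still only get $N_i\le 2t\lambda_i+O(\log(n/\delta))$, so $\sqrt{N_i}/\lambda_i\lesssim \sqrt{t}/\lambda_i^{1/2}$ requires $t\lambda_i\gtrsim \log(n/\delta)$; when $\lambda_i$ is small (say $t\lambda_i\ll 1$) you may have $N_i=1$ and $\sqrt{N_i}/\lambda_i=1/\lambda_i\gg\sqrt{t/\lambda_i}$. The comparison simply breaks for under-sampled coordinates, and again the damage is governed by $\max_i 1/\lambda_i$.

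The paper's construction is designed precisely to sidestep this. It does not use a single IPS estimator at all. Instead, for every chaining level $k$ and every pair $h,h'$ in the level-$k$ net, it forms a \emph{ridge}-IPS estimator $\widehat\mu_{h,h',k}=(A(t\lambda)+s_{h,h',k}I)^{-1}X^\top y$ with regularization $s_{h,h',k}\propto (u+2^{k/2})/\|h-h'\|_{A(t\lambda)^{-1}}$ tuned so that Proposition~\ref{prop:ips_bound_classification} gives a purely sub-Gaussian deviation (the $1/s$ range term exactly cancels). A union bound over the nets then holds because $|\T_k|\le 2^{2^k+1}$, and the final estimator $\widehat\mu$ is any point in the polyhedron of vectors consistent with \emph{all} these per-pair, per-level estimates; generic chaining plus the majorizing-measure theorem converts the chaining sum into the Gaussian width. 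The ridge-at-every-scale trick is the missing idea in your proposal.
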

For now, we treat the estimator in Theorem \ref{thm:exists_est_gamma} as a black-box and defer its discussion until Section \ref{sec:estimators}. Note that unlike the Talagrand/Bousquet inequality presented before \eqref{eq:bernstein_bound_motive}, \emph{this confidence interval does not have a term depending on the inverse of the worst case importance weight}.

Algorithm \ref{alg:action_chaining} attains the following sample complexity.

\begin{theorem}\label{thm:chaining_ub}
Let $\delta \in (0,1)$ and $\epsilon > 0$. With probability at least $1-\delta$ Algorithm \ref{alg:action_chaining} returns $\widehat{h} \in \H$ after $\tau$ samples where $\err(\widehat{h}) \leq \err( h^*) + \epsilon$ and 
\begin{align*}
\tau \lesssim \log(1/\epsilon) [\log(1/\delta) \rho^*(\epsilon) + \gamma^*(\epsilon)].
\end{align*}
\end{theorem}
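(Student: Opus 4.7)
The plan is to proceed by induction on the rounds $k$, tracking two invariants on a high-probability event, and then to bound the per-round design value $\tau_k$ by plugging a mixture of the $\gamma^*$- and $\rho^*$-optimal distributions into the algorithm's objective. I would define the good event $E = \bigcap_{k \geq 1} E_k$, where $E_k$ denotes the conclusion of Theorem~\ref{thm:exists_est_gamma} applied at round $k$ with $\G = \H_k$, sample size $t = N_k$, sampling distribution $\lambda_k$, and failure probability $\delta_k = \delta/(2k^2)$; a union bound together with $\sum_{k \geq 1} \delta_k \leq \delta$ gives $\P(E) \geq 1-\delta$. The choice $N_k = c\tau_k 2^{2(k+1)}$ is calibrated so that on $E_k$, for a sufficiently large universal $c$,
\begin{align*}
\max_{h,h' \in \H_k} \Big|[\wt{\err}(h,\hat\eta_k) - \wt{\err}(h',\hat\eta_k)] - [\err(h) - \err(h')]\Big| \leq 2^{-(k+2)},
\end{align*}
since Theorem~\ref{thm:exists_est_gamma} bounds the left-hand side by $O(\sqrt{\tau_k/N_k})$.

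Next I would establish by induction on $k$, on the event $E$, that (i) $h^* \in \H_k$ for all $k$, and (ii) every $h \in \H_k$ satisfies $\err(h) - \err(h^*) \leq 4 \cdot 2^{-k}$. For (i), if $h^*$ were removed at round $k$, some $h'$ would satisfy $\wt{\err}(h',\hat\eta_k) - \wt{\err}(h^*,\hat\eta_k) \leq -2^{-(k+1)}$, and the concentration bound above would force $\err(h') < \err(h^*)$, contradicting optimality. For (ii), non-elimination of $h \in \H_{k+1}$ applied with the witness $h' = h^* \in \H_k$ gives $\wt{\err}(h^*,\hat\eta_k) - \wt{\err}(h,\hat\eta_k) > -2^{-(k+1)}$, which concentration converts into the claimed gap bound up to constants.

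The main technical step, and the one I expect to be the principal obstacle, is showing $\tau_k \lesssim 2^{-2k}[\gamma^*(\epsilon) + \log(1/\delta_k)\rho^*(\epsilon)]$ for every $k$ with $2^{-k} \geq \epsilon$. Let $\lambda_\gamma, \lambda_\rho \in \simp_n$ attain the infima in the definitions of $\gamma^*(\epsilon)$ and $\rho^*(\epsilon)$, and plug $\lambda = \tfrac{1}{2}(\lambda_\gamma + \lambda_\rho)$ into the algorithm's design objective; since $\lambda_i \geq \tfrac{1}{2}\lambda_{\gamma,i}$ and $\lambda_i \geq \tfrac{1}{2}\lambda_{\rho,i}$, this costs only constant factors in each term. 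For the Gaussian term, symmetry of $\rad \sim N(0,I)$ yields the identity
\begin{align*}
\E\Big[\sup_{h \in \H_k} \sum_i h(x_i)\tfrac{\rad_i}{n\lambda_i^{1/2}}\Big] = \E\Big[\sup_{h \in \H_k} \sum_i (h^*(x_i)-h(x_i))\tfrac{\rad_i}{n\lambda_i^{1/2}}\Big]
\end{align*}
because the additive $h^*$ part has mean zero. Factoring $\max(\err(h)-\err(h^*),\epsilon) \leq 4 \cdot 2^{-k}$ out of the supremum by invariant (ii) bounds the squared expectation by $O(2^{-2k})\gamma^*(\epsilon)$. For the variance term, the split $\one\{h \neq h'\} \leq \one\{h \neq h^*\} + \one\{h' \neq h^*\}$ combined with the same factor-out trick yields $O(2^{-2k})\rho^*(\epsilon)$. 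The key subtlety here is that the algorithm's design is phrased in terms of $h(x_i)$ while $\gamma^*$ is phrased in terms of $h^*(x_i) - h(x_i)$, and the Gaussian symmetry identity above is what bridges these formulations.

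Finally, the bound on $\tau_k$ gives $N_k = c\tau_k 2^{2(k+1)} \lesssim \gamma^*(\epsilon) + \log(1/\delta_k)\rho^*(\epsilon)$. Setting $k^* = \lceil \log_2(8/\epsilon) \rceil$, invariant (ii) guarantees $\H_{k^*+1}$ contains only $\epsilon$-good classifiers, so the returned $\widehat h$ is $\epsilon$-good. Summing $\tau = \sum_{k=1}^{k^*+1} N_k$ and absorbing $\log(1/\delta_k) = \log(1/\delta) + 2\log k$ into the leading $\log(1/\delta)$ term (up to an additive $\log \log(1/\epsilon)$) yields $\tau \lesssim \log(1/\epsilon)[\log(1/\delta)\rho^*(\epsilon) + \gamma^*(\epsilon)]$, as claimed.
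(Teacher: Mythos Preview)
Your proposal is correct and follows the standard elimination-algorithm analysis; in fact the paper itself omits the proof entirely, restating the result in the combinatorial-bandit language (Theorem~\ref{thm:chaining_ub_combi}) and noting that the argument is ``essentially identical to the proof of Theorem~4 in \cite{katz2020empirical}.'' Your induction on the rounds (maintaining $h^*\in\H_k$ and a $2^{-k}$ gap bound), the mixture $\lambda = \tfrac12(\lambda_\gamma+\lambda_\rho)$ plugged into the design objective, and the Gaussian symmetry identity to recenter at $h^*$ are exactly the ingredients one would expect that cited proof to use. One minor wrinkle worth noting: the algorithm as written terminates only when $|\H_k|=1$, so your conclusion at round $k^*$ should be phrased as ``every remaining classifier is $\epsilon$-good'' rather than ``the algorithm has returned''; the absorption of the $\log\log(1/\epsilon)$ term into $\log(1/\delta)$ is also slightly informal but standard at the level of $\lesssim$.
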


\section{Fixed Budget Algorithm}
\begin{algorithm}[t]\small

\begin{algorithmic}

\STATE {\bfseries Input:} Budget $T$, tolerance $\epsilon > 0$ 
%$\estfunc: (\X \times [0,1])^* \times \simp_n \mapsto (\H \mapsto [0,1])$ estimator.

\STATE $N \longleftarrow \floor{T/\log_2(\epsilon^{-1})}$, and $\hat\eta_{0} = 0$ 
% let $\wt{\err}^{(1)}: \H \mapsto [0,1]$ such that $\wt{\err}^{(0)}(h) = 1/2, \, \,  \forall h \in \H$.

\FOR{$k=1,2,\ldots, \floor{\log_2(\epsilon^{-1})}$}

    %\STATE $\tilde{h}_{k} \longleftarrow \argmin_{h \in \H} \wt{\err}^{(k)}(h)$.
    \STATE $\tilde{h}_{k} \longleftarrow \argmin_{h \in \H} \wt{\err}(h, \hat{\eta}_{k-1})$.

    \STATE Let $\lambda_k $ be the solution of the following optimization problem
        \begin{align}
        \hspace{-.5cm}\inf_{\lambda \in \simp_n} \!\E_{\rad \sim N(0,I)}\Big[ \max_{h \in \H} \frac{ \sum_{i \in [n]} (\tilde{h}_k(x_i)- h(x_i)) \frac{\rad_i}{n\lambda_i^{1/2}} }{2^{-k+1} +  \wt{\err}(h, \hat{\eta}_{k-1}) - \wt{\err}( \tilde{h}_k, \hat{\eta}_{k-1})  }\Big] \label{eq:action_comp_2}
        \end{align}
    \vspace{-\intextsep} 
    
    \STATE Sample $\{x_{I_1}, \ldots, x_{I_N} \}  \sim \lambda_k$.

    \STATE Query $x_{I_1}, \ldots, x_{I_N} $ and observe $y_1, \ldots, y_{N}$.
    %Let $\widehat{\param}_{k+1} \longleftarrow  A(N \lambda) ^{-1}  X^\top y$ \;

    %\STATE Let $\wt{\err}^{(k)} : \H \mapsto \R_{+} $ be the estimate returned by $F(((x_{I_1},y_1),\ldots, (x_{I_N}, y_N)), \lambda)$.
    \STATE Compute an estimate $\hat\eta_k$.
    %\STATE Let $\hat{\eta}_k:=\hat\eta(\H,\delta_k)$ be the estimator defined in Theorem \ref{thm:exists_est_gamma} for $\H$ and failure probability $\delta_k = \exp(-\Theta( N / \gamma_k ) )$ with $\gamma_k$ equal to the value of \eqref{eq:action_comp_2}, using the samples $\{(x_{I_s}, y_{s})\}_{s=1}^{N}$.
\ENDFOR

\STATE {\bfseries Return:} $ \argmin_{h \in \H} \wt{\err}(h, \hat{\eta}_{k})$

\end{algorithmic}

\caption{Fixed Budget ACED.}
\label{alg:fixed_budget_comp}
\end{algorithm}

In many applications, the agent is given a budget of $T$ queries and a performance target $\epsilon > 0$, and the goal is to maximize the probability of outputing a classifier $\widehat{h} \in \H$ such that $\err(\widehat{h}) \leq \err(\hstar) + \epsilon$. We design a new algorithm for this setting that can be made computationally efficient given access to a weighted classification oracle (defined shortly).

Algorithm \ref{alg:fixed_budget_comp} splits the budget into $\floor{\log( \epsilon^{-1})}$ phases. In each phase, the algorithm computes the design that optimizes \eqref{eq:action_comp_2}, the objective of which approximates $\E\Big[\max_{h \in \H \setminus \{\hstar\}} \frac{\sum_{i\in [n]} \frac{\rad_i}{n\lambda^{1/2}}(\hstar(x_i) - h(x_i)) }{ \max(\err(h) - \err(\hstar) ,2^{-k+1}  )}\Big]^2$. 
The algorithm can use any estimator $\hat{\eta}_k$ at each round $k$. 
The next theorem uses the estimator of Theorem~\ref{thm:exists_est_gamma}.
% The algorithm can use any estimator, $\estfunc: (\X \times [0,1])^* \times \simp_n \mapsto (\H \mapsto [0,1]$, that is, a function that takes in a set of observations of length $*$ $((x_{I_1},y_1),\ldots, (x_{I_*}, y_*))$ where $*$ is an arbitrary natural number and a query distribution $\lambda \in \simp_n$ and produces an estimate $\widetilde{\err} : \H \mapsto [0,1]$ of the error for each classifier. %The algorithm uses the estimator from Theorem \ref{thm:exists_est_gamma}.

% \begin{theorem}\label{thm:fixed_budget}
% Let $T \in \N$ and $\epsilon > 0$. Let $\hat{\eta}_k:=\hat\eta(\H,\delta_k)$ be the estimator defined in Theorem \ref{thm:exists_est_gamma} for $\H$ and failure probability $\delta_k = \exp(-\Theta( N / \gamma_k ) )$ with $\gamma_k$ equal to the value of \eqref{eq:action_comp_2}, using the samples $\{(x_{I_s}, y_{s})\}_{s=1}^{N}$ in round $k$ of Algorithm~\ref{alg:fixed_budget_comp}.

% Let $\widehat{h}$ denote the $h \in \H$ returned by Algorithm \ref{alg:fixed_budget_comp}. Then%There exists an estimator $\estfunc: (\X \times [0,1])^* \times \simp_n \mapsto (\H \mapsto [0,1])$ such that $T \geq \log( \epsilon^{-1}) [\gamma^*(\epsilon) + \rho^*(\epsilon)]$, then 
% \begin{align*}
% \P(\err(h) &\geq \err(\hstar) + \epsilon)\\ 
% &\leq \log(n \epsilon^{-1})  \exp\left(-\frac{T}{\log( \epsilon^{-1}) [\gamma^*(\epsilon) + \rho^*(\epsilon)]}\right)
% \end{align*}
% \end{theorem}
\begin{theorem}\label{thm:fixed_budget}
Let $T \in \N$ and $\epsilon > 0$. 
Let $\widehat{h}$ denote the $h \in \H$ returned by Algorithm \ref{alg:fixed_budget_comp}. There exists an estimator $\hat{\eta}_k$ using the samples $\{(x_{I_s}, y_{s})\}_{s=1}^{N}$ in round $k$ of Algorithm~\ref{alg:fixed_budget_comp} such that for an absolute constant $c >0$%There exists an estimator $\estfunc: (\X \times [0,1])^* \times \simp_n \mapsto (\H \mapsto [0,1])$ such that $T \geq \log( \epsilon^{-1}) [\gamma^*(\epsilon) + \rho^*(\epsilon)]$, then 
\begin{align*}
\P(\err(\widehat{h}) &\geq \err(\hstar) + \epsilon)\\ 
&\leq \log(n \epsilon^{-1})  \exp\Big(-\frac{c T}{\log( \epsilon^{-1}) [\gamma^*(\epsilon) + \rho^*(\epsilon)]}\Big).
\end{align*}
\end{theorem}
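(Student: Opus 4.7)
The plan is to adapt the proof of Theorem~\ref{thm:chaining_ub} by inverting the sample--confidence relationship: here the number of samples per round, $N = \lfloor T/\log_2(\epsilon^{-1})\rfloor$, is fixed, and we derive the per-round failure probability. Let $K = \lfloor \log_2(\epsilon^{-1}) \rfloor$, and for each $k \in [K]$ define the inductive good event $E_k$ that the round-$k$ estimator $\hat{\eta}_k$ (from Theorem~\ref{thm:exists_est_gamma}) satisfies
\begin{align*}
\sup_{h,h' \in \H} \big| &[\wt{\err}(h, \hat{\eta}_k) - \wt{\err}(h', \hat{\eta}_k)] - [\err(h) - \err(h')] \big| \\
&\leq \tfrac{1}{2}\max\!\big(2^{-(k+1)},\ \err(h)-\err(\hstar),\ \err(h')-\err(\hstar)\big).
\end{align*}

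On $E_1 \cap \cdots \cap E_{k-1}$, two consequences drive the argument: (a) the empirical minimizer $\tilde{h}_k = \argmin_h \wt{\err}(h,\hat{\eta}_{k-1})$ satisfies $\err(\tilde{h}_k)-\err(\hstar) = O(2^{-k+1})$, and (b) the denominator $2^{-k+1} + \wt{\err}(h, \hat{\eta}_{k-1}) - \wt{\err}(\tilde{h}_k, \hat{\eta}_{k-1})$ of \eqref{eq:action_comp_2} is within a constant factor of $\max(2^{-k+1}, \err(h) - \err(\hstar))$ uniformly in $h$. Combining these, the optimized value of \eqref{eq:action_comp_2} is within a constant of $\sqrt{\gamma^*(2^{-k+1})}$, and the associated variance complexity $\max_{h,h'} \sum_i \frac{1}{\lambda_i n^2}\one\{h(x_i) \neq h'(x_i)\}$, restricted via a dyadic peeling over gap scales to plausible hypotheses, is bounded by $(2^{-k+1})^2 \rho^*(2^{-k+1})$.

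Substituting into Theorem~\ref{thm:exists_est_gamma} with $t=N$ and failure probability $\delta_k$ shows that $E_k$ holds conditionally as long as $N \gtrsim \gamma^*(2^{-k+1}) + \rho^*(2^{-k+1})\log(1/\delta_k)$. Since $\gamma^*$ and $\rho^*$ are nonincreasing in their argument and $2^{-k+1}\geq \epsilon$ for $k\leq K$, this inverts (in the non-trivial regime $N \geq C\gamma^*(\epsilon)$) to
\begin{align*}
\P(E_k^c \mid E_1 \cap \dots \cap E_{k-1}) \leq \exp\!\left(-\frac{cN}{\gamma^*(\epsilon) + \rho^*(\epsilon)}\right).
\end{align*}
A union bound over the $K$ rounds and over the $O(\log n)$ dyadic shells used in the peeling step yields
\begin{align*}
\P(\err(\widehat{h})\geq \err(\hstar)+\epsilon) \lesssim \log(n\epsilon^{-1}) \exp\!\left(-\frac{cT}{\log(\epsilon^{-1})[\gamma^*(\epsilon) + \rho^*(\epsilon)]}\right),
\end{align*}
since on $\bigcap_k E_k$ the final round's accuracy $2^{-K+1}\leq 2\epsilon$ guarantees that $\widehat{h}=\argmin_h \wt{\err}(h,\hat\eta_K)$ has true excess risk at most $\epsilon$.

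The central technical obstacle is the design-quality step (a)--(b): showing that the design $\lambda_k$ computed against the noisy, adaptively-obtained surrogates $\hat{\eta}_{k-1}$ and $\tilde{h}_k$ is comparable in both $\gamma^*$- and $\rho^*$-complexity to the ideal (inaccessible) design built from the true $\eta$ and $\hstar$ at scale $2^{-k+1}$. This requires interleaving the induction on gap-estimation accuracy with the Talagrand/Bousquet chaining underpinning Theorem~\ref{thm:exists_est_gamma}, and is essentially the same technical core driving Theorem~\ref{thm:chaining_ub}; the fixed-budget version merely reinterprets the per-round calculation to bound $\delta_k$ given $N$ rather than $N$ given $\delta_k$.
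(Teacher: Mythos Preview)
Your high-level skeleton — induction on gap-dependent good events $E_k$, design-quality claims (a)--(b), inverting the confidence--sample relationship, union bound over rounds and dyadic shells — matches the paper's proof. But you have misidentified where the real work lies, and the step you gloss over is exactly the one the paper spends most of its effort on.

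The event $E_k$ you write down is a \emph{gap-dependent} accuracy guarantee: for each $h$, the estimation error scales with $\max(2^{-(k+1)},\err(h)-\err(\hstar))$. A single invocation of Theorem~\ref{thm:exists_est_gamma} on $\G=\H$ gives only a \emph{uniform} bound over $\H$; it cannot produce this per-hypothesis scaling. In the fixed-confidence Theorem~\ref{thm:chaining_ub} this is not an issue because the algorithm maintains an explicit version space $\H_k$, so applying Theorem~\ref{thm:exists_est_gamma} to $\H_k$ is enough. Algorithm~\ref{alg:fixed_budget_comp} has no elimination, so this shortcut is unavailable. The paper closes this gap by (i) forming \emph{empirical} shells $\widehat{S}_l=\{h:\wt{\err}(h,\hat\eta_{k-1})-\wt{\err}(\tilde h_k,\hat\eta_{k-1})\leq 2^{-l+1}\}$, (ii) proving the sandwich $S_{l+1}\subset\widehat S_l\subset S_{l-1}$ from the inductive hypothesis, (iii) applying Theorem~\ref{thm:exists_est_gamma} separately to each $\widehat S_l$, and (iv) combining the shell-wise estimates into a single $\hat\eta_k$ via a polyhedral feasibility program. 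That polyhedron construction is the new idea specific to the fixed-budget proof; your proposal mentions ``dyadic peeling'' only as a variance-bounding device, not as the mechanism that defines the estimator.

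Conversely, the part you flag as ``the central technical obstacle'' — comparing the empirical design $\lambda_k$ to the oracle design at scale $2^{-k+1}$ — is comparatively routine once $E_{k-1}$ holds: it follows from Lemma~\ref{lem:comp_effic_stat_claim} (imported from \cite{katz2020empirical}) and a Sudakov--Fernique comparison, essentially as in Step~3 of the paper's proof. So your proposal is right in outline but inverts the difficulty: the estimator construction is the crux, not the design comparison, and it is \emph{not} ``essentially the same technical core'' as Theorem~\ref{thm:chaining_ub}.
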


If $T \geq c\log(\log( \epsilon^{-1})) \log(1/\delta) \log( \epsilon^{-1}) [\gamma^*(\epsilon) + \rho^*(\epsilon)]$, then with probability at least $1-\delta$, Algorithm \ref{alg:fixed_budget_comp} outputs $\widehat{h} \in \H$ such that $\err(\widehat{h}) \leq \err(\hstar) + \epsilon$. The proof of Theorem \ref{thm:fixed_budget} leverages the estimator defined in Theorem \ref{thm:exists_est_gamma} for $\H$ and failure probability $\delta_k = \exp(-\Theta( N / \gamma_k ) )$ with $\gamma_k$ equal to the value of \eqref{eq:action_comp_2}. %The proof essentially proceeds by arguing that if $N = \log(1/\delta) [\gamma^*(\epsilon) + \rho^*(\epsilon)]$ for some $\delta \in (0,1)$, then we can essentially apply the confidence bound from Theorem \ref{thm:exists_est_gamma} with $\delta$ as the failure probability. 
%Then, assuming the confidence bound holds, then the algorithm has the desired behavior in this stage. 
%In this way, this argument essentially hallucinates that Algorithm \ref{alg:fixed_budget_comp} is a fixed confidence algorithm with a failure probability $\delta >0$ that is optimized for the budget $T$. We note that this argument is generic and can be applied to algorithms such as RAGE \cite{fiez2019sequential}, Peace, and computationally efficient Peace \cite{katz2020empirical} to convert these fixed confidence algorithms into fixed budget algorithms with guarantees.

\begin{remark}
Given $\{(I_t,y_t)\}_{t=1}^T$ where $I_t \sim \lambda$ define
\begin{align}\label{eqn:importance_estiamtor}
    \hat\eta_\gamma^{(\text{Importance})} = \frac{1}{T} \sum_{t=1}^T \frac{y_t}{\lambda_{I_t}+\gamma} \mb{e}_{I_t}. 
\end{align}
If importance-weighted estimator $\hat\eta_\gamma^{(\text{Importance})}$ with $\gamma=0$ is used in Algorithm \ref{alg:fixed_budget_comp} (with a slightly modified objective function in \eqref{eq:action_comp_2}, see the Supplementary Material), one can obtain a computationally efficient algorithm whose probability of error scales as 
\begin{align*}
&\P(\err(\widehat{h}) \geq \err(\hstar) + \epsilon) \leq \\
&\quad \log(n \epsilon^{-1})  \exp(-\frac{T - \log(|\Hx|) \psi^*(\epsilon)}{\log( \epsilon^{-1}) [\gamma^*(\epsilon) + \rho^*(\epsilon) + \psi^*(\epsilon)]})\\[-30pt]
\end{align*}

where\\
 $\displaystyle\psi^*(\epsilon) \!:=\!\min_{\lambda \in \simp_n} \max_{ \substack{i \in [n] : \exists h \in \H \\ \hstar(x_i) \neq h(x_i)} }  \frac{1/n \lambda_i}{\max(\epsilon, \err(h) - \err(\hstar))}.$

There are instances where $\psi^*(\epsilon) \gg \gamma^*(\epsilon)$ and therefore the cost of computational efficiency is a worse sample complexity. See the appendix for more details.
\end{remark}

\subsection{Discussion of Theorem~\ref{thm:exists_est_gamma}} \label{sec:estimators}
Theorem \ref{thm:exists_est_gamma} above demonstrates the existence of an estimator that avoids any dependence on $\log(|\Hx|)$. 
The construction of the estimator in Theorem \ref{thm:exists_est_gamma} uses generic chaining, a technique that builds a highly optimized union bound to avoid extraneous logarithmic factors \cite{talagrand2014upper}. 
Generic chaining is most easily applied when a given estimator $\hat{\eta}$ satisfies the property that
$\wt{\err}(h, \hat{\eta})-\wt{\err}(h', \hat{\eta})$ is sub-Gaussian for every ``direction'' $h-h'$ of interest (e.g., see \cite{katz2020empirical}).
Though the $\hat\eta_\gamma^{(\text{Importance})}$ estimator has  sub-Gamma tails in general, ruling out its use, the following result shows that for $h-h'$ in a ball under a certain norm, we can construct an estimator for $\wt{\err}(h, \hat{\eta})-\wt{\err}(h', \hat{\eta})$ with a sub-Gaussian-like tail.
\begin{proposition}\label{prop:ips_bound_classification}
Fix $\lambda \in \triangle_n$, $\delta \in (0,1)$, and $h,h^\prime \in \H$. 
If $T$ samples are taken from $\lambda$ and $\hat\eta := \hat\eta^{(\text{Importance})}_\gamma$ is computed with $\gamma = \sqrt{ \frac{ \log(2/\delta) }{3\sum_{i=1}^n \frac{1}{\lambda_i n^2} \one\{h(x_i) \neq h'(x_i)\} }}$ then with probability at least $1-\delta$
\begin{align*}
    |[\wt{\err}&(h, \hat{\eta}) - \wt{\err}(h^\prime,\hat{\eta})] - [\err(h) - \err(h^\prime)]  | \\
    &\left(\sqrt{\tfrac{2}{3}} + 1\right) \sqrt{\frac{2 \sum_{i=1}^n \frac{1}{\lambda_i n^2} \one\{h(x_i) \neq h'(x_i)\} \log(\tfrac{2}{\delta})}{t} }.
\end{align*}
\end{proposition}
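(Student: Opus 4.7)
The plan is to express the estimation error as a sample mean of i.i.d.\ terms plus a deterministic clipping bias, then control the former with Bernstein's inequality. By a direct computation from the definition, $\wt\err(h,\hat\eta) - \wt\err(h',\hat\eta) = \frac{1}{n}\sum_{i=1}^n(h(x_i)-h'(x_i))(1-2\hat\eta_i)$, with the analogous identity for $\err$. Substituting $\hat\eta = \hat\eta_\gamma^{(\text{Importance})}$ rewrites the left-hand side as a sum over the samples $(I_t,y_t)$, and subtracting off the truth yields $[\wt\err(h,\hat\eta)-\wt\err(h',\hat\eta)] - [\err(h) - \err(h')] = C - B$, where $B := \frac{1}{T}\sum_{t=1}^T Z_t$ with i.i.d.\ $Z_t := \frac{2}{n}(h(x_{I_t})-h'(x_{I_t}))\, y_t/(\lambda_{I_t}+\gamma)$, and $C := \frac{2}{n}\sum_i (h(x_i)-h'(x_i))\eta_i$. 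I would then split $|C-B| \leq |B - \E[B]| + |\E[B] - C|$ into a stochastic concentration term and a deterministic bias term.

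For the concentration term, using $y_t^2 \leq 1$, $(h(x)-h'(x))^2 = \one\{h(x)\neq h'(x)\}$, and $(\lambda_i+\gamma)^{-1} \leq \lambda_i^{-1}$ gives $\E[Z_t^2] \leq 4\sigma^2$, where $\sigma^2 := \sum_{i=1}^n \one\{h(x_i)\neq h'(x_i)\}/(\lambda_i n^2)$ is the quantity appearing in the statement, and the clipping denominator enforces $|Z_t|\leq 2/(n\gamma)$ almost surely. Bernstein's inequality then delivers, with probability at least $1-\delta$, a bound of the form $|B - \E[B]| \leq \sqrt{8\sigma^2 \log(2/\delta)/T} + \frac{4\log(2/\delta)}{3Tn\gamma}$.

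For the bias, $\E[B] = \frac{2}{n}\sum_i (h(x_i)-h'(x_i))\lambda_i \eta_i/(\lambda_i+\gamma)$, so that $\E[B] - C = -\frac{2\gamma}{n}\sum_i (h(x_i)-h'(x_i))\eta_i/(\lambda_i+\gamma)$, and the same bounds $\eta_i \leq 1$ and $(\lambda_i+\gamma)^{-1} \leq \lambda_i^{-1}$ yield $|\E[B]-C| \leq 2\gamma n\sigma^2$. The stated choice of $\gamma$ is exactly the one that equates this clipping bias with the Bernstein range term, so that both end up of order $\sqrt{\sigma^2 \log(2/\delta)/T}$, matching the claimed sub-Gaussian-like rate up to the stated numerical constants. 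The main subtlety is precisely this balance: without clipping, the weights $1/\lambda_i$ are heavy-tailed and only yield sub-Gamma concentration (ruling out a chaining argument of the Theorem~\ref{thm:exists_est_gamma} type), whereas over-clipping destroys unbiasedness; the prescribed $\gamma$ is the unique scale at which the Bernstein range term and the clipping bias balance out and are both absorbed into the leading $\sqrt{\sigma^2 \log(2/\delta)/T}$ term.
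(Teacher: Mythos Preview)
Your decomposition into a deterministic clipping bias plus a Bernstein concentration term is exactly the paper's approach: in the appendix the paper proves the combinatorial-bandit analogue (Proposition~\ref{prop:ips_bound}) for the ridge estimator $\widehat\mu=(A(t\lambda)+sI)^{-1}X^\top y$, which coincides with $\hat\eta_\gamma^{(\text{Importance})}$ under the identification $s=T\gamma$. Your variance bound $\E[Z_t^2]\le 4\sigma^2$, range bound $|Z_t|\le 2/(n\gamma)$, and bias bound $|\E[B]-C|\le 2\gamma n\sigma^2$ are all correct and mirror the paper's $s\|v\|^2_{(tA(\lambda))^{-1}}+\tfrac{\log(2/\delta)}{3s}+\sqrt{2\|v\|^2_{(tA(\lambda))^{-1}}\log(2/\delta)}$.

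The one slip is your unchecked assertion that ``the stated choice of $\gamma$ is exactly the one that equates this clipping bias with the Bernstein range term.'' It does not: equating your own bounds $2\gamma n\sigma^2$ and $\tfrac{4\log(2/\delta)}{3Tn\gamma}$ forces $\gamma=\Theta\bigl(\sqrt{\log(2/\delta)/(Tn^{2}\sigma^{2})}\bigr)$, whereas the printed $\gamma=\sqrt{\log(2/\delta)/(3\sigma^{2})}$ makes your bias term $\tfrac{2n}{\sqrt3}\sqrt{\sigma^{2}\log(2/\delta)}$, which does not even decay with $T$. This is a typo in the statement; the paper's own appendix proof takes $s=\sqrt{\log(2/\delta)/(3\|v\|^{2}_{(tA(\lambda))^{-1}})}$, which under $s=T\gamma$ and $\|v\|^{2}_{A(\lambda)^{-1}}=n^{2}\sigma^{2}$ translates to $\gamma=\sqrt{\log(2/\delta)/(3Tn^{2}\sigma^{2})}$. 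Had you carried out the balance explicitly you would have caught this; with the corrected $\gamma$ your argument delivers the claimed $(\sqrt{2/3}+1)\sqrt{2\sigma^{2}\log(2/\delta)/T}$ rate.
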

The idea behind Theorem \ref{thm:exists_est_gamma} is to apply generic chaining to all $h-h'$, but to use a different $\hat\eta$ (specifically, a different $\gamma$) based on the size of $h-h'$ prescribed by Proposition~\ref{prop:ips_bound_classification}.
Details of the technique can be found in the supplementary materials.

%If one replaces the theoretical estimator in Algorithm \ref{alg:fixed_budget_comp} with the IPS estimator and modifies the objective function in \eqref{eq:action_comp_2}, one can obtain a computationally efficient algorithm with a guarantee given in Theorem \ref{thm:fixed_budget_efficient}. The details are deferred to the appendix due to space constraints. 
%
%\begin{theorem}\label{thm:fixed_budget_efficient}
%Let $T \in \N$ and $\epsilon > 0$. There exists a computationally efficient algorithm such the classifier $\widehat{h}$ returned by it satisfies the following guarantee. If $T \geq \log( \epsilon^{-1}) [\gamma^*(\epsilon) + \rho^*(\epsilon) + \psi^*(\epsilon) \log(|\Hx|)]$, then 
%\begin{align*}
%\P(\err(h) \geq \err(\hstar) + \epsilon) \leq \log(n \epsilon^{-1})  \exp(-\frac{T - \log(|\Hx|) \psi^*(\epsilon)}{\log( \epsilon^{-1}) [\gamma^*(\epsilon) + \rho^*(\epsilon) + \psi^*(\epsilon)]})
%\end{align*}
%where 
%\begin{align*}
%\psi^*(\epsilon) := \min_{\lambda \in \simp_n} \max_{i : \exists h \in \H \text{ s.t. } \hstar(x_i) \neq h(x_i) } \frac{\frac{1}{n \lambda_i}}{\max(\epsilon, \err(h) - \err(\hstar))}
%\end{align*}
%\end{theorem}
%As we discuss in the appendix, there are instances where $\psi^*(\epsilon) \gg \gamma^*(\epsilon)$ and therefore the cost of computational efficiency is a worse sample complexity. \jks{mention the use of new confidence bound for IPS bound}

\subsection{Computationally Efficient Experimental Design} \label{sec:efficient_optimization}
In this section, we discuss how to solve \eqref{eq:action_comp_2} efficiently given access to a weighted empirical risk minimization oracle, which we will introduce shortly.
First, note that minimizing \eqref{eq:action_comp_2} is equivalent to minimizing $\E_{\rad\sim N (0, I)} [\max_{h\in \H} f(\lambda; h; \rad)]$ with respect to $\lambda$ where 
\begin{align*}
f(\lambda; h; \rad) &:=  \tfrac{ \sum_{i \in [n]} (\tilde{h}_k(x_i)- h(x_i)) \frac{\rad_i}{n\lambda_i^{1/2}} }{2^{-k+1} +  \wt{\err}(h,\hat\eta_{k-1}) - \wt{\err}(\tilde{h}_k,\hat\eta_{k-1})}\\
&:=  \tfrac{ \sum_{i \in [n]} (\tilde{h}_k(x_i)- h(x_i)) \frac{\rad_i}{n\lambda_i^{1/2}} }{2^{-k+1} +  
\sum_{i\in [n]}
(1-2\hat\eta_{k-1,i})(\tilde{h}_k(x_i) -h(x_i)) }.
\end{align*}
% We can then rewrite \eqref{eq:action_comp_2} as solving $\inf_{\lambda \in \simp} \E_{\rad\sim N (0, I)} [\max_{h\in \H} f(\lambda; h; \rad)]$. In order to approximate the expectation with respect to the standard Gaussian, we sample $\rad_1 ,..., \rad_M \sim N(0,I)$.  
It is known that $\E_{\rad\sim N (0, I)} [\max_{h\in \H} f(\lambda; h; \rad)]$ is convex in $\lambda$ \cite{katz2020empirical}, hence we perform the minimization over $\lambda$ via stochastic mirror descent with stochastic gradient $g(\lambda,\rad) = \nabla f(\lambda, \widetilde{h}; \rad)$ where $\rad \sim \mc{N}(0,I)$ and $\widetilde{h} \in \arg\max_{h \in \mc{H}} f(\lambda, h; \rad)$.
To obtain $\widetilde{h}$ for a fixed $\lambda$ and  $\rad$, first note that the value $\max_{h \in \mc{H}} f(\lambda; h; \rad)$ is equal to 
% %Next we outline how to solve for the inner optimization $\max_{h \in \H} f(\lambda; h; \rad)$. 
% We note that once we obtain $\argmax_{h\in \H} f(\lambda; h; \rad)$, we can compute $\nabla_\lambda \max_{h \in \H}f(\lambda; h; \rad)$ with closed form solutions. 
% To expand our error estimates $\wt{\err}^{(k)}$ in the denominator of $f$, we assume $\wt{\err}^{(k)}: \H \mapsto [0,1]$ and $\wt{\err}^{(k)}$ is linear in $h(x_i)$ for each $i\in [n]$. 
% Note this assumption is true for all estimators we use in this paper. Furthermore, we write the estimator in the form of $\wt{\err}^{(k)}(h) = \sum_{i \in [n]} \alpha_i h(x_i) +\beta_i(1-h(x_i))$, where $\{\alpha_i\}_{i=1}^n$, $\{\beta_i\}_{i=1}^n$ are weights inherent to the estimate. With the above assumptions, we have that $\max_{h \in \H} f(\lambda; h; \rad)$ is equivalent to solving the following linear program
% \begin{align*}
% & \min_{r \in \R^+} r \\
% & \text{subject to} \quad C(r) + \max_{h \in \H}\, \sum_{i \in [n]} w_i(r) \cdot h(x_i) \leq 0\\
% \end{align*}
%\lalit{TODO: the binary search will be here. Mention the grid search as well - details in appendix.}
\begin{align*}
& \min_{r \in \R^+} r \, \text{subject to} \enspace a r + b + \max_{h \in \H}\, \sum_{i \in [n]} (c_i r + d_i) h(x_i) \leq 0
\end{align*}
where $a = -2^{-k+1} - \sum_{i\in [n]} (1 - 2\hat{\eta}_{k, i})\tilde{h}_k(x_i)$, $b = \sum_{i\in [n]} \frac{\rad}{n\lambda_i^{1/2}}\tilde{h}_k(x_i)$, $c_i=1 - 2\hat{\eta}_{k-1, i}$ and $d_i=-\frac{\rad}{n\lambda_i^{1/2}}$.
%Thus, for each $r$ we find $\arg\max_{h \in \H}\, \sum_{i \in [n]} (c_i r + d_i) h(x_i)$, and then perform a one-dimensional search over $r$ to find the value of $r$ that makes the constraint tight.

%This is a convex, one-dimensional optimization problem in $r$

For any fixed positive value of $r$ it suffices to check the constraint. We can then use a line search procedure to find the minimizing value of $r$ (details in Appendix~\ref{sec:implement}).
% We find the minimizing $r$ via binary search.

Thus we have reduced to checking the constraint for a fixed $r\in \R^+$.  Specifically, the difficulty is to solve for $\max_{h \in \H} \sum_{i \in [n]} w_i \cdot h(x_i)$ where $w_i$ are arbitrary weights. This can be reduced to weighted $0/1$-loss minimization problem that is solvable by a weighted classification oracle:
%In theory, we assume access to the following weighted classification oracle with binary labels $\{y_i\}_{i=1}^n$
\begin{align*}
    \text{oracle}(\{ \tilde{w}_i, \tilde{x}_i, \tilde{y}_i\}_{i=1}^n)\! :=\! \argmin_{h\in \H} \! \sum_{i\in [n]} \tilde{w}_i \! \cdot \! \1\{h(\tilde{x}_i) \neq \tilde{y}_i\}
\end{align*}
for inputs $\{ \tilde{w}_i, \tilde{x}_i, \tilde{y}_i\}_{i=1}^n$.
Then, 
\begin{align*}
    \max_{h \in \H} \sum_{i \in [n]} w_i \cdot h(x_i)
    = \text{oracle}(\{|w_i|, x_i, \1\{w_i \geq 0\}\}_{i=1}^n). %\qquad \text{where } y_i = \1\{w_i \geq 0\}
\end{align*}

\section{Implementation and Experiments} \label{sec:experiment}

In the previous section we reduced the experimental design objective of \eqref{eq:action_comp_2}  to a weighted 0/1 loss classification problem using weights that are functions of the estimated vector $\widehat{\eta}$. 
In practice we replace this 0/1 loss with a surrogate convex loss, namely the logistic loss.  
However, to implement Algorithm~\ref{alg:fixed_budget_comp} we still have to specify the choice of estimator $\hat\eta$. 
Though the estimator specified in Theorem~\ref{thm:exists_est_gamma} is theoretically grounded, it is difficult to implement in practice since it involves a costly constrained linear optimization problem over the set of hypothesis in $\H_k$. 
As described in Remark~2, it is still possible to have a theoretical guarantee for other estimators such as the IPS estimator. 
As described precisely in Appendix~\ref{sec:implement}, in our implementation we take the estimate for $\hat{\eta}_k$ to be 
\begin{align*}
    % \hat\eta_k^{(Naive)} = \sum_{j=1}^k 
    % \sum_{s = 1}^{N_k} y_s^{(j)} e_{I_s}^{(j)} / |\{ s' : I_{s'} = I_s \}| 
    \big[ \hat\eta_k^{(\text{Naive})} \big]_i = \text{average}( \{ y_s^{(j)} : I_s^{(j)} = i, s \in [N_j], j\in [k] \}),
    % e_{I_s}^{(j)} / |\{ s' : I_{s'} = I_s \}| 
\end{align*}
i.e. a simple average of the labels we see. Here $I_s^{(j)}$ indexes the $s$-th query we made in round $j$. In our experiments we only considered the persistent noise setting (i.e., querying the same image more than once would always return the same label as before, or formally, $\eta_i \in \{0,1\}$).
Thus, if we sample a point $x_{I_s}^{(j)}$ (i.e., $I_s^{(j)}$) more than once, we set $y_s^{(j)}$ to be the previously observed label and we did not count this observation in our count of total labels taken.
To take advantage of all of the labels observed so far, we also employ a water-filling technique for sampling in practice (details in Appendix~\ref{sec:implement}).

\textbf{Baselines.} To validate Algorithm~\ref{alg:fixed_budget_comp} we conducted a set of experiments against the following baselines that are considered to be state-of-the-art theoretically-justified methods in disagreement based active learning. Our set of methods are chosen based on the ones considered in \cite{huang2015efficient}, the most recent work of relevance. Details on the precise implementations of these methods are available in the supplementary materials in Appendix~\ref{sec:implement}. 
\begin{itemize}[leftmargin=*, noitemsep, topsep=0pt]
    \item Passive: We considered a passive baseline where we uniformly at random choose samples from our pool, retrain our model on our current samples and report the accuracy.
    \item Importance Weighted Active Learning (IWAL) : IWAL was originally introduced in \citet{beygelzimer2009importance} and is an active learning algorithm in the streaming setting. Our implementation is based on the algorithm presented in \citet{beygelzimer2010agnostic} which we refer to as IWAL0. We also consider variants, IWAL1, and oracular versions ORA-IWAL0, ORA-IWAL1 detailed in \citet{huang2015efficient}.
    \item Online Active Cover (OAC): OAC is described in \citet{huang2015efficient}. We used the implementation of OAC that is available in Vowpal Wabbit \cite{agarwal2014reliable}. 
\end{itemize}

\textbf{Datasets.} We evaluate on the following four real datasets.
\begin{itemize}[leftmargin=*, noitemsep, topsep=0pt]
    \item \textbf{MNIST 0-4 vs 5-9} \citep{lecun1998gradient}. We considered the standard MNIST dataset but in a binary setting where digits 0-4 are labelled as 0, and 5-9 are labelled as 1. Our pool has 50000 images in total, and we classified based on the flattened images (784 dimensions).
    \item \textbf{SVHN 2 vs 7} \citep{netzer2011reading}. We considered the binary classification problem of determining whether a digit was a 2 or a 7 (ignoring all other images). To prevent the logistic classifier from overfitting to arbitrary labels and to restrict the hypothesis class $\H$, we downsample the images to 512 dimensional feature vectors through PCA. There are 16180 images in total.
    \item \textbf{CIFAR Bird vs Plane} \citep{netzer2011reading}. We considered the binary classification problem of determining whether a digit was a bird or a plane (ignoring all other images). To prevent the logistic classifier from overfitting to arbitrary labels and to restrict the hypothesis class $\H$, we downsample the images to 576 dimensional feature vectors through PCA. There are 10000 images in total.
    \item \textbf{FashionMNIST T-shirt vs Pants} \citep{xiao2017fashion}. We considered the binary classification problem of T-shirt vs Pants. Our pool has 12000 images in total, and we classified based on the flattened images (784 dimensions).
\end{itemize}

\textbf{Implementations.} We use two implementations to measure the performances of the algorithms.
\begin{itemize}[leftmargin=*, noitemsep, topsep=0pt]
    \item Implementation from Vowpal Wabbit \citep{agarwal2014reliable} that is used by \citet{huang2015efficient}. The implementation employs an online learner that only updates based on the latest queried label, therefore has time complexity that scales linearly in the number of images $n$.
    %, beygelzimer2009importance, beygelzimer2010agnostic
    \item For our implementation in a batched setting, we retrain the entire classifier to convergence every time new labels become available. We find that the online learner of above can perform significantly better than our batched learner during the first few batches of training. However, our implementation has more stable accuracies during the course of training and performs slightly superior ($<1\%$) in final accuracy. This comes at a cost of an $O(n^2)$ time complexity, which is too expensive in some of the settings.
\end{itemize}
In particular, we only use the Vowpal Wabbit implementation for the OAC experiments and the oracular variants of IWAL algorithms for our MNIST experiement, due to the high computation cost for running these algorithms with exhaustive hyperparameter search. However, we think this is still a fair comparison when evaluating some baselines using the the two implementations since it is the best one can achieve for those baselines within a computation budget (single machine with state-of-art commercialized CPU that runs for a month).

\textbf{Hypothesis Class.} In our implementation, we took the hypothesis space to be the set of linear separators in the underlying feature space. We used the logistic regression implementation in Scikit-learn~\cite{scikit-learn} for our underlying classification oracle.

\begin{minipage}[c]{.49\linewidth}
    \centering
    \includegraphics[trim={0cm 0cm 0.8cm 0.8cm},clip,width=\linewidth]{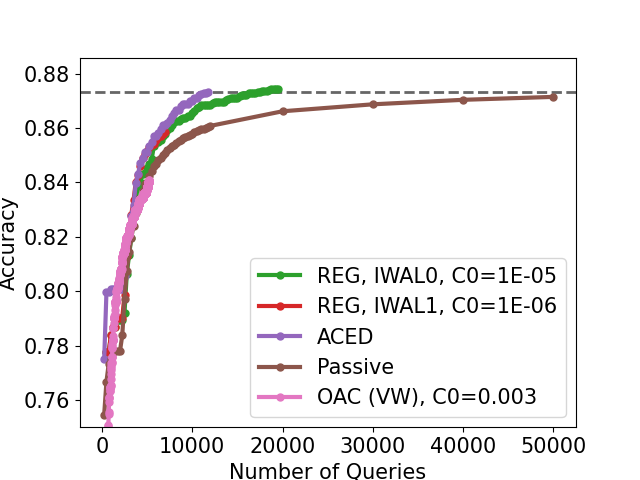}
    \captionof{figure}{MNIST Performance}
    \label{fig:mnist}
\end{minipage}
\begin{minipage}[c]{.49\linewidth}
    \centering
    \includegraphics[trim={0.3cm 0cm 0.5cm 0.8cm},clip,width=\linewidth]{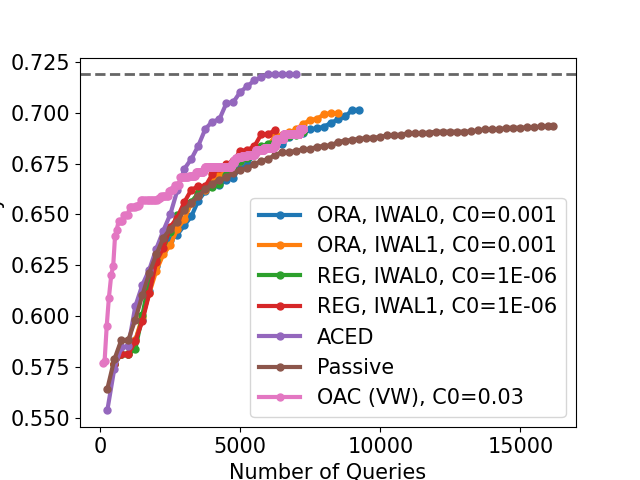}
    \captionof{figure}{SVHN Performance}
    \label{fig:svhn}
\end{minipage}
\begin{minipage}[c]{.49\linewidth}
    \centering
    \includegraphics[trim={0cm 0cm 0.8cm 0.8cm},clip,width=\linewidth]{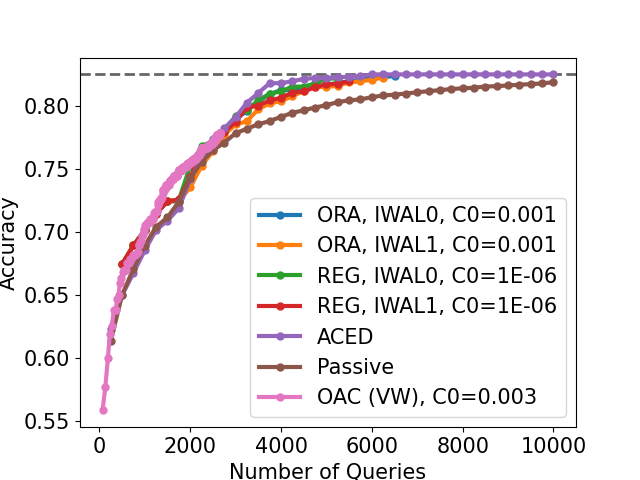}
    \captionof{figure}{CIFAR Performance}
    \label{fig:cifar}
\end{minipage}
\begin{minipage}[c]{.49\linewidth}
    \centering
    \includegraphics[trim={0.3cm 0cm 0.5cm 0.8cm},clip,width=\linewidth]{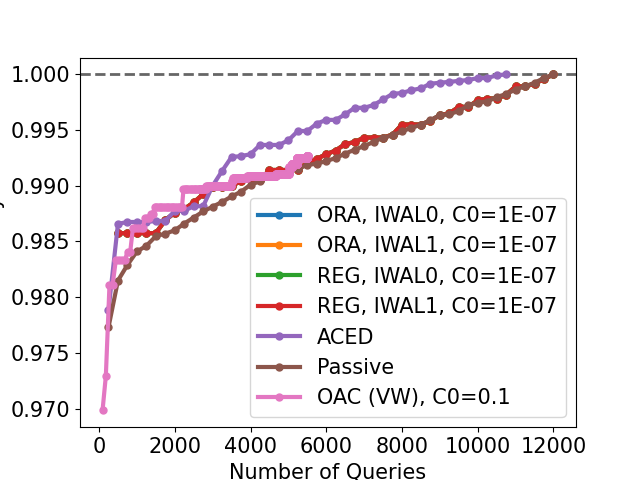}
    \captionof{figure}{FashionMNIST}
    \label{fig:fashion}
\end{minipage}

\textbf{Discussion.} For each of the binary classification datasets, we plot the running maximum accuracy on the unlabelled pool against the number of queries taken as in Figure~\ref{fig:mnist},\ref{fig:svhn},\ref{fig:fashion},\ref{fig:cifar} (full scale images included in Appendix~\ref{sec:full_scale}).
The passive curves are evaluated based on the averages of $10$ runs. In the CIFAR experiment, ACED is an average over $5$ runs. We find the curve in this setting to be very consistent, and that the standard deviations are minimal for visualization. All of the other curves are evaluated based on a single run.
For baselines algorithms proposed in the streaming setting (variants of IWAL and OAC), in each round we uniformly sample an example from the pool, and feed a fixed number of passes. We select the best $C_0$ based on which hyperparameter setting takes the least amount of queries to reach the same level of accuracy. Detailed hyperparameters considered for the baselines are included in Appendix~\ref{sec:base_hparam}. Furthermore, to demonstrate active gains in generalization, we include plots on holdout test sets in Appendix~\ref{sec:test_perf}.

On all four datasets, our algorithm outperforms other baselines by taking much fewer queries to reach the passive accuracy on the entire dataset. Sometimes the active learning algorithms even beat the passive accuracy on the whole dataset, which is a known phenomenon of active learning studied by~\citet{mussmann2018uncertainty}. For the MNIST dataset, we do not include performance curves for the oracular variants of IWAL, since the Vowpal Wabbit implementation turns out to be performing at random chance. We also notice that OAC stops taking queries very early on (no longer making queries when given more passes over the pool). However, when increasing $C_0$, the aggressiveness to make a query, OAC starts performing worst than passive pretty easily. We include Figure~\ref{fig:cifar_sensitivity} in the appendix to demonstrate how sensitive the OAC curves are to the hyperparameter $C_0$, which one cannot tune in real applications.

As a special case, on the FashionMNIST dataset, our binary classification task is linearly separable and the baseline methods fail miserably. For all of the IWAL algorithms on this dataset, we searched in an extended range of hyperparameters than the ones used in the other three tasks. When fixing the random order of the stream, however, all of the baselines become equivalent, and perform almost identical to passive. Since in practice, only one set of hyperparameters can be deployed, this again demonstrates the shortcoming of these baseline algorithms, whereas our method does not rely on any aggressiveness hyperparameter.

\section{Related Work and Discussion}

\textbf{Active Classification:} Active classification has received much attention with a large number of theoretical and empirical works (see \cite{hanneke2014theory} and \cite{settles2011theories} for excellent surveys). \citet{cohn1994improving} initiated research into the study of disagreement based active classification algorithms, proposing CAL for the realizable setting. \citet{balcan2009agnostic} extended disagreement-based active classification to the agnostic case, introducing the method, $A^2$. \citet{hanneke2007bound} provided a general analysis of $A^2$ in terms of the disagreement coefficient, with follow-up works improving on the sample complexity of this approach \cite{dasgupta2007general, hanneke2009adaptive, hanneke2011rates, koltchinskii2010rademacher, hanneke2014theory}. The results in Section~\ref{sec:disagreement} show that our sample complexities are never worse than the ones obtained by this line of work.

An extension of this line of work has aimed to attain similar sample complexities, while leveraging an empirical risk minimization oracle to design more practical algorithms \cite{dasgupta2007general, hsu2010algorithms, beygelzimer2010agnostic, huang2015efficient}. With the exception of \citet{huang2015efficient}, these methods tend to have a conservative query policy that samples uniformly in the disagreement region, leading to an onerous label requirement. While \citet{huang2015efficient} has a more aggressive query policy that does not sample uniformly in the disagreement region, their sample complexity result could also be obtained by sampling uniformly in the disagreement region and, therefore, their theoretical result does not reflect gains from a careful selection of points in the disagreement region. In particular, the dominant term is still the disagreement coefficient and, hence, it can be much worse than our sample complexity on instances such as the one in Proposition \ref{prop:disag_coeff_loose}. 

Recently, \citet{jain2019new} showed that active classification in the pool-based setting is an instance of combinatorial bandits, an observation that is central to our analysis. They provided the first analysis that shows the contribution of each example to the sample complexity providing a more fine-grained result than the disagreement coefficient. We improve on this work by optimizing the sampling distribution in the region of disagreement and using improved estimators such as the one in Theorem \ref{thm:exists_est_gamma}. Proposition 4 of \citet{katz2020empirical} implies that our sample complexity is always better than the sample complexity in \citet{jain2019new}. 
%They do not provide a computationally efficient algorithm and they sample uniformly in the region of disagreement. 

% Margin-based active classification algorithms consider more aggressive querying strategies than disagreement-based active classification, attaining improved label complexities \kevin{Using additional assumptions}. The computationally efficient variants of these works, e.g. \cite{balcan2007margin,balcan2013active}, are limited to specific hypothesis classes. \cite{zhang2014beyond} generalizes this approach to generic hypothesis classes, but does not provide a computationally efficient method for the agnostic setting.\lalit{Should we drop this since we are not comparing to it?}

\textbf{Linear and Combinatorial Bandits.}  $\rho^*$ has been shown to be  the dominant term in a lower bound for pure exploration linear bandits and combinatorial bandits \cite{soare2014best,chen2017nearly,fiez2019sequential}. Recently \citet{katz2020empirical} introduced the notion of $\gamma^*$ for linear and combinatorial bandits, showing that it is a lower bound for any non-interactive oracle MLE algorithm. One of our contributions is making the connection between the active classification and linear/combinatorial bandit literature, and showing that we can leverage the results from this work to obtain improved sample complexities for agnostic active classification. 

\bibliography{refs}
\bibliographystyle{icml2021}

\onecolumn

\appendix

\tableofcontents

\section{Generalization}

\begin{proof}[Proof of Remark \ref{prop:generalization}]
% By a sub-Gaussian bound and a union bound, we have that for all $h \in \H$
% \begin{align*}
% |\err(h) - \P(h(x) \neq f(x))| \leq c \sqrt{\frac{\ln(|\H|/\delta)}{n}}
% \end{align*}
Define the event
\begin{align*}
\mc{E} := \{ \forall h \in \H : |\err(h) - \P(h(x) \neq f(x))| \leq c [\sqrt{\frac{\ln(1/\delta)}{n}} + \sqrt{\frac{\vc}{n}}]\}.
\end{align*}
where the randomness is over the draw of the pool $\{x_1, \ldots, x_n\} \sim \D_{\X}$. Using the bounded differences inequality and 3.4 in \cite{boucheron2005theory}, we have by a standard argument that $\P(\mc{E}) \geq 1- \delta$. Suppose $\mc{E}$ holds for the remainder of the proof. Let $\bar{h} = \argmin_{h \in \H} \P_{(x,y) \sim \D}(h(x) \neq f(x))$. Then,
\begin{align*}
 \P(\widehat{h}(x) \neq f(x)) & \leq \err(\widehat{h}) + c [\sqrt{\frac{\ln(1/\delta)}{n}} + \sqrt{\frac{\vc}{n}}]  \\
 & \leq \min_{h \in \H} \err(h) + \epsilon + c [\sqrt{\frac{\ln(1/\delta)}{n}} + \sqrt{\frac{\vc}{n}}]  \\
 & \leq  \err(\bar{h}) + \epsilon + c [\sqrt{\frac{\ln(1/\delta)}{n}} + \sqrt{\frac{\vc}{n}}] \\
 & \leq   \P(\bar{h}(x) \neq f(x)) + \epsilon + 2c [\sqrt{\frac{\ln(1/\delta)}{n}} + \sqrt{\frac{\vc}{n}}].
\end{align*}
% \begin{align*}
%  \P(\widehat{h}(x) \neq f(x)) & \leq \err(\widehat{h}) + c  \sqrt{\frac{\ln(|\H|/\delta)}{n}}  \\
%  & \leq \min_{h \in \H} \err(h) + \epsilon + c  \sqrt{\frac{\ln(|\H|/\delta)}{n}}  \\
%  & \leq  \err(\bar{h}) + \epsilon + c  \sqrt{\frac{\ln(|\H|/\delta)}{n}}  \\
%  & \leq   \P(\bar{h}(x) \neq f(x)) + \epsilon + 2c \sqrt{\frac{\ln(|\H|/\delta)}{n}} \\ 
% \end{align*}
\end{proof}

\section{Reduction to Combinatorial Bandits}

We state and prove our results for active classification in the language of combinatorial bandits, a strictly more general problem, which we now introduce.

\textbf{Combinatorial Bandits:} There are $n$ distributions $\nu_{1}, \ldots, \nu_{n}$ supported on $[-1,1]$ with mean $\param_i = \E_{\rad \sim \nu_i} \rad$. $\H $ is a collection of subsets of $[n]$. At each round $t$, the agent queries a distribution $I_t$ and observes $y_t \sim \nu_{I_t}$. Given $\epsilon > 0, \delta \in (0,1)$, the goal is to identify $h \in \H$ that satisfies 
\begin{align*}
\sum_{i \in h} \param_i \geq \sum_{i \in h^*} \param_i - \epsilon
\end{align*} 
with probability at least $1-\delta$ using as few samples as possible. 

We also write $\mu := (\mu_1,\ldots, \mu_n)^\t$. We interchangely treat each $h \in \H$ as a set  in $[n]$ or as a vector in $\{0,1\}^n$ with $h_i = 1 $ if $i \in h$ and $h_i = 0$ otherwise. Using this vector notation, we often write $h^\t \mu = \sum_{i \in h} \mu_i$.

\textbf{Reduction to combinatorial bandits:} We use the reduction of active classification to combinatorial bandits from \cite{jain2019new}. Note that
\begin{align*}
\err(h) & = \frac{1}{n}[ \sum_{i \in [n] : h(x_i) = 0 } {\labn}_i + \sum_{i \in [n] : h(x_i) = 1 } (1-{\labn}_i)] = \frac{1}{n} [\sum_{i \in [n]} {\labn}_i - \sum_{i \in [n] : h(x_i) = 1 } \param_i]
\end{align*}
where $\param_i := 2{\labn}_i-1$. Thus, treating each $h \in \H$ as a set where $i \in h$ iff $h(x_i) = 1$, we observe $\argmin_{h \in \H} \err(h) = \argmax_{h \in \H} \sum_{i \in h} \param_i$ and that finding a hypothesis $h$ such that
\begin{align*}
\err(h) - \min_{h^\prime \in \H} \err(h^\prime) \leq \epsilon  
\end{align*}
is equivalent to finding $h$ such that $\sum_{i \in h} \param_i \geq \max_{h^\prime \in \H}  \sum_{i \in h^\prime} \param_i - n\epsilon$. Thus, active classification can be viewed as an instance of combinatorial bandits where each $\nu_i$ is a random variable supported on $\{-1,1\}$ with mean $\param_i = 2{\labn}_i-1$. Therefore, \emph{any algorithm for $\epsilon$-good arm identification for combinatorial bandits yields an algorithm for active binary classification in the pool-based setting}. 

\section{Disagreement Coefficient}

We now introduce equivalent definitions of $\rho^*(\epsilon), \gamma^*(\epsilon)$ and the disagreement coefficient in the combinatorial bandit setting. 
\begin{align*}
\rho^*(\epsilon) & := \inf_{\lambda \in \simp_n}  \sup_{h \in \H \setminus \{\hstar \}} \frac{\norm{\hstar-h}^2_{A(\lambda)^{-1}}}{ \max(\param^\t (\hstar-h),\epsilon)^2} \\
\gamma^*(\epsilon) & := \inf_{\lambda \in \simp_n} \E_{\rad \sim N(0,I)}[ \sup_{h \in \H \setminus \{\hstar \}} \frac{(\hstar-h)^\t A(\lambda)^{-1/2} \rad}{ \max(\param^\t(\hstar-h ),\epsilon)}]^2.
\end{align*}
Define the ball of radius $r$ centered at $\hstar$
\begin{align*}
B(h_*,r) & = \{h \in \H : \frac{|h \Delta h_*|}{n} \leq r \}
\end{align*}
and
\begin{align*}
\disr(B(h_*,r)) & = \{i : \exists h,h^\prime \in B(h_*,r) \text{ s.t. } i \in h \Delta h^\prime \}.
\end{align*}
The disagreement coefficient is defined as
\begin{align*}
\dis(\epsilon) & = \sup_{r \geq \epsilon  }  \frac{|\disr(B(h_*,r))|/n}{r} \\
& = \sup_{r \geq \epsilon  }   \frac{|\{i : i \in h \Delta h^\prime \text{ for some } h,h^\prime \in \H \text{ s.t. } \max(|h_* \Delta h |, |h_* \Delta h^\prime |) \leq nr\}|}{nr}.
\end{align*}

The proof of Proposition~\ref{prop:disag_coeff_rhostar} follows by a peeling argument and the application of the following lemma.

\begin{lemma}\label{lem:disagreement_peeling_lemma}
Let $\epsilon \in [\frac{\Delta_{min}}{n},1]$. 
\begin{itemize}
\item Suppose the noiseless case holds, i.e., $\labn \in \{0,1\}^n$. If $\epsilon \in [\frac{\Delta_{min}}{n}, \nu)$, then
\begin{align*}
\sup_{\xi \geq \epsilon} \min_{\lambda} \max_{h : \Delta_h \leq n\xi} \frac{\norm{h_* - h}_{A(\lambda)^{-1}}^2}{(n\xi)^2} \leq 9\dis(\epsilon) \frac{\nu^2}{\epsilon^2} 
\end{align*} 
and if $\epsilon \in [\nu, 1]$, then
\begin{align*}
\sup_{\xi \geq \epsilon}  \min_{\lambda} \max_{h : \Delta_h \leq n\xi} \frac{\norm{h_* - h}_{A(\lambda)^{-1}}^2}{(n\xi)^2} \leq 9\dis(\epsilon) .
\end{align*} 
\item Suppose that the Tsabokov noise condition holds for some $a \in [1, \infty)$ and $\alpha \in (0,1]$. Then,
\begin{align*}
\min_{\lambda} \max_{h : \Delta_h \leq n\epsilon} \frac{\norm{h_* - h}_{A(\lambda)^{-1}}^2}{(n\epsilon)^2} \leq c a^2 \frac{1}{\epsilon^{2-2 \alpha}} \dis(a \epsilon^\alpha) \log( n\Delta_{min}^{-1} \vee  \epsilon^{-1})
\end{align*}
\end{itemize}

\end{lemma}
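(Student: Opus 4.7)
The plan is to use the same template for both parts: construct a sampling distribution $\lambda$ supported on a disagreement region $\disr(B(h_*,r))$ for a carefully chosen radius $r \geq \epsilon$, and exploit the identity $\|h - h_*\|^2_{A(\lambda)^{-1}} = |h \Delta h_*| \cdot |\disr(B(h_*,r))|$ that holds whenever $\lambda$ is uniform on $\disr(B(h_*,r))$ and $h \Delta h_* \subseteq \disr(B(h_*,r))$. The disagreement coefficient then controls $|\disr(B(h_*,r))| \leq n r \cdot \dis(\epsilon)$, leaving only the task of upper bounding $|h \Delta h_*|$ in terms of $\Delta_h$ using the structural assumption of each setting.

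For the noiseless case I would apply the triangle inequality for symmetric differences with the true labeling $f$: $|h \Delta h_*| \leq |h \Delta f| + |h_* \Delta f| = n\err(h) + n\nu \leq \Delta_h + 2n\nu$. So any $h$ with $\Delta_h \leq n\xi$ lies in $B(h_*, \xi + 2\nu)$. Taking $\lambda$ uniform on $\disr(B(h_*, \xi + 2\nu))$ and applying the disagreement coefficient at $r = \xi + 2\nu \geq \xi \geq \epsilon$ yields the ratio $\dis(\epsilon)(\xi + 2\nu)^2/\xi^2$. Since $(1 + 2\nu/\xi)^2$ is decreasing in $\xi$, the supremum over $\xi \geq \epsilon$ is attained at $\xi = \epsilon$: when $\epsilon < \nu$ this gives $(1 + 2\nu/\epsilon)^2 \leq (3\nu/\epsilon)^2 = 9\nu^2/\epsilon^2$, and when $\epsilon \geq \nu$ every $\xi \geq \epsilon \geq \nu$ satisfies $(1 + 2\nu/\xi)^2 \leq 9$.

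For the Tsybakov case I would use the condition $|h \Delta h_*| \leq an^{1-\alpha}\Delta_h^\alpha$, so that hypotheses with $\Delta_h \leq n\epsilon$ satisfy $|h \Delta h_*| \leq an\epsilon^\alpha$ and therefore lie in $B(h_*, a\epsilon^\alpha)$. Choosing $\lambda$ uniform on $\disr(B(h_*, a\epsilon^\alpha))$ and applying the disagreement coefficient at $r = a\epsilon^\alpha \geq \epsilon$ (the inequality follows from $a \geq 1$ and $\alpha \leq 1$) yields $|\disr| \leq n\,\dis(a\epsilon^\alpha)\,a\epsilon^\alpha$. Combining, $\|h - h_*\|^2_{A(\lambda)^{-1}} \leq an\epsilon^\alpha \cdot n\,\dis(a\epsilon^\alpha)\,a\epsilon^\alpha = a^2 n^2 \epsilon^{2\alpha}\dis(a\epsilon^\alpha)$, and dividing by $(n\epsilon)^2$ produces $a^2 \dis(a\epsilon^\alpha)/\epsilon^{2-2\alpha}$. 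A dyadic peeling over shells of $\Delta_h$ from $\Delta_{\min}$ up to $n\epsilon$, with $K = O(\log(n\Delta_{\min}^{-1} \vee \epsilon^{-1}))$ shells, mixed uniformly via $\lambda = \tfrac{1}{K}\sum_k \lambda^{(k)}$, would recover the same bound with the stated log factor appearing naturally from the mixing inequality $\|v\|^2_{A(\lambda)^{-1}} \leq K\|v\|^2_{A(\lambda^{(k)})^{-1}}$, matching the proof template used to obtain the corresponding term in Proposition~3.

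The main technical subtlety in both parts is verifying that the disagreement coefficient is invoked at an admissible radius, i.e., $r \geq \epsilon$; this is immediate in the noiseless case via $\xi + 2\nu \geq \xi \geq \epsilon$, and in the Tsybakov case via $a\epsilon^\alpha \geq \epsilon$. The case split on $\epsilon$ versus $\nu$ in the noiseless part and the bookkeeping between $|h \Delta h_*|$ and $\Delta_h$ via Tsybakov in the second part are then routine algebra.
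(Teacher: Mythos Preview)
Your proposal is correct and follows essentially the same route as the paper: in both settings you pick $\lambda$ uniform on the disagreement region $\disr(B(h_*,r))$ for an appropriate radius, use the identity $\|h-h_*\|_{A(\lambda)^{-1}}^2=|h\Delta h_*|\cdot|\disr(B(h_*,r))|$, bound $|h\Delta h_*|$ via the triangle inequality through the true labeling (noiseless) or via the Tsybakov inequality, and invoke the definition of $\dis$ to control $|\disr|$. The noiseless case-split and algebra match the paper almost line for line.

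One remark on the Tsybakov part: your direct argument already yields
\[
\min_\lambda \max_{h:\Delta_h\le n\epsilon}\frac{\|h_*-h\|_{A(\lambda)^{-1}}^2}{(n\epsilon)^2}\le a^2\epsilon^{-(2-2\alpha)}\dis(a\epsilon^\alpha),
\]
which is exactly what the paper's proof establishes and which already implies the lemma as stated (the $\log(n\Delta_{\min}^{-1}\vee\epsilon^{-1})$ factor in the lemma statement is slack here; it is the peeling in the proof of Proposition~\ref{prop:disag_coeff_rhostar}, not this lemma, that genuinely incurs it). Your additional dyadic peeling over shells of $\Delta_h$ is therefore unnecessary for this lemma and would only weaken the bound you have already obtained. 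The admissibility check $a\epsilon^\alpha\ge\epsilon$ is also not needed, since you are bounding $|\disr(B(h_*,a\epsilon^\alpha))|$ by $n\cdot a\epsilon^\alpha\cdot\dis(a\epsilon^\alpha)$ directly from the definition of $\dis$ evaluated at $a\epsilon^\alpha$, not at $\epsilon$.
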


\begin{proof}
\textbf{Noiseless Case: $\labn \in \{0,1\}^n$}:
Under the noiseless assumption, we have that
\begin{align}
\err(h) = \frac{1}{n} |\labn \Delta h|  = \frac{1}{n}[\sum_{i \in \labn \setminus h} 1 + \sum_{i \in h \setminus \labn} 1 = \frac{1}{n} [\sum_{i \in \labn}1 + \sum_{i \in h \setminus \labn} 1 - \frac{1}{n} \sum_{i \in \labn \setminus h} 1] = \frac{1}{n} |\labn| - \frac{1}{n } \sum_{i \in h} \param_i . \label{eq:reduction_rel}
\end{align}
Recall $\errstar = \err(\hstar)$. 

Fix $\xi \geq \epsilon$. Suppose $\Delta_h \leq n \xi$. We have by \eqref{eq:reduction_rel} $\frac{1}{n} \sum_{i \in h} \param_i = \frac{1}{n} |\labn| - \frac{1}{n} |\labn \Delta h|$ and thus
\begin{align*}
n \xi \geq \Delta_h = |\labn \Delta h| - |\labn \Delta h_*| = |\labn \Delta h| -n \nu
\end{align*}
and thus $|\labn \Delta h| \leq n(\xi + \nu)$. Thus, if $\Delta_h \leq n \xi$, we have that
\begin{align}
|h_* \Delta h|  \leq |h_* \Delta \labn| + |h \Delta \labn| \leq n(2 \nu + \xi) \label{eq:dis_sym_diff_ub}
\end{align}

Furthermore, by \eqref{eq:dis_sym_diff_ub}
\begin{align}
\min_{\lambda} \max_{h : \Delta_h \leq n\xi} \frac{\norm{h_* - h}_{A(\lambda)^{-1}}^2}{(n\xi)^2} & = \min_{\lambda} \max_{h : \Delta_h \leq n\xi} \frac{\sum_{i \in h_* \Delta h} \frac{1}{\lambda_i}}{(n\xi)^2} \nonumber \\
& \leq  \frac{|\{i : i \in h_* \Delta h \text{ for some } h \in \H \text{ s.t. }  \Delta_h \leq n\xi\}| \cdot \max_{h : \Delta_h \leq n \xi} |h_* \Delta h|}{(n\xi)^2}   \nonumber \\
&  \leq  \frac{|\{i : i \in h_* \Delta h \text{ for some } h \in \H \text{ s.t. }  |h_* \Delta h|  \leq n(2 \nu + \xi)\}| \cdot n(2 \nu + \xi)}{(n\xi)^2} . \label{eq:disag_basic_inequality}
\end{align}
where the first inequality takes $\lambda$ to be the uniform distribution over $|\{i : i \in h_* \Delta h \text{ for some } h \in \H \text{ s.t. }  \Delta_h \leq n\xi\}|$.

Suppose $\epsilon \geq \nu$. Then, $\xi \geq \epsilon \geq \nu$, and we have
\begin{align*}
& \frac{|\{i : i \in h_* \Delta h \text{ for some } h \in \H \text{ s.t. }  |h_* \Delta h|  \leq n(2 \nu + \xi)\}| n(2 \nu + \xi)}{(n\xi)^2} \\
& \leq 3\frac{|\{i : i \in h_* \Delta h \text{ for some } h \in \H \text{ s.t. }  |h_* \Delta h|  \leq 3n \xi)\}| }{n\xi} \\
& = 9 \frac{|\{i : i \in h_* \Delta h \text{ for some } h \in \H \text{ s.t. }  |h_* \Delta h|  \leq 3n \xi)\}| }{3n\xi}
\end{align*} 
and, using \eqref{eq:disag_basic_inequality} in addition, 
\begin{align*}
\sup_{\xi \geq \epsilon} \min_{\lambda} \max_{h : \Delta_h \leq n\xi} \frac{\norm{h_* - h}_{A(\lambda)^{-1}}^2}{(n\xi)^2} & \leq \sup_{\xi \geq \epsilon} 9 \frac{|\{i : i \in h_* \Delta h \text{ for some } h \in \H \text{ s.t. }  |h_* \Delta h|  \leq 3n \xi)\}| }{3n\xi} \\
& \leq 9 \dis(3 \epsilon) \\
& \leq 9\dis(\epsilon).
\end{align*}

Now, suppose $ \epsilon  \in [\frac{\Delta_{min}}{n}, \nu]$. Suppose $\xi \geq \epsilon$ satisfies $\xi  \in [\frac{\Delta_{min}}{n}, \nu]$. Then,
\begin{align*}
& \frac{|\{i : i \in h_* \Delta h \text{ for some } h \in \H \text{ s.t. }  |h_* \Delta h|  \leq n(2 \nu + \xi)\}| n(2 \nu + \xi)}{(n\xi)^2} \\
& = \frac{|\{i : i \in h_* \Delta h \text{ for some } h \in \H \text{ s.t. }  |h_* \Delta h|  \leq n(2 \nu + \xi)\}| n(2 \nu + \xi)}{(n\nu)^2} \frac{\nu^2}{\xi^2} \\
& \leq 3\frac{|\{i : i \in h_* \Delta h \text{ for some } h \in \H \text{ s.t. }  |h_* \Delta h|  \leq 3n \nu)\}| }{n\nu} \frac{\nu^2}{\xi^2} \\
& \leq 9 \frac{|\{i : i \in h_* \Delta h \text{ for some } h \in \H \text{ s.t. }  |h_* \Delta h|  \leq 3n \nu)\}| }{3n\nu} \frac{\nu^2}{\xi^2}  \\
& \leq 9 \dis(3 \nu) \frac{\nu^2}{\xi^2} \\
& \leq 9 \dis(\xi) \frac{\nu^2}{\xi^2}.
\end{align*} 
Combining this with \eqref{eq:disag_basic_inequality}, this implies that
\begin{align*}
\sup_{\xi \geq \epsilon} \min_{\lambda} \max_{h : \Delta_h \leq n\xi} \frac{\norm{h_* - h}_{A(\lambda)^{-1}}^2}{(n\xi)^2} & \leq \sup_{\xi \geq \epsilon} \min_{\lambda} \max_{h : \Delta_h \leq n\xi} \frac{\norm{h_* - h}_{A(\lambda)^{-1}}^2}{(n\xi)^2} \\
& \leq 9\dis(\epsilon) \frac{\nu^2}{\epsilon^2}.
\end{align*} 

\textbf{Case 2: Tsabakov Noise}: Suppose Tsabokov's noise condition is satisfied with $a \in [1,\infty)$ and $\alpha \in (0,1]$. Fix $\xi \geq \epsilon$. If $\Delta_h \leq n \xi$, then Tsabokov's noise condition implies that
\begin{align*}
\frac{|\hstar \Delta h| }{n} \leq a (\frac{\Delta_h}{n})^\alpha \leq a \xi^\alpha
\end{align*}

 Then,
\begin{align*}
\min_{\lambda} \max_{h : \Delta_h \leq n\xi} \frac{\norm{\hstar - h}_{A(\lambda)^{-1}}^2}{(n\xi)^2} & = \min_{\lambda} \max_{h : \Delta_h \leq n\xi} \frac{\sum_{i \in \hstar \Delta h} \frac{1}{\lambda_i}}{(n\xi)^2} \\
& \leq  \frac{|\{i : i \in \hstar \Delta h \text{ for some } h \in \H \text{ s.t. }  \Delta_h \leq n\xi\}| \cdot \max_{h : \Delta_h \leq n \xi} |\hstar \Delta h|}{(n\xi)^2}   \\
& \leq  \frac{|\{i : i \in \hstar \Delta h \text{ for some } h \in \H \text{ s.t. }  \frac{|h_* \Delta h|}{n} \leq a \xi^\alpha \}| \cdot \max_{h : \Delta_h \leq n \xi} |\hstar \Delta h|}{(n\xi)^2}   \\
& \leq  \frac{|\{i : i \in \hstar \Delta h \text{ for some } h \in \H \text{ s.t. }  \frac{|h_* \Delta h|}{n} \leq a \xi^\alpha \}| a n \xi^\alpha}{(n\xi)^2}   \\
& =a^2  \frac{|\{i : i \in \hstar \Delta h \text{ for some } h \in \H \text{ s.t. }  \frac{|h_* \Delta h|}{n} \leq a \xi^\alpha \}| }{a \xi^\alpha n} \frac{1}{\xi^{2 -2 \alpha}}   \\
& \leq a^2 \frac{1}{\xi^{2-2 \alpha}} \dis(a \xi^\alpha)
\end{align*}
Taking the sup over $\xi \geq \epsilon$ of both sides implies the result.

\end{proof}

\begin{proof}[Proof of Proposition \ref{prop:disag_coeff_rhostar}]
The argument follows by a peeling argument. Define
\begin{align*}
    \lambda^{(k)} := \min_{\lambda} \max_{h : \Delta_h \in (n 2^{-k-1},n 2^{-k}]} \frac{\norm{\hstar - h}_{A(\lambda)^{-1}}^2}{\Delta_h^2} \\
    \bar{\lambda} := \frac{1}{\lceil \log_2(n \Delta_{\min}^{-1} \vee \epsilon^{-1})\rceil} \sum_{k=0}^{\lceil \log_2(n \Delta_{\min}^{-1} \vee \epsilon^{-1})\rceil} \lambda^{(k)}.
\end{align*}
Notice that $\frac{1}{\lceil \log_2(n \Delta_{\min}^{-1} \vee \epsilon^{-1})\rceil} A(\lambda^{(k)}) \preceq A(\bar{\lambda})$, which implies that 
\begin{align}
    A(\bar{\lambda})^{-1} \preceq \lceil \log_2(n \Delta_{\min}^{-1} \vee \epsilon^{-1})\rceil A(\lambda^{(k)})^{-1}. \label{eq:disag_compare_alloc}
\end{align}
Thus,
\begin{align}
\rho^*(n \epsilon) & = \min_{\lambda} \max_{h \in \H \setminus \{\hstar \}} \frac{\norm{\hstar - h}_{A(\lambda)^{-1}}^2}{\max(n \epsilon, \Delta_h)^2} \nonumber \\ 
& \leq \max_{h \in \H \setminus \{\hstar \}} \frac{\norm{\hstar - h}_{A(\bar{\lambda})^{-1}}^2}{\max(n \epsilon, \Delta_h)^2} \nonumber \\
& = \max_{k=0,1,\dots,\lceil \log_2(n \Delta_{min}^{-1} \vee \epsilon^{-1})\rceil} \max_{h : \Delta_h  \in (n 2^{-k-1},n 2^{-k}]} \frac{\norm{\hstar - h}_{A(\bar{\lambda})^{-1}}^2}{\max(n \epsilon, \Delta_h)^2} \nonumber \\ 
& \leq \lceil \log_2(n \Delta_{\min}^{-1} \vee \epsilon^{-1})\rceil \max_{k=0,1,\dots,\lceil \log_2(n \Delta_{min}^{-1} \vee \epsilon^{-1})\rceil} \max_{h : \Delta_h \in (n 2^{-k-1},n 2^{-k}]} \frac{\norm{\hstar - h}_{A(\lambda^{(k)})^{-1}}^2}{\max(n \epsilon, \Delta_h)^2} \label{eq:pos_def} \\ 
&=  \lceil \log_2(n \Delta_{\min}^{-1} \vee \epsilon^{-1})\rceil \max_{k=0,1,\dots,\lceil \log_2(n \Delta_{min}^{-1} \vee \epsilon^{-1})\rceil} \min_{\lambda} \max_{h : \Delta_h \in (n 2^{-k-1},n 2^{-k}]} \frac{\norm{\hstar - h}_{A(\lambda^{(k)})^{-1}}^2}{\max(n \epsilon, \Delta_h)^2} \label{eq:def_lambda_k} \\ 
& \leq   2 \lceil \log_2(n \Delta_{\min}^{-1} \vee \epsilon^{-1})\rceil \sup_{\xi \geq \epsilon \vee \frac{\Delta_{min}}{n}} \min_{\lambda} \max_{h : \Delta_h \leq n \xi} \frac{\norm{\hstar - h}_{A(\lambda)^{-1}}^2}{(n \xi)^2} \nonumber
\end{align}
where inequality \eqref{eq:pos_def} follows by \eqref{eq:disag_compare_alloc} and \eqref{eq:def_lambda_k} follows by the definition of $\bar{\lambda}^{(k)}$.

% Notice that we have
% \begin{align}
% \rho^*(n \epsilon) & = \min_{\lambda} \max_{h \in \H \setminus \{\hstar \}} \frac{\norm{\hstar - h}_{A(\lambda)^{-1}}^2}{\max(n \epsilon, \Delta_h)^2} \nonumber \\ 
% & = \min_{\lambda} \max_{k=0,1,\dots,\lceil \log_2(n \Delta_{min}^{-1} \wedge \epsilon^{-1})\rceil} \max_{h : \Delta_h \in [n 2^{-k-1}, n 2^{-k}]} \frac{\norm{\hstar - h}_{A(\lambda)^{-1}}^2}{\max(n \epsilon, \Delta_h)^2} \nonumber \\ 
% & \leq  \sum_{k=0}^{\lceil \log_2(n \Delta_{\min}^{-1} \wedge \epsilon^{-1})\rceil} \min_{\lambda} \max_{h : \Delta_h \in [n2^{-k-1}, n 2^{-k}]} \frac{\norm{\hstar - h}_{A(\lambda)^{-1}}^2}{(\Delta_h)^2} \nonumber \\ 
% & \leq  4 \lceil \log_2(n \Delta_{\min}^{-1} \wedge \epsilon^{-1})\rceil \sup_{\xi \geq \epsilon} \min_{\lambda} \max_{h : \Delta_h \leq n \xi} \frac{\norm{\hstar - h}_{A(\lambda)^{-1}}^2}{(n \xi)^2} \label{eq:disagreement_peeled}
% \end{align}
% where the first inequality may require some explanation: for any function $f : \R^n \times \mathbb{N} \rightarrow \R_+$ let $k(\lambda) \in \arg\max_k f(\lambda, k)$. Then $\min_\lambda \max_k f(\lambda, k) = \min_\lambda f(\lambda, k(\lambda)) \leq \sum_k \min_\lambda f(\lambda, k)$.
The result now follows by applying Lemma \ref{lem:disagreement_peeling_lemma} in each case (noiseless $\labn \in \{0,1\}$ with $\epsilon > \nu$ and $\epsilon < \nu$, and Tsabokov noise condition).
\end{proof}

\begin{proof}[Proof of Proposition \ref{prop:disag_coeff_loose}]
\textbf{Step 1: Define the instance.}  Let $m \in \N$. Define $h_i = [m] \cup \{m +i \}$ for $i = 1, \ldots, m^2$ and let $n = m + m^2$. Define $h_0 = \emptyset$. Let $\param_i = -1$ for all $i \in [n]$. Note that $h_0$ is the best set and that $\param^\t (h_0-h_i) = m +1 $ for all $i \neq 0$. 

\textbf{Step 2: Compute problem-dependent quantities.} We have that 
\begin{align*}
\rho^* = \inf_\lambda \max_{i=1, \ldots, m^{2}} \frac{\sum_{j \in [m] \cup \{m+i \} \frac{1}{\lambda_j}}}{(m+1)^2} \leq \frac{2 m^2 + 2m^{2}}{(m+1)^2} \leq O(1)
\end{align*}
where we used
\begin{align*}
\lambda_i & = \begin{cases} 
       \frac{1}{2m} & i \in [m]  \\
      \frac{1}{2m^{2}} & i \in \{m +1, \ldots, m +m^{2} \}
         \end{cases}.
\end{align*}

% Note that
% \begin{align*}
% \err(h_0) & = \frac{1}{n} | \labn \Delta h_0| = 0.
% \end{align*}
% where we used $2\param -1 = \labn$. 
Let $\xi$ such that $n \xi \leq m +1$. Letting $r = m+1$, we have that
\begin{align*}
\dis(\xi) & \geq \frac{|\{i : i \in h_* \Delta h \text{ for some } h \in \H \text{ s.t. } |h_* \Delta h | \leq r\}|}{r} \\
& = \frac{m +m^{2}}{m+1} \\
& \geq \frac{1}{2}	m \\
& \geq \frac{1}{2\sqrt{2}}\sqrt{n}.
\end{align*}

\end{proof}

\section{Ridge IPS Estimator Tail Bound}

Now, we introduce the ridge IPS estimator, which we leverage in constructing our generic chaining estimator. 
\begin{proposition}\label{prop:ips_bound}
Fix $\mc{X} = \{\mb{e}_i : i \in [n]\}$ and $\H \subset \{0,1\}^n$ as well as some $\param \in [-1,1]^n$. Let $v \in \R^n$.  Fix some $\lambda \in \triangle_n$ and draw $\{ x_s \}_{i=1}^t \sim \lambda$ and then observe $y_s$ with mean $ x_s^\t \param$ and $|y_i| \leq 1$ with probability $1$ for $s=1,\ldots, t$.

 For any $\alpha \in \R_+^n$ define
\begin{align*}
    A(\alpha) := \sum_{i=1}^n \alpha_i \mb{e}_i \mb{e}_i^\top
\end{align*}
For some $s > 0$ define
\begin{align*}
    \widehat{\param} &= ( A(t \lambda) + s I)^{-1}  X^\top y .
\end{align*}
and for any $s >0$ let $A(\alpha + s) := A(\alpha) + s I_n$. For some $s > 0$ define
\begin{align*}
    \widehat{\param} &= ( A(t \lambda) + s I)^{-1}  X^\top y \\
\end{align*}
where $X$, $y$, $\epsilon$ are the $\{(x_i,y_i,\epsilon_i)\}_i$ stacked.
If we take $s = \sqrt{ \frac{ \log(2/\delta) }{3 \| v \|_{( n A(\lambda))^{-1}}^2} }$ then 
\begin{align*}
    |\langle v, \widehat{\param} - \param \rangle| &\leq (\sqrt{2/3} + 1) \sqrt{\frac{2 \| v \|_{A(\lambda)^{-1}}^2 \log(2/\delta)}{t} }
\end{align*}
\end{proposition}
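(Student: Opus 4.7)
The approach is a bias--variance decomposition followed by Bernstein's inequality, where the ridge parameter $s$ enforces a bounded-range condition on the summands.

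First I would rewrite the estimator in a form amenable to a scalar concentration inequality. Since the diagonal matrix $A(t\lambda)+sI$ is deterministic,
\[
\langle v, \widehat{\param}\rangle \;=\; \sum_{\ell=1}^t W_\ell, \qquad W_\ell := \frac{v_{I_\ell}\, y_\ell}{t\lambda_{I_\ell}+s},
\]
so $\langle v,\widehat{\param}\rangle$ is a sum of $t$ i.i.d.\ random variables. I would split the error as a stochastic piece $\sum_\ell W_\ell - \E\sum_\ell W_\ell$ plus a deterministic bias
\[
\E\langle v,\widehat{\param}\rangle - \langle v,\param\rangle \;=\; -s\sum_{i=1}^n \frac{v_i \mu_i}{t\lambda_i+s}.
\]

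For the stochastic piece, I would use Bernstein's inequality. The variance satisfies
$\mathrm{Var}(\sum_\ell W_\ell) \le t\sum_i \lambda_i v_i^2/(t\lambda_i+s)^2 \le \|v\|_{A(\lambda)^{-1}}^2/t$, using $|y_\ell|\le 1$ and $t\lambda_i+s\ge t\lambda_i$. For the uniform bound I would invoke AM--GM, $t\lambda_i+s \ge 2\sqrt{st\lambda_i}$, together with $\max_i |v_i|/\sqrt{\lambda_i} \le \|v\|_{A(\lambda)^{-1}}$, giving $|W_\ell| \le \|v\|_{A(\lambda)^{-1}}/(2\sqrt{st})$. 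Bernstein then yields, with probability at least $1-\delta$, a Gaussian term of order $\sqrt{2\|v\|_{A(\lambda)^{-1}}^2 \log(2/\delta)/t}$ plus a subexponential term of order $\|v\|_{A(\lambda)^{-1}}\log(2/\delta)/(\sqrt{st})$. The precise choice $s = \sqrt{\log(2/\delta)/(3\|v\|_{(nA(\lambda))^{-1}}^2)}$ is calibrated so that the subexponential term is dominated by a $\sqrt{2/3}$ multiple of the leading Gaussian term, contributing one of the two summands in the $(\sqrt{2/3}+1)$ prefactor.

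For the bias piece I would invoke $|\mu_i|\le 1$ and Cauchy--Schwarz with carefully chosen weights, writing
\[
\frac{v_i\mu_i}{t\lambda_i+s} \;=\; \frac{v_i}{\sqrt{\lambda_i(t\lambda_i+s)}} \cdot \frac{\mu_i\sqrt{\lambda_i}}{\sqrt{t\lambda_i+s}},
\]
so that one factor is controlled by $\|v\|_{A(\lambda)^{-1}}^2/(ts)$ (after bounding $1/(t\lambda_i+s)\le 1/s$) and the other by the budget constraint $\sum_i \lambda_i \mu_i^2/(t\lambda_i+s) \le 1/s$. Putting these together should yield a bias bound of order $\|v\|_{A(\lambda)^{-1}}\sqrt{s}/\sqrt{t}$ (up to normalizations by $n$); with the choice of $s$ above, this is exactly the other contribution to the prefactor $(\sqrt{2/3}+1)$. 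Combining the stochastic and bias bounds by the triangle inequality gives the claim.

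The main obstacle is the bias bound: a naive Cauchy--Schwarz pairing of $v$ against $\mu$ through $(A(t\lambda)+sI)^{-1}$ introduces a spurious $\sqrt{n}$ factor via $\|\mu\|_2 \le \sqrt{n}$, and the delicate part is to factor the inner product so that the $\mu$-side contribution is controlled by $\sum_i \lambda_i \mu_i^2/(t\lambda_i+s)$ (which is dimension-free) rather than by $\|\mu\|_2^2$. Once this weighted Cauchy--Schwarz is in hand, the rest of the argument is algebra.
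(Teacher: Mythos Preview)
Your bias argument has a gap. With the factoring you wrote, the first squared Cauchy--Schwarz factor is $\sum_i v_i^2/\bigl(\lambda_i(t\lambda_i+s)\bigr)$; bounding $1/(t\lambda_i+s)\le 1/s$ yields $\|v\|_{A(\lambda)^{-1}}^2/s$, not $\|v\|_{A(\lambda)^{-1}}^2/(ts)$---there is simply no factor of $t$ here. Combined with your second-factor bound $\le 1/s$, the resulting bias estimate is $s\cdot\bigl(\|v\|_{A(\lambda)^{-1}}/\sqrt{s}\bigr)\cdot(1/\sqrt{s})=\|v\|_{A(\lambda)^{-1}}$, a constant that does not decay with $t$. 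If instead you try to recover a $1/t$ by using $1/(t\lambda_i+s)\le 1/(t\lambda_i)$ on one of the factors, you either pick up $\sum_i v_i^2/\lambda_i^2$ (not $\|v\|_{A(\lambda)^{-1}}^2$) or reintroduce the $\sqrt n$ through $\sum_i \mu_i^2$; Cauchy--Schwarz alone does not deliver the claimed rate for arbitrary $v\in\R^n$.

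The paper's proof avoids this entirely by using the additional structure $v\in\{-1,0,1\}^n$ (the only case needed in the application, since $v=h-h'$ with $h,h'\in\{0,1\}^n$). Then $|v_i|=v_i^2$, and with $|\mu_i|\le 1$ one obtains directly
\[
\bigl|\,s\,\langle v,(A(t\lambda)+sI)^{-1}\mu\rangle\,\bigr|\;\le\; s\sum_i \frac{|v_i|}{t\lambda_i+s}\;=\; s\sum_i \frac{v_i^2}{t\lambda_i+s}\;\le\;\frac{s}{t}\,\|v\|_{A(\lambda)^{-1}}^2.
\]
The same assumption $|v_i|\le 1$ gives the range bound $|W_\ell|\le 1/s$, so Bernstein's subexponential term is $\log(2/\delta)/(3s)$. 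The prescribed $s$ is chosen to \emph{balance the bias against this subexponential term}: each equals $\sqrt{\|v\|_{A(\lambda)^{-1}}^2\log(2/\delta)/(3t)}$, and their sum furnishes the $\sqrt{2/3}$ part of the prefactor, with the Bernstein Gaussian term supplying the remaining $1$. Your AM--GM range bound, while valid, does not reproduce this constant with the given choice of $s$.
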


\begin{proof}[Proof of Proposition \ref{prop:ips_bound}]
Fix $\mc{X} = \{\mb{e}_i : i \in [n]\}$ and $\H \subset \{0,1\}^n$ as well as some $\param \in [-1,1]^n$.
Note that for any $v \in \{-1,0,1\}^n$ we have
\begin{align*}
    \left| \E[\langle v, \widehat{\param} - \param \rangle ] \right| &= \left| \langle v, ( A(t \lambda) + s I)^{-1} A(t \lambda) \param - \param \rangle \right|\\
    &= \left| \langle v, ( A(t \lambda) + s I)^{-1} (A(t \lambda) + s I - s I) \param - \param \rangle \right|\\
    &=  \left| s \langle v, ( A(t \lambda) + s I)^{-1} \param \rangle \right| \\
    &\leq s \| v \|_{( t A(\lambda) + s I)^{-1}}^2
\end{align*}
using the fact that $\param \in [-1,1]^n$.
Define
\begin{align*}
    S &= \langle v, \widehat{\param} \rangle \\
    &= \sum_{i=1}^n v^\top ( A(t \lambda) + s I)^{-1}  x_i y_i \\
    &=: \sum_{i=1}^n X_i
\end{align*}
Note that 
\begin{align*}
    \sum_{i=1}^n \E[X_i^2] 
    &= \sum_{i=1}^n \E[ \langle v, ( A(t \lambda) + s I)^{-1} x_i y_i \rangle^2 ] \\
    &\leq v^\top  ( A(t \lambda) + s I)^{-1} A(t \lambda) ( A(t \lambda) + s I)^{-1} v\\
    &\leq \| v \|_{( n A(\lambda) + s I)^{-1}}^2
\end{align*}
and 
\begin{align*}
    |X_i| \leq |\langle v, ( A(t \lambda) + s I)^{-1} x_i y_i \rangle| \leq 1/s
\end{align*}
for all $i$, we have by Bernstein's inequality that
\begin{align*}
    \sum_{i=1}^n X_i - \E[X_i] \leq \sqrt{ 2 \| v \|_{( n A(\lambda) + s I)^{-1}}^2 \log(1/\delta)} + \frac{\log(1/\delta)}{3s}.
\end{align*}
Thus,
\begin{align*}
    \langle v, \widehat{\param}\rangle &\leq\E[ \langle v, \widehat{\param}  \rangle ] + \sqrt{ 2 \| v \|_{( n A(\lambda) + s I)^{-1}}^2 \log(1/\delta)} + \frac{\log(1/\delta)}{3s} \\
    &= \langle v, \param \rangle + \E[ \langle v, \widehat{\param} - \param \rangle ] + \sqrt{ 2 \| v \|_{( n A(\lambda) + s I)^{-1}}^2 \log(1/\delta)} + \frac{\log(1/\delta)}{3s} \\
    &\leq \langle v, \param \rangle + s \| v \|_{( n A(\lambda) + s I)^{-1}}^2 + \sqrt{ 2 \| v \|_{( n A(\lambda) + s I)^{-1}}^2 \log(1/\delta)} + \frac{\log(1/\delta)}{3s}
\end{align*}
from which we conclude, that
\begin{align*}
    |\langle v, \widehat{\param} - \param \rangle| &\leq s \| v \|_{( n A(\lambda) + s I)^{-1}}^2 + \frac{\log(2/\delta)}{3 s} + \sqrt{2 \| v \|_{( n A(\lambda) + s I)^{-1}}^2 \log(2/\delta) } \\
    &\leq s \| v \|_{( n A(\lambda))^{-1}}^2 + \frac{\log(2/\delta)}{3 s} + \sqrt{2 \| v \|_{( n A(\lambda))^{-1}}^2 \log(2/\delta) }
\end{align*}

If we take $s = \sqrt{ \frac{ \log(2/\delta) }{3 \| v \|_{( n A(\lambda))^{-1}}^2} }$ then 
\begin{align*}
    |\langle v, \widehat{\param} - \param \rangle| &\leq (\sqrt{2/3} + 1) \sqrt{\frac{2 \| v \|_{A(\lambda)^{-1}}^2 \log(2/\delta)}{n} }
\end{align*}

\end{proof}

\section{Looseness of Bernstein's Bound for Importance Sampling Estimator}

A natural approach to combinatorial bandits is to use the importance sampling estimator and apply Bernstein's bound in an algorithm like RAGE from \cite{fiez2019sequential}. Applying the standard analysis would yield a term in the sample complexity that scales as:
\begin{align*}
\inf_\lambda \max_{h\neq \hstar} \max_{j \in h_* \Delta h } \frac{\frac{1}{\lambda_j}}{\sum_{k \in h_*} \mu_k - \sum_{k \in h} \mu_k }.
\end{align*}
The following proposition shows that there exists instances where such a sample complexity is suboptimal by a polynomial factor in the dimension.

\begin{proposition}\label{prop:bernstein_loose}
There exists a combinatorial bandit problem where $\rho^* = O(1)$ and 
\begin{align*}
\inf_\lambda \max_{h\neq \hstar} \max_{j \in \hstar \Delta h } \frac{\frac{1}{\lambda_j}}{\sum_{k \in h_*} \mu_k - \sum_{k \in h} \mu_k } \geq \Omega(\sqrt{n})
\end{align*}
\end{proposition}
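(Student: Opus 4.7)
The plan is to reuse the same combinatorial bandit instance already constructed in the proof of Proposition~\ref{prop:disag_coeff_loose}, since the upper bound $\rho^* = O(1)$ has already been verified there, and only the lower bound on the Bernstein-style quantity needs to be established.

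Recall the instance: fix $m \in \mathbb{N}$, let $n = m + m^2$, take $h_0 = \emptyset$ and $h_i = [m] \cup \{m+i\}$ for $i = 1,\ldots,m^2$, and let $\mu_j = -1$ for every $j \in [n]$. As shown in the proof of Proposition~\ref{prop:disag_coeff_loose}, $\hstar = h_0$, the gap $\mu^\top(\hstar - h_i) = m+1$ for every $i \geq 1$, and $\rho^* = O(1)$ via the allocation that puts mass $\tfrac{1}{2m}$ on each coordinate in $[m]$ and mass $\tfrac{1}{2m^2}$ on each coordinate in $\{m+1,\ldots,m+m^2\}$.

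It remains to lower bound
\begin{align*}
Q := \inf_\lambda \max_{i \geq 1} \max_{j \in \hstar \Delta h_i} \frac{1/\lambda_j}{\mu^\top(\hstar - h_i)}.
\end{align*}
Since $\hstar = \emptyset$, we have $\hstar \Delta h_i = h_i$ and $\mu^\top(\hstar - h_i) = m+1$, so
\begin{align*}
Q = \frac{1}{m+1} \inf_\lambda \max_{i \geq 1} \max_{j \in h_i} \frac{1}{\lambda_j}.
\end{align*}
The key observation is that $\bigcup_{i=1}^{m^2} h_i = [m] \cup \{m+1,\ldots,m+m^2\} = [n]$, so
\begin{align*}
\max_{i \geq 1} \max_{j \in h_i} \frac{1}{\lambda_j} = \max_{j \in [n]} \frac{1}{\lambda_j} \geq n,
\end{align*}
where the last inequality holds for every $\lambda \in \triangle_n$ (achieved by the uniform distribution). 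Therefore
\begin{align*}
Q \geq \frac{n}{m+1} = \frac{m^2 + m}{m+1} = m \geq \tfrac{1}{\sqrt{2}} \sqrt{n},
\end{align*}
which establishes $Q = \Omega(\sqrt{n})$ while $\rho^* = O(1)$.

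There is no substantive obstacle here: the entire content of the proposition lies in noticing that in the hard instance of Proposition~\ref{prop:disag_coeff_loose}, every coordinate appears in the symmetric difference $\hstar \Delta h_i$ for some $i$, so the $\max_j 1/\lambda_j$ appearing in the Bernstein-style bound cannot benefit from the freedom to concentrate $\lambda$ on a small subset of informative coordinates, whereas the quadratic form $\|\hstar - h\|_{A(\lambda)^{-1}}^2$ used in $\rho^*$ can exploit the fact that each individual $h_i$ only involves $m+1$ coordinates.
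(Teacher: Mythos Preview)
Your proof is correct and takes essentially the same approach as the paper: the identical instance from Proposition~\ref{prop:disag_coeff_loose} is used, $\rho^*=O(1)$ is established with the same allocation, and the lower bound on the Bernstein-style quantity follows from a pigeonhole argument on $\lambda$. The only cosmetic difference is that the paper restricts the inner maximum to the tail coordinates $j=m+i$ (yielding $\max_j 1/\lambda_j \ge m^2$ over those $m^2$ indices), whereas you observe that $\bigcup_i h_i=[n]$ and use $\max_{j\in[n]}1/\lambda_j \ge n$; both routes give the same $\Omega(\sqrt{n})$ bound.
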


\begin{proof}
\textbf{Step 1: Define the instance.}  Let $m \in \N$. Define $h_i = [m] \cup \{m +i \}$ for $i = 1, \ldots, m^2$ and let $n = m + m^2$. Define $h_0 = \emptyset$. Let $\param_i = -1$ for all $i \in [n]$. Note that $\param^\t (h_0-h_i) = m +1 $ for all $i \neq 0$. 

\textbf{Step 2: Compute problem-dependent quantities.} Then,
\begin{align*}
\rho^* = \inf_\lambda \max_{i=1, \ldots, m^{2}} \frac{\sum_{j \in [m] \cup \{m+i \} \frac{1}{\lambda_j}}}{(m+1)^2} \leq \frac{2 m^2 + 2m^{2}}{(m+1)^2} \leq O(1)
\end{align*}
where we used
\begin{align*}
\lambda_i & = \begin{cases} 
       \frac{1}{2m} & i \in [m]  \\
      \frac{1}{2m^{2}} & i \in \{m +1, \ldots, m +m^{2} \}
         \end{cases}.
\end{align*}
On the other hand, 
\begin{align*}
\inf_\lambda \max_{i=1, \ldots, m^{2}} \max_{j \in [m] \cup \{m+i \} } \frac{\frac{1}{\lambda_j}}{m+1} & \geq \inf_\lambda \max_{i=1, \ldots, m^{2}} \max_{j = m+i } \frac{\frac{1}{\lambda_j}}{m+1} \\
& = \frac{m^{2}}{m+1} \\
& \geq \frac{1}{2} m \\
& \geq \frac{1}{2 \sqrt{2}} \sqrt{n} 
\end{align*}
\end{proof}

\section{Proof of Theorem \ref{thm:exists_est_gamma}}

Before proving Theorem \ref{thm:exists_est_gamma}, we introduce some machinery from the theory of generic chaining (see e.g. \cite{vershynin2019high} for more details). Fix a set $T \subset \R^n$. Consider a sequence of subset $(T_k)_{k =0}^\infty$ such that $T_k \subset T$
\begin{align*}
|T_0| = 1, \quad \quad |T_k| \leq 2^{2^k}.
\end{align*}
A sequence $(T_k)_{k =0}^\infty$ satisfying the above properties is called \emph{admissible}.

\begin{definition}
Let $d$ be a metric on $\R^n$. The $\gamma_2$-functional of $T$ is defined as
\begin{align*}
\gamma_2(T,d) & = \inf_{(T_k)} \sup_{t \in T} \sum_{k=0}^\infty 2^{k/2} d(t,T_k)
\end{align*}
where the infimum is taken over all admissible sequences.
\end{definition}

Here, we state and prove Theorem \ref{thm:exists_est_gamma_combinatorial}, the combinatorial bandit counterpart of Theorem \ref{thm:exists_est_gamma}. The proof uses the technique of generic chaining to avoid a naive union bound, which would introduce a dependence on $\log(|\G|)$. Unfortunately, the IPS estimator has an excessively large sub-Gaussian norm, and the concentration inequality for ridge IPS estimator from Proposition \ref{prop:ips_bound} decays at a suitably fast sub-Gaussian rate for only a subset of pairs $h,h^\prime \in \G$--not all pairs--implying that neither of these estimators can be used directly. To sidestep this issue, we apply the estimator from Proposition \ref{prop:ips_bound} to all pairs of $h,h^\prime \in \G$ to construct a feasibility program that yields the estimator in Theorem \ref{thm:exists_est_gamma_combinatorial}.

\begin{theorem}
\label{thm:exists_est_gamma_combinatorial}
Let $\G \subset \H$. Fix some $\lambda \in \triangle_n$ and draw $\{ x_s \}_{i=1}^t \sim \lambda$ and then observe $y_s$ with mean $ x_s^\t \param$ and $|y_s| \leq 1$ with probability $1$ for $s=1,\ldots, t$. There exists an estimator $\widehat{\param} \in [-1,1]^n$ such that we have that with probability at least $1-\delta$,
\begin{align*}
\sup_{h,h^\prime \in \G} |(h-h^\prime)^\t (\widehat{\param}-\param)| \leq  c[\log(1/\delta) \max_{h, h^\prime \in \G} \norm{h-h^\prime}_{A(t\lambda)^{-1}}  +    \E_{\rad \sim N(0,I)}[\sup_{h \in \G} h^\t A(t\lambda)^{-1/2} \rad]]
\end{align*}
\end{theorem}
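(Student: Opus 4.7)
The plan is to prove the theorem via \emph{generic chaining} in the pseudometric $d(h,h') := \|h-h'\|_{A(t\lambda)^{-1}}$, combined with a feasibility-set construction of $\widehat{\param}$. The key conceptual difficulty is that Proposition~\ref{prop:ips_bound} only yields a sub-Gaussian-type tail for a \emph{single} direction $v$, because the ridge parameter $s$ must be tuned to $\|v\|_{(nA(\lambda))^{-1}}$; there is no one ridge IPS estimator that concentrates well in every direction $h-h'$ simultaneously. The fix is to run Proposition~\ref{prop:ips_bound} separately on many directions (enough to form a chaining scaffold) and then take $\widehat{\param}$ to be any vector consistent with all of them.

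The first step would be to invoke Talagrand's majorizing measure theorem to obtain an admissible sequence $(T_k)_{k\geq 0}\subset \G$ with $|T_0|=1$, $|T_k|\leq 2^{2^k}$, satisfying
$$\sup_{h\in \G}\sum_{k\geq 0} 2^{k/2}\, d(h,T_k)\ \lesssim\ \E_{\rad \sim N(0,I)}\Bigl[\sup_{h\in \G} h^{\t} A(t\lambda)^{-1/2}\rad\Bigr].$$
Let $\pi_k(h)$ denote the nearest point of $T_k$ to $h$. For every $k\geq 0$ and every pair $(u,v)\in T_{k+1}\times T_k$ I would apply Proposition~\ref{prop:ips_bound} to the direction $u-v$, with the optimally tuned ridge $s$ and failure probability $\delta_k := \delta/(|T_{k+1}|\,|T_k|\,(k+1)^2)$, producing estimators $\widehat{\param}_{u,v}$. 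A union bound then yields, with probability at least $1-\delta$,
$$|\langle u-v,\ \widehat{\param}_{u,v}-\param\rangle|\ \lesssim\ \|u-v\|_{A(t\lambda)^{-1}}\bigl(2^{k/2}+\sqrt{\log(1/\delta)}\bigr),$$
using $\log(1/\delta_k)\lesssim 2^{k+1}+\log(1/\delta)$. I would then \emph{define} $\widehat{\param}$ to be any element of the random polytope $\mathcal{C}$ consisting of all $\mu\in[-1,1]^n$ satisfying $|\langle u-v,\mu-\widehat{\param}_{u,v}\rangle|\leq c\|u-v\|_{A(t\lambda)^{-1}}(2^{k/2}+\sqrt{\log(1/\delta)})$ for every $k\geq 0$ and every $(u,v)\in T_{k+1}\times T_k$. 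On the good event $\param\in\mathcal{C}$, so $\mathcal{C}$ is non-empty.

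For any $h,h'\in \G$ I would then telescope $h-h' = \sum_{k\geq 0}\bigl[(\pi_{k+1}(h)-\pi_k(h)) - (\pi_{k+1}(h')-\pi_k(h'))\bigr]$ (assuming $\pi_k(h)\to h$, which holds for finite $\G$; otherwise pass to a limit) and apply the triangle inequality together with the defining constraints of $\mathcal{C}$---valid for both $\widehat{\param}$ and $\param$---to each increment. Summing the $2^{k/2}$-pieces reconstructs the $\gamma_2$-functional, which by the first step is controlled by the Gaussian-width term of the theorem. The $\sqrt{\log(1/\delta)}$-pieces are the main obstacle, since $\sum_k d(\pi_{k+1}(h),\pi_k(h))$ does not telescope into $\diam(\G,d)$. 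I would handle this by stopping the chain at level $k^\star \sim \log_2\log(1/\delta)$: for $k>k^\star$ one has $\sqrt{\log(1/\delta)}\leq 2^{k/2}$, so these terms are absorbed into $\gamma_2$; for $k\leq k^\star$ the finitely many summands are crudely bounded by $(k^\star+1)\sqrt{\log(1/\delta)}\cdot\diam(\G,d)\lesssim \log(1/\delta)\cdot\max_{h,h'\in \G}\|h-h'\|_{A(t\lambda)^{-1}}$. This produces the stated bound.
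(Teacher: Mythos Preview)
Your proposal is correct and follows essentially the same strategy as the paper: an admissible sequence in the $\|\cdot\|_{A(t\lambda)^{-1}}$ metric, ridge-IPS estimators tuned per chain link via Proposition~\ref{prop:ips_bound}, a feasibility polytope for $\widehat{\param}$ (nonempty because $\param$ lies in it on the good event), and a telescoping bound along the chain. The only noteworthy difference is in how the diameter term is extracted: the paper re-indexes the chain along a subsequence $k(\bar h,\ell)$ at which $d(\bar h,\T_{k(\bar h,\ell)})$ halves, so the $\sqrt{\log(1/\delta)}$-pieces sum geometrically to $\sqrt{\log(1/\delta)}\cdot\diam(\G,d)$, whereas your $k^\star$-split yields the slightly looser $\log\log(1/\delta)\,\sqrt{\log(1/\delta)}\cdot\diam(\G,d)$---both comfortably within the stated $\log(1/\delta)$ factor.
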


\begin{proof}[Proof of Theorem \ref{thm:exists_est_gamma_combinatorial}]
\textbf{Step 1: Pick the admissible sequence.} Fix $h_0 \in \G$. Note that  for any $\tilde{\param} \in \R^n$
\begin{align}
\sup_{h,h^\prime \in \G} |(h-h^\prime)^\t (\tilde{\param}-\param)| \leq 2\sup_{h \in \G} |(h-h_0)^\t (\tilde{\param}-\param)| \label{eq:focus_h_0}
\end{align}
and thus we focus on upper bounding the RHS.

Let $(\mc{R}_k)_{k=0}^K$ be an admissible sequence of $\G$ wrt the metric $d(x, y) := \norm{x - y}_{A(t \lambda)^{-1}}$ where $\mc{R}_0 = \{h_0\}$ and $\mc{R}_K = \G$ and $\mc{R}_k \subset \G$ such that
\begin{align*}
\sup_{h \in \G} \sum_{k=1}^K 2^{k/2} \inf_{h^\prime \in {\mc{R}}_k} \norm{h-h^\prime}_{A(t\lambda)^{-1}} & \leq 2\gamma(\G,\norm{\cdot}_{A(t\lambda)^{-1}})
\end{align*}
Let $\T_k = \cup_{l=1}^k \mc{R}_l$. Note that $\T_1 \subset \T_2 \subset \ldots \subset \T_K$ and $K = \log(\log(|\G|)) \leq \log(n)$.

\textbf{Step 2: Constructing the estimator.} Let $X$, and $y$ be the $\{(x_s,y_s)\}_{s=1}^t$ stacked, respectively. For each $h,h^\prime \in \G$ and $k \in [K]$ such that $h,h^\prime \in \T_k$, define the estimator  
\begin{align*}
\widehat{\param}_{h,h^\prime,k}  = ( A(t \lambda) + s_{h,h^\prime,k} I)^{-1}  X^\top y
\end{align*}
where $s_{h,h^\prime,k} = \sqrt{1/3} \frac{u + 2^{k/2}}{\norm{h-h^\prime}_{A(t\lambda)^{-1}}}$. 
Define the events
\begin{align}
\mc{E}_{h,h^\prime,k} & = \{ |(h-h^\prime)^\t (\widehat{\param}_{h,h^\prime,k} - \param)| \leq (\sqrt{2/3} +1)\sqrt{4} (u+2^{k/2}) \norm{h-h^\prime}_{A(t\lambda)^{-1}} \} \label{eq:estimator_event_0} \\
\mc{E} & = \cap_{k=1}^K \cap_{h,h^\prime \in \T_k} \mc{E}_{h,h^\prime,k} \label{eq:estimator_event_1}
\end{align}
%where we define universal constant $C > 0$ in the event $\mc{E}_{h,h^\prime,k}$ shortly. 
Applying the bound for the Ridge IPS from Proposition \ref{prop:ips_bound} and rearranging, for every $k \in [K]$ and $h,h^\prime \T_k$,
\begin{align}
\Pr( |(h-h^\prime)^\t (\widehat{\param}_{h,h^\prime,k} - \param)| > (\sqrt{2/3} +1)\sqrt{4} (u+2^{k/2}) \norm{h-h^\prime}_{A(t\lambda)^{-1}}) & \leq 2 \exp(-2 (u +2^{k/2})^2). \label{eq:chaining_ridge_bound}
\end{align}
Then, by the union bound, we have that
\begin{align}
\Pr(\mc{E}^c) & \leq \sum_{k \geq 1} |\T_k|^2 \Pr(\mc{E}_{h,h^\prime,k}^c ) \nonumber \\
& \leq c \sum_{k \geq 1} |\T_k|^2 \exp(-2(2^k + u^2)) \label{eq:chaining_ridge_bound_apply} \\
& \leq c\sum_{k \geq 1}   2^{2^k+1} \exp(-2(2^k + u^2)) \label{eq:chaining_cardinality_bound} \\
& \leq c^\prime \exp(-2 u^2). \nonumber
\end{align}
where \eqref{eq:chaining_ridge_bound_apply} follows by \eqref{eq:chaining_ridge_bound} and where line \eqref{eq:chaining_cardinality_bound} follows since by construction
\begin{align*}
|\T_k| \leq \sum_{l=1}^k |\mc{R}_k| \leq \sum_{l=1}^k 2^{2^l} \leq 2^{2^k+1}.
\end{align*}
Suppose $\mc{E}$ holds for the remainder of the proof with $u$ taking the value of $\bar{u} = \sqrt{\tfrac{\log(c^\prime/\delta)}{2}}$. 

Define the polyhedron
\begin{align*}
\widehat{\param} \in P = \{ z \in [-1,1]^n : \forall k \in [K], \forall h,h^\prime \in \T_k  : \, \, |(h-h^\prime)^\t \widehat{\param}_{h,h^\prime,k} - z)| \leq c [\bar{u}+ 2^{k/2}] \norm{h-h^\prime}_{A(t\lambda)^{-1}} \}.
\end{align*}
We define the estimator $\widehat{\param}$ to be any point in $P$ if it is nonempty and, otherwise we let $\widehat{\param}$ be any point in $\R^n$. 

Note that on the event $\mc{E}$,  $\param \in P$ and hence $P$ is nonempty. Furthermore, by the triangle inequality,  $\forall k \in [K], \forall h,h^\prime \in \T_k$
\begin{align*}
|(h-h^\prime)^\t (\widehat{\param}-\param)| & \leq |(h-h^\prime)^\t (\widehat{\param}-\widehat{\param}_{h,h^\prime,k}) | + |(h-h^\prime)^\t (\widehat{\param}_{h,h^\prime,k} - \param)| \\
& \leq 2c (\bar{u}+ 2^{k/2}) \norm{h-h^\prime}_{A(t\lambda)^{-1}}.
\end{align*}

\textbf{Step 3: Proving the inequality.} Fix $\bar{h} \in \G$. Let $k(\bar{h},l)$ be the smallest integer such that
\begin{align*}
d(\bar{h}, \T_{k(\bar{h},l)}) \leq \frac{d(\bar{h}, \T_{k(\bar{h},l-1)})}{2}.
\end{align*}
Note that
\begin{align*}
d(\bar{h}, \T_{k(\bar{h},l)}) \leq 2^{-l} \max_{h,h^\prime \in \G} \norm{h-h^\prime}_{A(t\lambda)^{-1}}.
\end{align*}

Let $\pi_l(\bar{h}) \in \T_{k(\bar{h},l)}$ such that 
\begin{align*}
\norm{\bar{h} - \pi_l(\bar{h})}_{A(t\lambda)^{-1}} = \min_{h \in \T_{k(\bar{h},l)}} \norm{\bar{h} - h}_{A(t \lambda)^{-1}}.
\end{align*}
By the triangle inequality
\begin{align}
& |(\bar{h}-h_0)^\t (\widehat{\param} -\param)| \leq \sum_{l \geq 1} |(\pi_l(\bar{h}) - \pi_{l-1}(\bar{h}))^\t (\widehat{\param} -\param)| \nonumber \\
& \leq 2c \sum_{l \geq 1} (\bar{u}+2^{k/2}) \norm{\pi_{l}(\bar{h})-\pi_{l-1}(\bar{h})}_{A(t\lambda)^{-1}}  \nonumber \\
& \leq c^\prime \sum_{l \geq 1} (\bar{u}+2^{k/2}) \norm{\pi_{l}(\bar{h})-\bar{h}}_{A(t\lambda)^{-1}}   \label{eq:chaining_main_1}  
%& \leq c^{\prime \prime}[u  \max_{h,h^\prime \in \G} \norm{h-h^\prime}_{A(t\lambda)^{-1}} +  \gamma(\G, \norm{\cdot}_{A(t\lambda)^{-1}})] \\
%& \leq c^{\prime \prime}[u  \max_{h,h^\prime \in \G} \norm{h-h^\prime}_{A(t\lambda)^{-1}} +  \E[\sup_{h \in \G} h^\t A(t\lambda)^{-1/2} \rad]] 
\end{align}
where we use event $\mc{E}$ and the triangle inequality. We have by construction that 
\begin{align}
\sum_{l \geq 1} \bar{u} \norm{\pi_{l}(\bar{h})-\bar{h}}_{A(t\lambda)^{-1}} &  \leq \sum_{l \geq 1} 2^{-l} \max_{h,h^\prime \in \G} \norm{h-h^\prime}_{A(t\lambda)^{-1}} \nonumber \\
& \leq c^\prime  \bar{u}\max_{h,h^\prime \in \G} \norm{h-h^\prime}_{A(t\lambda)^{-1}}.  \label{eq:chaining_main_2}
\end{align}
Furthermore, 
\begin{align}
 \sum_{l \geq 1} 2^{k/2} \norm{\pi_{l}(\bar{h})-\bar{h}}_{A(t\lambda)^{-1}}   & \leq c^{\prime }  \gamma(\G, \norm{\cdot}_{A(t\lambda)^{-1}}) \label{eq:apply_T_k_def} \\
& \leq c^{\prime \prime } \E_{\rad \sim N(0,I)}[\sup_{h \in \G} h^\t A(t\lambda)^{-1/2} \rad]  \label{eq:chaining_main_3}
\end{align}
where \eqref{eq:apply_T_k_def} follows by the definition of $\T_k$ and \eqref{eq:chaining_main_3} follows by Talagrand's majorizing measure theorem (Theorem 8.6.1 in \cite{vershynin2019high}). Putting together \eqref{eq:focus_h_0}, \eqref{eq:chaining_main_1}, \eqref{eq:chaining_main_2}, and \eqref{eq:chaining_main_3}, and noting that $\bar{h}$ is arbitrary, the result follows.

\end{proof}

\begin{remark}
We emphasize that the construction of this estimator does not use any knowledge of $\param$. The admissible sequence $(\mc{R}_k)_{k \in \N}$ does not require knowledge of $\param$ to be chosen and the polyhedron $P$ can be defined without knowledge of $\param$.
\end{remark}

\section{Fixed Confidence Algorithms}

We restate Algorithm \ref{alg:action_chaining} in the language of combinatorial bandits.

\begin{algorithm}[h]
\small

\begin{algorithmic}
\STATE \textbf{Input: } Confidence level $\delta \in (0,1)$.

\STATE $\H_1 \longleftarrow \H$, $k \longleftarrow 1$, $\delta_k \longleftarrow \delta/2k^2$.

\WHILE{$|\H_k| > 1$}

    \STATE Let $\lambda_k$ and $\tau_k$ be the solution and value of the following optimization problem
        \begin{align*}
            &\inf_{\lambda \in \simp_n}  \E_{\rad \sim N(0,I)}[ \max_{h,h^\prime \in \H_k} (h-h^\prime)^\t A(\lambda)^{-1/2} \rad]^2 +2\log(\frac{1}{\delta_k}) \max_{h, h^\prime \in \H_k} \norm{h-h^\prime}^2_{A(\lambda)^{-1}}
        \end{align*}
    
    \STATE Set $N_k \longleftarrow c\tau_k(\frac{2^{k+1}}{n})^2$ where $c$ is a universal constant.
    
    \STATE Query $I_1, \ldots, I_{N_k} \sim \lambda_k$ and receive rewards $y_1, \ldots, y_{N_k}$.
    
    \STATE Let $\widehat{\param}_k $ be the estimator defined in Theorem \ref{thm:exists_est_gamma_combinatorial}.
    
    \STATE $\H_{k+1} \longleftarrow \H_k \setminus \{h \in \H_k : \exists h^\prime \text{ such that } (h^\prime -h)^\t \widehat{\mu}_k - \frac{n}{2^{k+1}} \geq 0 \}$.
    
    \STATE $k \longleftarrow k+1$
\ENDWHILE

\STATE\textbf{Return: } $\H_{k} = \{\widehat{h}\}$.

\end{algorithmic}

\caption{ACED for Combinatorial Bandits.}
\label{alg:action_chaining_combi}
\end{algorithm} 

We now restate Theorem \ref{thm:chaining_ub} as Theorem \ref{thm:chaining_ub_combi} in the combinatorial bandits setting. 

% \begin{theorem}\label{thm:robust_ub_combi}
% Let $\delta \in (0,1)$ and $\epsilon > 0$. With probability at least $1-\delta$ Algorithm \ref{alg:action_robust} returns $h \in \H$ such that $\param^\t\widehat{h}  +\epsilon < \param^\t \hstar$ and uses at most
% \begin{align*}
% \log(1/\epsilon) \log(|\H|/\delta) \rho^*(\epsilon) 
% \end{align*}
% samples.
% \end{theorem}

% T, and we therefore omit it. 

%This sample complexity result is the first that hits the lower bound and does not have an additive term of $d$. We note that simply using the IPS estimator with Bernstein's bound and then applying the analysis of RAGE would lead to a suboptimal bound. 

\begin{theorem}\label{thm:chaining_ub_combi}
Let $\delta \in (0,1)$ and $\epsilon > 0$. With probability at least $1-\delta$ Algorithm \ref{alg:action_chaining} returns $h \in \H$ such that  $\param^\t\widehat{h}  +\epsilon < \param^\t \hstar$ and uses at most
\begin{align*}
\log(1/\epsilon) [\log(1/\delta) \rho^*(\epsilon) + \gamma^*(\epsilon)]
\end{align*}
\end{theorem}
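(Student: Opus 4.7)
The plan is a standard elimination analysis: establish a per-round good event via Theorem~\ref{thm:exists_est_gamma_combinatorial}, verify the elimination is sound and makes enough progress, and match the per-round query budget $N_k$ to $\gamma^*(\epsilon)+\log(1/\delta)\rho^*(\epsilon)$ by plugging the global optimal designs into the program that defines $\tau_k$.

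First, I would define the good event $\mc{E}_k$ at round $k$ to be the success of the chaining estimator of Theorem~\ref{thm:exists_est_gamma_combinatorial} applied to $\G=\H_k$, design $\lambda_k$, $t=N_k$ samples, and confidence $\delta_k=\delta/(2k^2)$. Using $A(N_k\lambda_k)^{-1}=N_k^{-1}A(\lambda_k)^{-1}$ and recognising $\tau_k$ as the value of the design program, the resulting concentration bound collapses to
\[
\sup_{h,h'\in\H_k}\bigl|(h-h')^\t(\widehat{\param}_k-\param)\bigr|\ \lesssim\ \sqrt{\tau_k/N_k}\ \le\ \frac{n}{2^{k+2}},
\]
for $c$ sufficiently large in $N_k=c\tau_k(2^{k+1}/n)^2$. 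A union bound then gives $\P(\cap_k\mc{E}_k)\ge 1-\delta$.

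Second, I would show by induction on $k$ that on $\cap_j\mc{E}_j$: $\hstar\in\H_k$, and every $h\in\H_k$ has $\Delta_h:=(\hstar-h)^\t\param\le n/2^{k-1}$. For any $h'\in\H_k$, $(h'-\hstar)^\t\widehat{\param}_k\le 0+n/2^{k+2}<n/2^{k+1}$, so $\hstar$ is never eliminated; and if $\Delta_h>n/2^k$, taking $h'=\hstar\in\H_k$ gives $(\hstar-h)^\t\widehat{\param}_k\ge\Delta_h-n/2^{k+2}>n/2^{k+1}$, so $h$ is eliminated. Hence, once $k\ge k^\star:=\lceil\log_2(1/\epsilon)\rceil+1$, every surviving hypothesis satisfies $\Delta_h\le n\epsilon$, so the algorithm (terminated either at $|\H_k|=1$ or at $k^\star$) returns an $\epsilon$-good hypothesis.

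Third, I would bound the per-round sample count through
\[
\tau_k\ \le\ c'\,\bigl(n/2^{k-1}\vee n\epsilon\bigr)^2\bigl[\gamma^*(\epsilon)+\log(1/\delta_k)\,\rho^*(\epsilon)\bigr],
\]
by plugging the feasible design $\bar\lambda=\tfrac12(\lambda^{\gamma}+\lambda^{\rho})$ into the program defining $\tau_k$, where $\lambda^\gamma$ and $\lambda^\rho$ achieve $\gamma^*(\epsilon)$ and $\rho^*(\epsilon)$ respectively; since $A(\bar\lambda)^{-1}\preceq 2A(\lambda^\gamma)^{-1}$ and $A(\bar\lambda)^{-1}\preceq 2A(\lambda^\rho)^{-1}$, each term in the objective is controlled by the corresponding global complexity measure. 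I reduce the sup over pairs $(h,h')\in\H_k$ to the sup over $(h,\hstar)$ via the triangle inequality (losing constants), then use the inductive bound $\max_{h\in\H_k}\max(\Delta_h,n\epsilon)\le n/2^{k-1}\vee n\epsilon$ to convert the unnormalized sup appearing in $\tau_k$ into the normalized sup that defines $\gamma^*(\epsilon)$ and $\rho^*(\epsilon)$. Plugging into $N_k=c\tau_k(2^{k+1}/n)^2$ yields $N_k\le c''[\gamma^*(\epsilon)+\log(1/\delta_k)\rho^*(\epsilon)]$ for $k\le k^\star$; summing and using $\log(1/\delta_k)=O(\log(1/\delta)+\log\log(1/\epsilon))$ gives the advertised $\log(1/\epsilon)[\gamma^*(\epsilon)+\log(1/\delta)\rho^*(\epsilon)]$ bound.

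The main obstacle is the third step, specifically the Gaussian-width term of $\tau_k$: one cannot naively pull the $h$-dependent normalization $\max(\Delta_h,n\epsilon)$ outside the supremum, so I expect to handle the transfer by symmetrizing the centered Gaussian process (or by a Slepian-type comparison) combined with the uniform upper bound $\max(\Delta_h,n\epsilon)\le n/2^{k-1}\vee n\epsilon$ over $\H_k$ supplied by the elimination step.
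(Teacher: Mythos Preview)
Your proof sketch is correct and matches the paper's intended argument; the paper itself omits the proof, deferring to Theorem~4 of \cite{katz2020empirical}, but the structure you outline (good event via Theorem~\ref{thm:exists_est_gamma_combinatorial}, inductive survival/progress, plugging the mixed design $\bar\lambda$ into $\tau_k$) is exactly what that reference does, and is also mirrored in Step~3 of the paper's own proof of Theorem~\ref{thm:fixed_budget_combi}. The ``main obstacle'' you flag at the end is not a real obstacle: since $\hstar\in\H_k$, the inner supremum $\sup_{h\in\H_k}(h-\hstar)^\t A(\lambda)^{-1/2}\rad$ is nonnegative, so the maximizing $h$ has nonnegative numerator, and multiplying by $\max(\Delta_h,\epsilon)/M_k\le 1$ only decreases it --- this is precisely the content of the paper's Lemma~\ref{lem:width_scale_set}, and it immediately gives the desired $\tau_k\lesssim M_k^2[\gamma^*(\epsilon)+\log(1/\delta_k)\rho^*(\epsilon)]$ with $M_k=n/2^{k-1}\vee\epsilon$.
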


The proof of Theorem \ref{thm:chaining_ub_combi} is essentially identical to the proof of Theorem 4 in \cite{katz2020empirical}, and we therefore omit it. The key technical technical hurdle in obtaining Theorem \ref{thm:chaining_ub_combi} is the estimator given in Theorem \ref{thm:exists_est_gamma_combinatorial}.

\section{Fixed Budget Proof}

We restate the fixed budget algorithm, Algorithm \ref{alg:fixed_budget_comp}, and the Theorem \ref{thm:fixed_budget}in the language of combinatorial bandits as Algorithm \ref{alg:fixed_budget_comp_combi} and Theorem \ref{thm:fixed_budget_combi}, respectively. 

\begin{algorithm}[h]\small

\begin{algorithmic}
\STATE \textbf{Input:} Budget $T$, tolerance $\epsilon > 0$.
%Let $\widehat{B}$ be the approximate value of the following optimization problem
%\begin{align}
%\inf_{\lambda} \E_{\rad \sim N(0,I)}[ \max_{z \in \H}  z^\t A(\lambda)^{-1/2} \rad]^2 \label{eq:action_comp_1}
%\end{align}
\STATE $\widehat{\param}_1 = \zero \in \R^n$, $N \longleftarrow \floor{T/\log_2(n\epsilon^{-1})}$.

\FOR{$k=1,2,\ldots, \floor{\log_2(n\epsilon^{-1})}$}

    \STATE $\tilde{h}_{k} \longleftarrow \argmax_{h \in \H} \widehat{\param}_{k}^\t h$

    \STATE Let $\lambda_k $ be the solution of the following optimization problem
        \begin{align}
            \inf_{\lambda \in \simp} \E_{\rad \sim N(0,I)}[ \max_{h \in \H} \frac{(\tilde{h}_k- h)^\t A(\lambda)^{-1/2} \rad}{2^{-k+1} n+ \widehat{\param}^\t_{k}(\tilde{h}_k - h) }]^2 \label{eq:action_comp_2_combi}
        \end{align}
    
    \STATE Sample $\{x_1, \ldots, x_{N} \}  \sim \lambda_k$.
    
    \STATE Query $x_1, \ldots, x_{N} $ and receive rewards $y_1, \ldots, y_{N}$. 
    %Let $\widehat{\param}_{k+1} \longleftarrow  A(N \lambda) ^{-1}  X^\top y$ \;
    
    \STATE Let $\widehat{\param}_{k+1}$ be the estimator defined in the proof of Theorem \ref{thm:fixed_budget_combi}.

\ENDFOR

\STATE \textbf{Return:} $\argmax_{h \in \H} \widehat{\param}_{k+1}^\t h$.

\end{algorithmic}
\caption{Fixed Budget ACED for Combinatorial Bandits.}
\label{alg:fixed_budget_comp_combi}
\end{algorithm} 

\begin{theorem}\label{thm:fixed_budget_combi}
Let $T \in \N$ and $\epsilon > 0$. Let $\widehat{h}$ denote the $h \in \H$ returned by Algorithm \ref{alg:fixed_budget_comp}. If $T \geq \log(n \epsilon^{-1}) [\gamma^*(\epsilon) + \rho^*(\epsilon)]$, then 
\begin{align*}
\P(\param^\t\widehat{h}  +\epsilon < \param^\t \hstar) \leq \log(n \epsilon^{-1})  \exp(-\frac{T}{\log(n \epsilon^{-1}) [\gamma^*(\epsilon) + \rho^*(\epsilon)]})
\end{align*}
\end{theorem}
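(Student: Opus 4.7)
The plan is a round-by-round induction analogous to the fixed-confidence proof of Theorem~\ref{thm:chaining_ub_combi} but where the confidence parameter $\delta_k$ is set as a function of the budget rather than the input confidence. Let $K = \lfloor \log_2(n\epsilon^{-1}) \rfloor$, $\Delta_k = 2^{-k+1} n$, and $N = \lfloor T/K \rfloor$. I would prove inductively that, on a good event of probability at least $1 - k\delta^*$ with $\delta^* := \exp(-c'N/[\gamma^*(\epsilon) + \rho^*(\epsilon)])$, the incumbent $\tilde{h}_{k+1}$ is $\Delta_{k+1}$-good, i.e.\ $\param^\top(\hstar - \tilde{h}_{k+1}) \leq \Delta_{k+1}$. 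Since $\Delta_{K+1} \leq \epsilon$, a union bound over the rounds would then yield the theorem.

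The workhorse would be a reweighted chaining bound: I would apply Theorem~\ref{thm:exists_est_gamma_combinatorial} at round $k$ not directly to $\H$ but to the rescaled class $\G_k := \{(\tilde{h}_k - h)/w_h^{(k)} : h \in \H\}$, where $w_h^{(k)} := 2^{-k+1}n + \widehat{\param}_k^\top(\tilde{h}_k - h)$ is the denominator appearing in \eqref{eq:action_comp_2_combi}. The generic-chaining construction in the proof of Theorem~\ref{thm:exists_est_gamma_combinatorial} is purely metric (through $\|\cdot\|_{A(t\lambda)^{-1}}$), so it adapts to classes of real-valued vectors; with $N$ samples from $\lambda_k$ and failure probability $\delta_k$, this produces the event $\mc{E}_k$ that
\[
\sup_{h \in \H}\frac{|(\tilde{h}_k - h)^\top(\widehat{\param}_{k+1} - \param)|}{w_h^{(k)}} \;\lesssim\; \sqrt{\frac{\gamma_k}{N}} + \sqrt{\frac{\rho_k \log(1/\delta_k)}{N}},
\]
where $\gamma_k$ is the value of \eqref{eq:action_comp_2_combi} and $\rho_k$ is the analogous reweighted worst-case squared norm at the chosen $\lambda_k$.

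To close the induction, assume $\bigcap_{j \leq k-1} \mc{E}_j$. First, optimality of $\tilde{h}_k$ for $\widehat{\param}_k$ gives $\widehat{\param}_k^\top(\hstar - \tilde{h}_k) \leq 0$, so $\param^\top(\hstar - \tilde{h}_k) \leq (\param - \widehat{\param}_k)^\top(\hstar - \tilde{h}_k)$; applying $\mc{E}_{k-1}$ at $h = \hstar$ bounds this by a constant times $\Delta_{k-1}/8$, hence by $\Delta_k$. Second, combining this with $\mc{E}_{k-1}$ at arbitrary $h$ shows $\widehat{\param}_k^\top(\tilde{h}_k - h)$ is within a constant factor of $\param^\top(\hstar - h)$ up to an additive $\Delta_k$, so $w_h^{(k)} \asymp \max(\Delta_k, \param^\top(\hstar - h))$. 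Third, substituting this approximation into \eqref{eq:action_comp_2_combi} and plugging in the optimal design $\lambda^*$ achieving $\gamma^*(\epsilon)$ (valid since $\Delta_k \geq \epsilon$ throughout) yields $\gamma_k \lesssim \gamma^*(\epsilon)$ and $\rho_k \lesssim \rho^*(\epsilon)$. Choosing $\delta_k = \delta^*$ makes the right-hand side of the reweighted bound at most $1/4$; taking $h = \hstar$ and $h = \tilde{h}_{k+1}$ in the bound and using the optimality $\widehat{\param}_{k+1}^\top(\tilde{h}_{k+1} - \hstar) \geq 0$ together with the triangle inequality then gives $\param^\top(\hstar - \tilde{h}_{k+1}) \leq \Delta_k/2 = \Delta_{k+1}$.

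The main obstacles I anticipate are (i) justifying the reweighted chaining bound, which requires revisiting the polyhedral-feasibility estimator in the proof of Theorem~\ref{thm:exists_est_gamma_combinatorial} and checking that it operates correctly on classes of real-valued vectors rather than subsets of $[n]$, and (ii) the inductive comparison $w_h^{(k)} \asymp \max(\Delta_k, \param^\top(\hstar - h))$, which is where the additive $\rho^*(\epsilon)$ enters via the previous round's approximation error in replacing $\widehat{\param}_k$ by $\param$ in the denominator of the design objective. Once (i) and (ii) are in place, the remaining arithmetic is routine book-keeping.
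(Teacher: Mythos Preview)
Your overall plan matches the paper's: a round-by-round induction with the good-event probability controlled by the design objective, together with the comparison $w_h^{(k)} \asymp \max(\Delta_k, \Delta_h)$ (this is exactly Lemma~\ref{lem:comp_effic_stat_claim}) and the reduction of the optimized design value to $\gamma^*(\epsilon) + \rho^*(\epsilon)$ (the paper's Step~3). Your sketch of (ii) and of the final bookkeeping is correct.

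The gap is in (i), and it is more serious than you suggest. The chaining estimator of Theorem~\ref{thm:exists_est_gamma_combinatorial} is \emph{not} purely metric: the polyhedron is built from Ridge IPS estimators, and the sub-Gaussian-type tail in Proposition~\ref{prop:ips_bound} rests on the bias bound $|s\langle v, (A(t\lambda)+sI)^{-1}\param\rangle| \le s\|v\|^2_{(tA(\lambda)+sI)^{-1}}$, which in the diagonal case reduces to $\sum_i |v_i|/(t\lambda_i+s) \le \sum_i v_i^2/(t\lambda_i+s)$ and therefore requires $|v_i|\in\{0\}\cup[1,\infty)$. For your rescaled class the chaining increments $v_h - v_{h'}$ are arbitrary real vectors (the weights $w_h^{(k)}$ differ across $h$), so the bias/variance balance that makes Proposition~\ref{prop:ips_bound} sub-Gaussian breaks and the polyhedral construction no longer certifies the claimed ratio bound.

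The paper avoids rescaling altogether. At round $k$ it peels $\H$ into empirical shells $\widehat{S}_l = \{h : \widehat{\param}_k^\top(\tilde{h}_k - h) \le 2^{-l+1}n\}$ for $l = 0,\ldots,k$, shows via the inductive hypothesis that $S_{l+1}\subset\widehat{S}_l\subset S_{l-1}$, and applies the \emph{unrescaled} estimator of Theorem~\ref{thm:exists_est_gamma_combinatorial} to each $\widehat{S}_l$ separately. Within a shell the denominators $w_h^{(k)}$ are all comparable to $2^{-l}n$, so a uniform bound on $\widehat{S}_l$ divided by $2^{-l}n$ yields the desired ratio bound there; Lemma~\ref{lem:width_scale_set} then relates the shell-wise Gaussian width back to the reweighted objective~\eqref{eq:action_comp_2_combi}. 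The shell-wise estimators $\widehat{\param}_{k+1,l}$ are finally merged into a single $\widehat{\param}_{k+1}$ by a \emph{second} polyhedral feasibility program (any point simultaneously close, in each shell's sense, to every $\widehat{\param}_{k+1,l}$); $\param$ itself witnesses feasibility on the good event. This second polyhedron, together with the shell-by-shell application of Theorem~\ref{thm:exists_est_gamma_combinatorial}, is the technical device your proposal is missing.
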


The key technical challenge in the proof (see Step 1) is constructing an estimator that concentrates rapidly enough. To this end, we leverage the estimator in Theorem \ref{thm:exists_est_gamma_combinatorial}, applying it to various subsets of $\H$ based on their estimated gaps and combine them into a single estimator by defining it to belong to a polyhedron (denoted $P$ in step 1.2.2) characterizing estimators with the suitable concentration properties. We argue that on a good event, the true mean $\param$ belongs to $P$, making it feasible and thus obtaining our estimator. The next challenge is bounding the probability of error of our algorithm. The estimator constructed at round $k+1$ is chosen to have failure probability 
\begin{align*}
\delta_{k+1} =  \exp(-c\frac{N}{ \E[\sup_{h \in \H} \frac{(\wt{h}_k-h)^\t A(\lambda)^{-1/2} \rad}{(\wt{h}_k-h)^\t\widehat{\param}_k + 2^{-k+1} n}]^2}),
\end{align*}
for a suitably large universal constant $c>0$, which we note is a function of \eqref{eq:action_comp_2_combi}. Thus, this step of the proof (step 3) shows that $\delta_{k+1} \leq \exp(-\frac{T}{\log(n \epsilon^{-1}) [\gamma^*(\epsilon) + \rho^*(\epsilon)]})$. We note that while this step of the proof and algorithm style are novel to our knowledge, the mechanics of bounding the various quantities appearing in this step are quite similar to arguments in \cite{katz2020empirical}.

\begin{proof}[Proof of Theorem \ref{thm:fixed_budget_combi}]
\textbf{Step 1: Construction of the Estimator.} Define the sets
\begin{align*}
S_k = \{h \in \H : \Delta_h \leq n 2^{-k+1}\}.
\end{align*} 
Let $\widehat{\param}_k$ be some estimator formed from data collected in $k$th round. Define the event 
\begin{align*}
    \mc{E}_k(\widehat{\param}_k) & = \{  \forall h \in S_k^c, \, \, |(h_* -h)^\t(\widehat{\param}_k - \param)| \leq  \frac{\Delta_h}{8} \} \\
    & \cap \{ \forall h \in S_k, \, \,  |(h_* -h)^\t(\widehat{\param}_k - \param)| \leq  \frac{2^{-k+1} n}{8}\}
\end{align*}

% Let $\mc{E}_k(\widehat{\param}_k)$ denote the event that
%  \begin{enumerate}
% \item for all $h \in S_k^c$,
% \begin{align*}
% |(h_* -h)^\t(\widehat{\param}_k - \param)| \leq  \frac{\Delta_h}{8}
% \end{align*}
% \item for all $h \in S_k$,
% \begin{align*}
% |(h_* -h)^\t(\widehat{\param}_k - \param)| \leq  \frac{2^{-k+1} n}{8}.
% \end{align*}
% \end{enumerate}
We show that at every round $k$, we can construct an estimator $\widehat{\param}_k \in [-1,1]^n$ such that $\Pr(\mc{E}_k^c(\widehat{\param}_k) | \mc{E}_{k-1}(\widehat{\param}_{k-1}), \ldots, \mc{E}_1(\widehat{\param}_1)) \leq \delta_k \log(n \epsilon^{-1})$ where $\delta_k > 0$ will be chosen at the $k$th round.

\textbf{Step 1.1: Base Case}

Let $\widehat{\param}_1$ be the estimator from Theorem \ref{thm:exists_est_gamma} applied with $\delta_1$ to $\H$ (to be chosen later). Then, we have that
\begin{align}
\sup_{h,h^\prime \in \H} |(h-h^\prime)^\t (\widehat{\param}_1-\param)| & \leq  c[\log(2/\delta_1) \max_{h, h^\prime \in \H} \norm{h-h^\prime}_{A(N \lambda)^{-1}}  +    \E_{\rad \sim N(0,I)}[\sup_{h \in \H} h^\t A(N \lambda)^{-1/2} \rad]] \nonumber \\
& \leq c[\frac{\pi}{2}\log(2/\delta_1) \E[\sup_{h \in \H} (h-h^\prime)^\t A(N \lambda)^{-1/2} \rad]  +    \E_{\rad \sim N(0,I)}[\sup_{h \in \H} h^\t A(N \lambda)^{-1/2} \rad]] \label{eq:fixed_budget_diam_upper_bound} \\
& \leq c^\prime \sqrt{\log(2/\delta_1)} \E_{\rad \sim N(0,I)}[\sup_{h \in \H} h^\t A(N \lambda)^{-1/2} \rad]. \nonumber
\end{align}
where in line \eqref{eq:fixed_budget_diam_upper_bound} we used Lemma \ref{lem:rho_gamma}. Now, we have that
\begin{align*}
\frac{\sup_{h,h^\prime \in \H} |(h-h^\prime)^\t (\widehat{\param}-\param)|}{n} &  \leq c \sqrt{\log(2/\delta_1)} \E[\frac{\sup_{h \in \H} h^\t A(N \lambda)^{-1/2} \rad]}{n}] \\
& \leq \frac{1}{8}
\end{align*}
where we chose 
\begin{align*}
\delta_1 =  2\exp(-c^{\prime \prime} \frac{N}{ \E[\frac{\sup_{h \in \H} h^\t A( \lambda)^{-1/2} \rad]}{n}]^2})
\end{align*}
for a universal constant $c^{\prime \prime} > 0$ large enough. 
%\emph{Note that $\delta_1$ is computable, assuming infinite computational resources.}
This proves the base case for both $h \in S_1^c$ and $h \in S_1$.

\textbf{Step 1.2: Inductive Step.} Next, we show the inductive step. Suppose that at round $k$, the hypothesis is satisfied, i.e., the algorithm has constructed estimators $\widehat{\param}_1,\ldots, \widehat{\param}_k$ such that $\mc{E}_{k}(\widehat{\param}_{k}) \cap \ldots \cap \mc{E}_1(\widehat{\param}_1)$ holds.  Now, we construct an estimator $\widehat{\param}_{k+1}$ for round $k+1$.  Define for every $l \in [k] \cup \{0\}$, the set
\begin{align*}
\widehat{S}_l = \{h : (\wt{h}_k - h)^\t \widehat{\param}_k \leq 2^{-l+1}n \} .
\end{align*} 

We will construct an estimator each subset $\widehat{S}_l$ and then combine these into a single estimator. $\widehat{S}_l$ can be thought of as an estimate for $S_l$ as suggested by the following claim.

\begin{claim}\label{cl:subset_rel}
$S_{l+1} \subset \widehat{S}_l \subset S_{l-1}$ for all $l \in [k]$.
\end{claim}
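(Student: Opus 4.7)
The plan is to first show that under $\mathcal{E}_k$ the optimistic hypothesis $\widetilde{h}_k$ automatically lies in $S_k$, and then to derive both set inclusions by decomposing $(\widetilde{h}_k - h)^\top \widehat{\mu}_k$ through $h^*$ and invoking the gap-dependent accuracy guarantees of $\mathcal{E}_k$.

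First I would establish $\widetilde{h}_k \in S_k$ by contradiction. Optimality of $\widetilde{h}_k$ with respect to $\widehat{\mu}_k$ gives $(\widetilde{h}_k - h^*)^\top \widehat{\mu}_k \geq 0$, which rearranges to $\Delta_{\widetilde{h}_k} \leq (\widetilde{h}_k - h^*)^\top(\widehat{\mu}_k - \mu)$. If $\widetilde{h}_k$ were in $S_k^c$, the $S_k^c$-branch of $\mathcal{E}_k$ would force the right-hand side to be at most $\Delta_{\widetilde{h}_k}/8$, a contradiction. Hence $\widetilde{h}_k \in S_k$, and the $S_k$-branch of $\mathcal{E}_k$ then yields $0 \leq (\widetilde{h}_k - h^*)^\top \widehat{\mu}_k \leq 2^{-k+1}n/8$.

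Next I would work from the identity
\[
(\widetilde{h}_k - h)^\top \widehat{\mu}_k = (\widetilde{h}_k - h^*)^\top \widehat{\mu}_k + \Delta_h + (h^* - h)^\top(\widehat{\mu}_k - \mu),
\]
where the first summand is controlled by the previous step and the third summand is bounded in absolute value by $\max(\Delta_h,\, 2^{-k+1}n)/8$, depending on which branch of $\mathcal{E}_k$ is active for $h$. For the forward inclusion $S_{l+1} \subset \widehat{S}_l$, I would take $h \in S_{l+1}$ (so $\Delta_h \leq 2^{-l}n$) and plug the three upper bounds into the identity to check that the total is at most $2^{-l+1}n$. For the reverse inclusion $\widehat{S}_l \subset S_{l-1}$, I would take $h \in \widehat{S}_l$ and use the matching lower bound on $(\widetilde{h}_k - h)^\top \widehat{\mu}_k$ to obtain $\Delta_h \leq 2^{-l+1}n + \max(\Delta_h,\, 2^{-k+1}n)/8$, then solve for $\Delta_h$.

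The only step that requires care is constant bookkeeping across the two cases of whether $h \in S_k$. Because $l \in [k]$ gives $2^{-k+1} \leq 2^{-l+1}$, the ``$h \in S_k$'' error $2^{-k+1}n/8$ is always dominated by the geometric target in both directions; when instead $h \notin S_k$, the reverse-direction bound becomes $\Delta_h \leq 2^{-l+1}n + \Delta_h/8$, which rearranges to $\Delta_h \leq \tfrac{8}{7}\cdot 2^{-l+1}n < 2^{-l+2}n$. I do not anticipate any genuine obstacle beyond tracking these constants carefully.
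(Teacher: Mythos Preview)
Your proposal is correct. The decomposition
\[
(\widetilde{h}_k - h)^\top \widehat{\mu}_k = (\widetilde{h}_k - h^*)^\top \widehat{\mu}_k + \Delta_h + (h^* - h)^\top(\widehat{\mu}_k - \mu)
\]
together with the preliminary step $\widetilde{h}_k \in S_k$ yields both inclusions with the constant bookkeeping you outline; the numbers work out exactly as you describe.

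The paper's own argument is structurally the same but packages the key estimate differently: it invokes Lemma~\ref{lem:comp_effic_stat_claim} (imported from \cite{katz2020empirical}), which directly supplies the bounds $|(\widetilde{h}_k-h)^\top\widehat{\mu}_k - \Delta_h| \leq \tfrac{1}{2}\Delta_h$ for $h \in S_k^c$ and $|(\widetilde{h}_k-h)^\top\widehat{\mu}_k - \Delta_h| \leq \tfrac{1}{2}\,2^{-k+1}n$ for $h \in S_k$, and then reads off each inclusion by a one-line rearrangement. Your route is more elementary in that it reproves the relevant content of that lemma inline, working straight from the $1/8$-factors in $\mathcal{E}_k$ rather than the coarser $1/2$-factors delivered by the lemma. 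The payoff of the paper's approach is brevity and modularity (the same lemma is reused elsewhere in the fixed-budget analysis); the payoff of yours is self-containment and slightly sharper intermediate constants, though these do not affect the final statement.
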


\begin{proof}[Proof of Claim \ref{cl:subset_rel}]
Since $\mc{E}_k$ holds, by Lemma \ref{lem:comp_effic_stat_claim}, we have that 
 \begin{enumerate}
\item for all $h \in S_k^c$,
\begin{align}
|(\tilde{h}_k-h)^\t\widehat{\param}_k -(h_*-h)^\t \param | & \leq  \frac{1}{2} \Delta_h.  \label{eq:gap_est_1}
\end{align}
\item for all $h \in S_k$,
\begin{align}
|(\tilde{h}_k-h)^\t\widehat{\param}_k -(h_*-h)^\t \param | & \leq \frac{1}{2} 2^{-k+1} n. \label{eq:gap_est_2}
\end{align}
\end{enumerate}

Suppose $h \in \widehat{S}_l$, that is, $(\tilde{h}_k -h)^\t\widehat{\param}_k \leq 2^{-l+1}n$. We show that $h \in S_{l-1}$. If $h \in S_k$, then we automatically have that $h \in S_{l-1}$ since $l \in [k]$ and $S_k \subset S_{l-1}$. Thus, suppose $h \in S_k^c$. Then, \eqref{eq:gap_est_1} implies that $\Delta_h \leq (\tilde{h}_k-h)^\t\widehat{\param}_k + \tfrac{\Delta_h}{2} \leq 2^{-l+1}n + \tfrac{\Delta_h}{2}$. Rearranging, we have that $\Delta_h \leq 2^{-l+2} n$, implying that $h \in S_{l-1}$. We conclude that $\widehat{S}_l \subset S_{l-1}$. 

%By a similar argument if $h \in S_k \cap \widehat{S}_l$, the $\Delta_h \leq 2^{-l+1} n$. This, implies that $\widehat{S}_l \subset  S_{l-1}$. 

Now, suppose that $h \in S_{l+1}$. If $h \in S_k^c$, then we have that $(\tilde{h}_k -h)^\t\widehat{\param}_k \leq 3/2 \Delta_h \leq 2^{-l+1} n$ and hence $h \in \widehat{S}_l$. Suppose $h \in S_k$. \eqref{eq:gap_est_2} implies that
\begin{align*}
(\tilde{h}_k -h)^\t \widehat{\param}_k \leq \Delta_h + 2^{-k}n \leq 2^{-k+1}n + 2^{-k}n \leq 2^{-k+2}n
\end{align*}
where the second inequality follows since $h \in S_k$. Thus, $h \in \widehat{S}_{k-1}$. Showing that $S_{k+1} \subset \widehat{S}_k$ follows by a similar argument and we conclude that $S_{l+1} \subset \widehat{S}_l$. This shows  the claim.

\end{proof}

\textbf{Step 1.2.1: Constructing the estimator for $\widehat{S}_l$.}  Let $l \in [k] \cup \{0\}$. We use Theorem \ref{thm:exists_est_gamma} to construct an estimator $\widehat{\param}_{k+1,l}(\delta_{k+1})$ for each $\widehat{S}_l $ such that for all $l \in [k] \cup \{0\}$, the event
\begin{align*}
\mc{E}_{k+1,l} = \{\sup_{h,h^\prime \in \widehat{S}_l } |(h-h^\prime)^\t (\widehat{\param}_{k+1,l}(\delta_{k+1})-\param)| & \leq  c[\log(2/\delta_{k+1}) \max_{h, h^\prime \in \widehat{S}_l } \norm{h-h^\prime}_{A(N \lambda)^{-1}}  \\
& +    \E_{\rad \sim N(0,I)}[\sup_{h \in \widehat{S}_l } h^\t A(N\lambda)^{-1/2} \rad]] \}
\end{align*}
holds with probability at least $1-\delta_{k+1}$ (that will be chosen later). We assume that $\cap_{l \in [k] \cup \{0\}} \mc{E}_{k+1,l} $ holds for the remainder of the proof, which by the union bound holds with probability at least $1-\delta \log(n \epsilon^{-1})$.

Fix $l \in [k]$. We have that
\begin{align}
\sup_{h,h^\prime \in S_{l+1} } |(h-h^\prime)^\t (\widehat{\param}_{k+1,l}(\delta_{k+1})-\param)| & \leq \sup_{h,h^\prime \in \widehat{S}_l } |(h-h^\prime)^\t (\widehat{\param}_{k+1,l}(\delta_{k+1})-\param)| \label{eq:budget_induct_1} \\
&  \leq  c[\log(2/\delta_{k+1}) \max_{h, h^\prime \in \widehat{S}_l } \norm{h-h^\prime}_{A(N \lambda)^{-1}}  +    \E_{\rad \sim N(0,I)}[\sup_{h \in \widehat{S}_l } h^\t A(N\lambda)^{-1/2} \rad] \nonumber \\
& \leq c \sqrt{\log(1/\delta_{k+1}) \E[\sup_{h \in \widehat{S}_l } h^\t A(N\lambda)^{-1/2} \rad]^2}  \label{eq:budget_induct_2} \\
& \leq c \sqrt{\log(1/\delta_{k+1}) \E[\sup_{h \in S_{l-1} } h^\t A(N\lambda)^{-1/2} \rad]^2}  \label{eq:budget_induct_3} 
\end{align}
where inequality \eqref{eq:budget_induct_1} follows by $S_{l+1} \subset \widehat{S}_l$ from Claim \ref{cl:subset_rel}, inequality \eqref{eq:budget_induct_2} follows by Lemma \ref{lem:rho_gamma}, and the inequality \eqref{eq:budget_induct_3} follows from $ \widehat{S}_l \subset S_{l-1}$ from Claim \ref{cl:subset_rel}. 

Now, we have that
\begin{align}
\frac{\sup_{h,h^\prime \in S_{l+1} } |(h-h^\prime)^\t (\widehat{\param}_{k+1,l}(\delta_{k+1})-\param)| }{2^{-l }n} & \leq c \sqrt{\log(1/\delta_{k+1}) \E[\sup_{h \in S_{l-1} } \frac{h^\t A(N\lambda)^{-1/2} \rad}{2^{-l} n}]^2} \nonumber \\
& \leq c^\prime \sqrt{\log(1/\delta_{k+1}) \frac{\E[\sup_{h \in S_{l-1}} \frac{(\tilde{h}_k-h)^\t A(\lambda)^{-1/2} \rad}{2^{-l+2} n}]^2}{N}} \nonumber \\
& \leq  c^{\prime \prime} \sqrt{\ \log(1/\delta_{k+1})\frac{\E[\sup_{h \in S_{l-1}} \frac{(\tilde{h}_k-h)^\t A(\lambda)^{-1/2} \rad}{\Delta_{h} + 2^{-k+1} n}]^2}{N}} \label{eq:comp_ub_ind_step_1} \\
& \leq c^{\prime \prime } \sqrt{  \log(1/\delta_{k+1})\frac{\E[\sup_{h \in \H} \frac{(\tilde{h}_k-h)^\t A(\lambda)^{-1/2} \rad}{\Delta_{h} + 2^{-k+1} n}]^2}{N}}  \\
& \leq c^{\prime \prime \prime} \sqrt{   \log(1/\delta_{k+1})\frac{\E[\sup_{h \in \H} \frac{(\tilde{h}_k-h)^\t A(\lambda)^{-1/2} \rad}{(\tilde{h}_k-h)^\t\widehat{\param}_k + 2^{-k+1} n}]^2}{N}} \label{eq:comp_ub_ind_step_2} 
\end{align}
Since $\tilde{h}_k \in S_{l-1}$ by Lemma \ref{lem:comp_effic_stat_claim} and for all $h \in S_{l-1}$, $2^{-l+2}n \geq \Delta_h + 2^{-k+1} n$,, we may apply Lemma \ref{lem:width_scale_set} to obtain line \eqref{eq:comp_ub_ind_step_1}.  \eqref{eq:comp_ub_ind_step_2} follows since we assumed $\mc{E}_k$ holds and Lemma \ref{lem:comp_effic_stat_claim}.

Now, we choose 
\begin{align*}
\delta_{k+1} =  \exp(-c^\prime\frac{N}{ \E[\sup_{h \in \H} \frac{(\wt{h}_k-h)^\t A(\lambda)^{-1/2} \rad}{(\wt{h}_k-h)^\t\widehat{\param}_k + 2^{-k+1} n}]^2})
\end{align*}
for a universal constant $c^\prime > 0$ large enough to guarantee that with probability at least $1-\delta_{k+1}$,
\begin{align*}
\frac{\sup_{h,h^\prime \in S_{l+1} } |(h-h^\prime)^\t (\widehat{\param}_{k+1,l}(\delta_{k+1})-\param)|}{2^{-l} n}  \leq \frac{1}{32}.
\end{align*}

\textbf{Step 1.2.2: Combining the estimators into a single estimator $\widehat{\param}_{k+1}$.}  Now, define the polyhedron based on the estimators $\widehat{\param}_{k+1,l}(\delta_{k+1})$ for $l = 0,1,\ldots, k$:
\begin{align*}
P & = \{ y \in [-1,1]^n : \forall l \in [k] \cup \{0\}, \forall h,h^\prime \in \widehat{S}_l :  \frac{|(h-h^\prime)^\t (\widehat{\param}_{k+1,l}(\delta_{k+1})-y)|}{2^{-l} n}  \leq \frac{1}{32} \}.
\end{align*}
We define the estimator $\widehat{\param}_{k+1}$ as follows: if $P$ is nonempty, then let $\widehat{\param}_{k+1}$ be any point in $P$; otherwise, let $\widehat{\param}_{k+1}$ be any point in $\R^n$. 

On the event $\cap_{l \in [k] \cup \{0\}} \mc{E}_{k+1,l} $ , $P$ is nonempty since $\param \in P$ and, thus, $\widehat{\param}_{k+1} \in P$. 

Let $l \in [k] \cup \{0\}$ and $h,h^\prime \in \widehat{S}_{l}$. By the triangle inequality the event $\cap_{l \in [k] \cup \{0\}} \mc{E}_{k+1,l} $, and $\widehat{\mu}_{k+1} \in P$, we have that
\begin{align*}
\frac{|(h-h^\prime)^\t (\widehat{\param}_{k+1} - \param)|}{2^{-l} n} & \leq \frac{|(h-h^\prime)^\t(\widehat{\param}_{k+1}-\widehat{\param}_{k+1,l})|}{2^{-l} n } +\frac{|(h-h^\prime)^\t(\param-\widehat{\param}_{k+1,l})|}{2^{-l} n } \leq \frac{1}{16}.
\end{align*}

By the union bound, with probability at least $1-\delta_{k+1} \log(n \epsilon^{-1})$, for every $l \in [k] \cup \{0\}$,
\begin{align*}
\sup_{h,h^\prime \in S_{l+1} } \frac{|(h-h^\prime)^\t (\widehat{\param}_{k+1}-\param)| }{2^{-l} n} \leq \sup_{h,h^\prime \in \widehat{S}_l } \frac{|(h-h^\prime)^\t (\widehat{\param}_{k+1}-\param)| }{2^{-l} n} \leq \frac{1}{16}.
\end{align*}
where we used $S_{l+1} \subset \widehat{S}_l$ from Claim \ref{cl:subset_rel}. Furthermore, since $\widehat{\param}_k \in [-1,1]^n$ note that $\widehat{S}_0 = \H$ and so we also have that
\begin{align*}
\sup_{h,h^\prime \in \H } \frac{|(h-h^\prime)^\t (\widehat{\param}_{k+1}-\param)| }{ n} \leq \frac{1}{16}.
\end{align*}
Thus, we have shown that for all $l \in [k] \cup \{0\}$,
\begin{align}
\sup_{h,h^\prime \in S_{l} } \frac{|(h-h^\prime)^\t (\widehat{\param}_{k+1}-\param)| }{2^{-l} n} \leq  \frac{1}{16}. \label{eq:fixed_main_inequality}
\end{align}

Now, we are ready to finish the inductive step. Let $h \in S_{k+1}^c$. Let $j$ be the largest integer such that $h \in S_j$. Then, $2^{-j}n \leq \Delta_h \leq 2^{-j+1} n$. Then, the inequality \eqref{eq:fixed_main_inequality} implies that
\begin{align}
\frac{|(\hstar-h)^\t (\widehat{\param}_{k+1} - \param)|}{\Delta_h} & \leq  \frac{|(\hstar-h)^\t (\widehat{\param}_{k+1} - \param)|}{2^{-j} n} \\
& \leq  \sup_{h,h^\prime \in S_{j} } \frac{|(h^\prime-h)^\t (\widehat{\param}_{k+1} - \param)|}{2^{-j+1} n} \\
& \leq \frac{1}{16}. \label{eq:ind_step_gap_result_1}
\end{align}
A similar argument shows that if $h \in S_{k+1}$,
\begin{align}
|(\hstar -h)^\t(\widehat{\param}_{k+1} - \param)| \leq  \frac{2^{-k} n}{16}. \label{eq:ind_step_gap_result_2}
\end{align}

\textbf{Step 2: Correctness} Consider the final round $\bar{k} = \log(n \epsilon^{-1})$ and let $h$ such that $\Delta_h \geq \epsilon$. Then, by the previous step, we have that
\begin{align*}
|(h_* -h)^\t(\widehat{\param}_k - \param)| \leq  \frac{\Delta_h}{8}.
\end{align*}
Therefore, $(\hstar - h)^\t \widehat{\param}_{\bar{k}} \geq \frac{7}{8} \Delta_h > 0$. Thus, $h$ cannot be the empirical maximizer in the final round and the algorithm outputs $\bar{h} \in \H$ such that $\param^\t \bar{h} \geq \param^\t \hstar - \epsilon$.

\textbf{Step 3: Bounding the probability of error} Now, we need to bound the probability of error. We bound $\delta_k$ for all $k$. This argument is quite similar to the one given in \cite{katz2020empirical}. Fix a round $k$.
Lemma \ref{lem:comp_effic_stat_claim} yields
\begin{align}
\E_{\rad \sim N(0,I)}[ \max_{h \in \H} \frac{(\tilde{h}_k-h)^\t A(\lambda)^{-1/2} \rad}{2^{-k+1} n + \widehat{\param}^\t_{k+1}(\tilde{h}_k - h) }]^2 & \leq c \E_{\rad \sim N(0,I)}[ \max_{h \in \H} \frac{(\tilde{h}_k-h)^\t A(\lambda)^{-1/2} \rad}{2^{-k+1} n + \Delta_h }]^2 \\
& \leq c^\prime [\E_{\rad \sim N(0,I)}[ \max_{h \in \H} \frac{(h_*-h)^\t A(\lambda)^{-1/2} \rad}{2^{-k+1} n + \Delta_h }]^2 \\
& + \E_{\rad \sim N(0,I)}[ \max_{h \in \H} \frac{(h_*-\tilde{h}_k)^\t A(\lambda)^{-1/2} \rad}{2^{-k+1} n + \Delta_h }]^2] \label{eq:bound_prob_main}
\end{align}
We start by bounding the first term. Fix $h_0 \in \H \setminus \{h_*\}$. 
\begin{align}
\E_{\rad \sim N(0,I)}[& \max_{h \in \H} \frac{(h_*-h)^\t A(\lambda)^{-1/2} \rad}{2^{-k+1} n + \Delta_h }]^2 \nonumber \\
& \leq \E_{\rad \sim N(0,I)}[ \max_{h \in \H \setminus \{h_* \}} |\frac{(h_*-h)^\t A(\lambda)^{-1/2} \rad}{2^{-k+1} n + \Delta_h }|]^2 \nonumber \\
& \leq 8\E_{\rad \sim N(0,I)}[ \max_{h \in \H \setminus \{h_* \}} \frac{(h_*-h)^\t A(\lambda)^{-1/2} \rad}{2^{-k+1} n + \Delta_h }]^2 +  8\frac{\norm{h_*- h_0}_{A(\lambda)^{-1}}^2}{(2^{-k+1} n + \Delta_{h_0})^2 } \label{eq:comp_ub_samp_comp_2} \\
& \leq 8[\E_{\rad \sim N(0,I)}[ \max_{h \in \H \setminus \{h_* \}} \frac{(h_*-h)^\t A(\lambda)^{-1/2} \rad}{ \max(\epsilon,\Delta_h) }]^2 \nonumber \\
& +  \max_{h \neq h_*} \frac{\norm{h_*- h}_{A(\lambda)^{-1}}^2}{ \max(\epsilon,\Delta_h)^2 }] \label{eq:comp_ub_final_1}
\end{align}
where line \eqref{eq:comp_ub_samp_comp_2} is the consequence of exercise 7.6.9 in \citep{vershynin2019high}. 

Now, we turn to the second term. We have that
\begin{align}
\E_{\rad \sim N(0,I)}[ \max_{h \in \H} \frac{(h_*-\tilde{h}_k)^\t A(\lambda)^{-1/2} \rad}{2^{-k+1} n + \Delta_h }]^2 & \leq \E_{\rad \sim N(0,I)}[ \max(\frac{(h_*-\tilde{h}_k)^\t A(\lambda)^{-1/2} \rad}{2^{-k+1} n  },0)]^2 \nonumber \\
& \leq c \frac{\norm{h_*-\tilde{h}_k}_{A(\lambda)^{-1}}^2}{(2^{-k+1} n)^2} \nonumber \\
& \leq c^\prime \frac{\norm{h_*-\tilde{h}_k}_{A(\lambda)^{-1}}^2}{\max(\epsilon,\Delta_{\tilde{h}_k})^2} \label{eq:comp_ub_samp_comp_3} \\
& \leq c^\prime \max_{h \in \H \setminus \{h_*\}}  \frac{\norm{h_*-h}_{A(\lambda)^{-1}}^2}{\max(\epsilon, \Delta_{h})^2} \label{eq:comp_ub_final_2}
\end{align}
where we obtain line \eqref{eq:comp_ub_samp_comp_3} since $\tilde{h}_k \in S_{k+2}$ by Lemma \ref{lem:comp_effic_stat_claim}. 

Plugging in the design used at round $k$, $\lambda_{k}$, we have that
\begin{align}
\E_{\rad \sim N(0,I)}[ \max_{h \in \H} \frac{(\tilde{h}_k-h)^\t A(\lambda_k)^{-1/2} \rad}{2^{-k+1} n + \widehat{\param}^\t_{k+1}(\tilde{h}_k - h) }]^2 & = \min_{\lambda} \E_{\rad \sim N(0,I)}[ \max_{h \in \H} \frac{(\tilde{h}_k-h)^\t A(\lambda)^{-1/2} \rad}{2^{-k+1} n + \widehat{\param}^\t_{k+1}(\tilde{h}_k - h) }]^2 \label{eq:def_lambda_fb}\\
& \leq c \min_{\lambda} [\rho^*(\epsilon;\lambda) + \gamma^*(\epsilon;\lambda)] \label{eq:ub_objective_fb} \\
& \leq c^\prime [\rho^*(\epsilon) + \gamma^*(\epsilon)] \label{eq:ub_relate_to_gamma_rho}
\end{align}
where 
\begin{align*}
\rho^*(\epsilon;\lambda) & :=  \sup_{h \in \H \setminus \{\hstar \}} \frac{\norm{\hstar-h}^2_{A(\lambda)^{-1}}}{ \max(\param^\t (\hstar-h),\epsilon)^2} \\
\gamma^*(\epsilon;\lambda) & :=  \E_{\rad \sim N(0,I)}[ \sup_{h \in \H \setminus \{\hstar \}} \frac{(\hstar-h)^\t A(\lambda)^{-1/2} \rad}{ \max(\param^\t(\hstar-h ),\epsilon)}]^2.
\end{align*}
and \eqref{eq:def_lambda_fb} follows by definition of $\lambda_k$, \eqref{eq:ub_objective_fb} follows by \eqref{eq:bound_prob_main}, \eqref{eq:comp_ub_final_1}, and \eqref{eq:comp_ub_final_2}, and \eqref{eq:ub_relate_to_gamma_rho} follows by Lemma 13 of \cite{katz2020empirical}.

%\begin{align*}
% \E[\sup_{h \in \H} \frac{(\wt{h}_k-h)^\t A(\lambda)^{-1/2} \rad}{(\wt{h}_k-h)^\t\widehat{\param}_k + 2^{-k+1} n}]^2 \leq c[\gamma^*(\epsilon) + \rho^*(\epsilon)].
%\end{align*}
Putting it together, for all $k$
\begin{align*}
\delta_{k+1} & = \exp(-c^\prime\frac{N}{ \E[\sup_{h \in \H} \frac{(\wt{h}_k-h)^\t A(\lambda)^{-1/2} \rad}{(\wt{h}_k-h)^\t\widehat{\param}_k + 2^{-k+1} n}]^2}) \\
& \leq \exp(-c^{\prime \prime} \frac{N}{ [\gamma^*(\epsilon) + \rho^*(\epsilon)]}) \\
& \leq \exp(-c^{\prime \prime \prime}  \frac{T}{ \log(n \epsilon^{-1}) [\gamma^*(\epsilon) + \rho^*(\epsilon)]}) 
\end{align*}
This completes the proof.

 \end{proof}
 
 \begin{remark}
 We stress that the construction of the estimators in the proof of Theorem \ref{thm:fixed_budget_combi} is based solely on data observed by the algorithm and $\H$ and at no point is knowledge of $\param$ used. 
 \end{remark}

\subsection{Technical Lemmas}

The proof of Theorem \ref{thm:fixed_budget_combi} uses the following Lemmas, which appeared originally as Lemma 11, Lemma 1, and Lemma 13 in \cite{katz2020empirical}.

\begin{lemma}
\label{lem:rho_gamma}
Let $\G \subset \H$. Then,
\begin{align*}
\E_{\rad \sim N(0,I)}[ \max_{h,h^\prime \in \H} [A(\lambda)^{-1/2}(h-h^\prime)]^\t  \rad]^2 \geq \frac{2}{\pi} \max_{h,h^\prime \in \H} \norm{h-h^\prime}_{A(\lambda)^{-1}}^2.
\end{align*}
\end{lemma}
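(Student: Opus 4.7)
\medskip
\noindent\textbf{Proof plan for Lemma \ref{lem:rho_gamma}.}
The idea is to lower-bound the supremum on the left-hand side by a single, well-chosen Gaussian linear functional and then compute its mean absolute value in closed form. First I would let $(\bar h, \bar h')\in \H\times\H$ be a pair that attains $\max_{h,h^\prime\in\H}\|h-h^\prime\|^2_{A(\lambda)^{-1}}$ (such a pair exists whenever $\H$ is finite, as is the case in the combinatorial-bandit reduction we use). Write $v := A(\lambda)^{-1/2}(\bar h - \bar h')$ so that $\|v\|_2^2 = \|\bar h - \bar h'\|_{A(\lambda)^{-1}}^2$.

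Next I would exploit the symmetry of the pair $(h,h')\mapsto (h',h)$ inside the supremum: because both orderings are feasible, for every realization of $\rad$,
\begin{align*}
\max_{h,h^\prime\in\H} \bigl[A(\lambda)^{-1/2}(h-h^\prime)\bigr]^\t \rad
&\geq \max\bigl\{\langle v,\rad\rangle,\, -\langle v,\rad\rangle\bigr\}
= |\langle v,\rad\rangle|.
\end{align*}
Taking expectations over $\rad\sim N(0,I)$ preserves this inequality, giving
\begin{align*}
\E_{\rad}\Bigl[\max_{h,h^\prime\in\H}\bigl[A(\lambda)^{-1/2}(h-h^\prime)\bigr]^\t \rad\Bigr]
\geq \E_{\rad}|\langle v,\rad\rangle|.
\end{align*}

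Since $\langle v,\rad\rangle\sim N(0,\|v\|_2^2)$, the standard half-normal identity $\E|Z|=\sigma\sqrt{2/\pi}$ for $Z\sim N(0,\sigma^2)$ yields $\E|\langle v,\rad\rangle| = \|v\|_2\sqrt{2/\pi}$. Squaring both sides of the resulting inequality (both quantities are nonnegative, so the inequality is preserved) gives
\begin{align*}
\E_{\rad}\Bigl[\max_{h,h^\prime\in\H}\bigl[A(\lambda)^{-1/2}(h-h^\prime)\bigr]^\t \rad\Bigr]^2
\geq \tfrac{2}{\pi}\|v\|_2^2
= \tfrac{2}{\pi}\max_{h,h^\prime\in\H}\|h-h^\prime\|_{A(\lambda)^{-1}}^2,
\end{align*}
which is the claimed bound. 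There is no real technical obstacle here; the only points to be careful about are (i) interpreting the notation $\E[\cdot]^2$ as $(\E[\cdot])^2$ so that squaring a nonnegative inequality is legitimate, and (ii) using the symmetry over ordered pairs to pass from a signed supremum to an absolute value, which is what allows the half-normal identity to be invoked without losing a factor.
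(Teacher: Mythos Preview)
Your proof is correct. The paper does not actually prove this lemma here; it cites it as Lemma~11 of \cite{katz2020empirical} without reproducing the argument. The approach you give --- restrict to the diameter-attaining pair $(\bar h,\bar h')$, use the symmetry $(h,h')\mapsto(h',h)$ to pass to $|\langle v,\rad\rangle|$, invoke the half-normal mean $\E|Z|=\sigma\sqrt{2/\pi}$, and square --- is exactly the standard one-line argument behind this inequality, so there is nothing to compare beyond noting that you have supplied the proof the paper omits.
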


\begin{lemma}
\label{lem:comp_effic_stat_claim}
Let $k \geq 1$. Consider the $k$th round of Algorithm \ref{alg:fixed_budget_comp_combi}. Suppose that
 \begin{itemize}
\item if $h \in S_k^c$,
\begin{align}
|(h_* -h)^\t(\widehat{\param}_k - \param)| \leq  \frac{\Delta_h}{8} \label{eq:comp_effic_claim_hyp_1}
\end{align}
\item if $h \in S_k$,
\begin{align}
|(h_* -h)^\t(\widehat{\param}_k - \param)| \leq  \frac{2^{-k+1} n}{8}. \label{eq:comp_effic_claim_hyp_2}
\end{align}
\end{itemize}
Then, the following hold:
\begin{enumerate}
\item 
\begin{align}
\tilde{h}_k  \in S_{k+2}, \label{eq:comp_effic_claim_result_1}
\end{align}
\item if $h \in S_k^c$
\begin{align}
|(\tilde{h}_k-h)^\t\widehat{\param}_k -(h_*-h)^\t \param | & \leq  \frac{1}{2} \Delta_h. \label{eq:comp_effic_claim_result_2}
\end{align}
\item if $h \in S_k$,
\begin{align}
|(\tilde{h}_k-h)^\t\widehat{\param}_k -(h_*-h)^\t \param | & \leq \frac{1}{2} 2^{-k+1} n. \label{eq:comp_effic_claim_result_3}
\end{align}
\item There exist universal constants $c,c^\prime>0$ such that
\begin{align*}
c \E[\sup_{h \in \H} \frac{(\tilde{h}_k-h)^\t A(\lambda)^{-1/2} \rad}{\Delta_{h} + 2^{-k+1} n}]^2 & \leq \E[\sup_{h \in \H} \frac{(\tilde{h}_k-h)^\t A(\lambda)^{-1/2} \rad}{(\tilde{h}_k-h)^\t\widehat{\param}_k + 2^{-k+1} n}]^2 \\
& \leq c^\prime \E[\sup_{h \in \H} \frac{(\tilde{h}_k-h)^\t A(\lambda)^{-1/2} \rad}{\Delta_{h} + 2^{-k+1} n}]^2
\end{align*}
\end{enumerate}
\end{lemma}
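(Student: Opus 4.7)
The plan is to treat the four parts in order, since each conclusion leans on the previous ones. The only ingredients are the accuracy hypotheses \eqref{eq:comp_effic_claim_hyp_1}--\eqref{eq:comp_effic_claim_hyp_2} on $\widehat{\param}_k$ and the defining property $\tilde{h}_k = \argmax_{h \in \H} \widehat{\param}_k^\t h$, which rearranges to $(\tilde{h}_k - h_*)^\t \widehat{\param}_k \geq 0$.

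For Part 1, I would argue by contradiction that $\tilde{h}_k \notin S_k^c$: if it were, the hypothesis would give $(\tilde{h}_k - h_*)^\t \widehat{\param}_k \leq -\Delta_{\tilde{h}_k} + \Delta_{\tilde{h}_k}/8 = -7\Delta_{\tilde{h}_k}/8 < 0$, contradicting optimality. Hence $\tilde{h}_k \in S_k$, and applying the $S_k$ hypothesis to $\tilde{h}_k$ yields $(\tilde{h}_k - h_*)^\t \widehat{\param}_k \leq -\Delta_{\tilde{h}_k} + 2^{-k+1}n/8$. Combining with $(\tilde{h}_k - h_*)^\t \widehat{\param}_k \geq 0$ forces $\Delta_{\tilde{h}_k} \leq 2^{-k+1}n/8 \leq 2^{-k-1}n$, which places $\tilde{h}_k \in S_{k+2}$ and simultaneously pins down $|(\tilde{h}_k - h_*)^\t \widehat{\param}_k| \leq 2^{-k+1}n/8$.

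For Parts 2 and 3, I would decompose
\begin{align*}
(\tilde{h}_k - h)^\t \widehat{\param}_k - (h_* - h)^\t \param \;=\; (\tilde{h}_k - h_*)^\t \widehat{\param}_k \;+\; (h_* - h)^\t (\widehat{\param}_k - \param)
\end{align*}
and bound each summand separately. The first summand is at most $2^{-k+1}n/8$ in absolute value by Part 1. The second summand is at most $\Delta_h/8$ if $h \in S_k^c$ and at most $2^{-k+1}n/8$ if $h \in S_k$, directly by the hypothesis. Summing and using $\Delta_h > 2^{-k+1}n$ in the $S_k^c$ case yields Part 2 (total slack $\Delta_h/4 \leq \Delta_h/2$), and summing in the $S_k$ case yields Part 3 (total slack $2^{-k+1}n/4 \leq 2^{-k+1}n/2$).

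For Part 4, Parts 2 and 3 together say that in both regimes $|(\tilde{h}_k - h)^\t \widehat{\param}_k - \Delta_h| \leq \tfrac{1}{4}\max(\Delta_h, 2^{-k+1}n)$. Therefore, for every $h \in \H$,
\begin{align*}
(\tilde{h}_k - h)^\t \widehat{\param}_k + 2^{-k+1}n \in \bigl[\tfrac{3}{4}(\Delta_h + 2^{-k+1}n),\; \tfrac{5}{4}(\Delta_h + 2^{-k+1}n)\bigr],
\end{align*}
so the integrand in the middle expectation lies pointwise within a constant factor of the integrand in the outer expectations; squaring and taking expectations yields the two-sided bound with $c = (4/5)^2$ and $c' = (4/3)^2$. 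The main obstacle is purely the case analysis $S_k$ vs.\ $S_k^c$ plus observing that the extra cross term $(\tilde{h}_k - h_*)^\t \widehat{\param}_k$ contributes only a harmless $O(2^{-k+1}n)$ slack; no concentration argument is needed since the conclusion is a deterministic consequence of the hypothesized accuracy.
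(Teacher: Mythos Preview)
Your argument is correct in all four parts. The paper itself does not prove this lemma; it imports the result verbatim from \cite{katz2020empirical} (stated there as Lemma 1), so there is no in-paper proof to compare against. Your approach---deducing $\tilde{h}_k \in S_k$ by contradiction, then extracting $\Delta_{\tilde{h}_k} \leq 2^{-k+1}n/8$ from optimality, decomposing $(\tilde{h}_k-h)^\t\widehat{\param}_k - (h_*-h)^\t\param$ into the cross term $(\tilde{h}_k-h_*)^\t\widehat{\param}_k$ plus the estimation error $(h_*-h)^\t(\widehat{\param}_k-\param)$, and finally sandwiching the two denominators---is exactly the natural route and matches what one finds in the cited source.

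One small remark on Part 4: your phrase ``the integrand in the middle expectation lies pointwise within a constant factor'' is slightly imprecise, since for fixed $\rad$ the ratio $\text{num}(h)/d(h)$ can be negative for some $h$ and the constant-factor comparison flips direction there. What actually makes the sup comparison go through is that $\sup_{h\in\H}$ is always nonnegative (take $h=\tilde{h}_k$, which gives numerator $0$), so the maximizer has nonnegative numerator and the sandwich $\tfrac{3}{4}(\Delta_h+2^{-k+1}n) \leq (\tilde{h}_k-h)^\t\widehat{\param}_k + 2^{-k+1}n \leq \tfrac{5}{4}(\Delta_h+2^{-k+1}n)$ transfers directly to the sup. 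Your stated constants $c=(4/5)^2$, $c'=(4/3)^2$ are then correct. This is a phrasing issue only; the mathematics is sound.
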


\begin{lemma}
\label{lem:width_scale_set}
Let $V = \{v_1, \ldots, v_l \} \subset \R^d$
with $0 \in V$. Suppose $a_i \geq 1$ for all $i \in [l]$. Then,
\begin{align*}
\E_{\rad \sim N(0,I)}[ \sup_{v_i \in V} v_i^\t \rad] \leq \E_{\rad \sim N(0,I)} [\sup_{v_i \in V} a_i v_i^\t \rad]
\end{align*}
\end{lemma}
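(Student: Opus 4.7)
The plan is to prove the inequality pointwise in $\rad$ and then take expectations. The only nontrivial ingredient is the assumption $0 \in V$, which guarantees that the supremum on the left-hand side is always non-negative; this is exactly what converts the scaling constraint $a_i \geq 1$ into a useful inequality.

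Fix an arbitrary realization of $\rad$. Let $i^{\ast} \in \arg\max_{i \in [l]} v_i^{\t} \rad$ so that $v_{i^{\ast}}^{\t} \rad = \sup_{v_i \in V} v_i^{\t} \rad$. Since $0 \in V$, we have
\[
v_{i^{\ast}}^{\t} \rad \;=\; \sup_{v_i \in V} v_i^{\t} \rad \;\geq\; 0^{\t} \rad \;=\; 0.
\]
Combining this non-negativity with $a_{i^{\ast}} \geq 1$ yields $a_{i^{\ast}} v_{i^{\ast}}^{\t} \rad \geq v_{i^{\ast}}^{\t} \rad$. Hence
\[
\sup_{v_i \in V} a_i v_i^{\t} \rad \;\geq\; a_{i^{\ast}} v_{i^{\ast}}^{\t} \rad \;\geq\; v_{i^{\ast}}^{\t} \rad \;=\; \sup_{v_i \in V} v_i^{\t} \rad.
\]

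Since this inequality holds for every realization of $\rad$, taking expectations over $\rad \sim N(0,I)$ preserves it, producing the claimed bound. There is no real obstacle here: the argument is a one-line pointwise comparison once one notices that $0 \in V$ forces $\sup_{v_i \in V} v_i^{\t} \rad \geq 0$. If the hypothesis $0 \in V$ were dropped, the statement would generally fail (e.g., a single $v$ with $v^{\t}\rad < 0$ could be made more negative by scaling), so I would highlight this to make clear where the hypothesis is essential.
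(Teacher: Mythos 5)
Your proof is correct. Note that the paper does not actually prove this lemma itself---it is imported as Lemma~13 of \cite{katz2020empirical}---so there is no internal argument to compare against; your pointwise reasoning (the maximizing index has a nonnegative inner product because $0 \in V$, so multiplying it by $a_{i^\ast} \geq 1$ can only increase it, and the scaled supremum dominates that term) is the natural proof of this fact. It in fact establishes the stronger statement that $\sup_{v_i \in V} a_i v_i^\t \rad \geq \sup_{v_i \in V} v_i^\t \rad$ holds for every realization of $\rad$, so the Gaussian assumption plays no role, and your remark about where $0 \in V$ is essential is exactly right.
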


\section{Efficient Fixed Budget Algorithm}

\begin{algorithm}[h]\small

\begin{algorithmic}
\STATE \textbf{Input:} Budget $T$, tolerance $\epsilon > 0$.
%Let $\widehat{B}$ be the approximate value of the following optimization problem
%\begin{align}
%\inf_{\lambda} \E_{\rad \sim N(0,I)}[ \max_{z \in \H}  z^\t A(\lambda)^{-1/2} \rad]^2 \label{eq:action_comp_1}
%\end{align}

\STATE $\widehat{\param}_1 = \zero \in \R^d$, $N \longleftarrow \floor{T/\log_2(n\epsilon^{-1})}$.

\FOR{$k=1,2,\ldots, \floor{\log_2(d\epsilon^{-1})}$}
    \STATE $\tilde{h}_{k} \longleftarrow \argmax_{h \in \H} \widehat{\param}_{k}^\t h$
    
    \STATE Let $\lambda_k^{(1)} $ be the solution of the following optimization problem
    \begin{align}
    \inf_{\lambda \in \simp} \E_{\rad \sim N(0,I)}[ \max_{h \in \H} \frac{(\tilde{h}_k- h)^\t A(\lambda)^{-1/2} \rad}{2^{-k+1} n+ \widehat{\param}^\t_{k}(\tilde{h}_k - h) }]^2 \label{eq:action_efficient_1}
    \end{align}
    
    \STATE Let $\lambda_k^{(2)}$ be the solution to 
    \begin{align}
    \inf_{\lambda \in \simp} \max_{h \in \H} \max_{i \in \tilde{h}_k \Delta h} \frac{\frac{1}{\lambda_i}}{2^{-k+1}n +  \widehat{\param}_k^\t(\tilde{h}_k - h)} \label{eq:action_efficient_2}
    \end{align}
    
    \STATE $\lambda_k \longleftarrow \frac{1}{2}(\lambda_k^{(1)} + \lambda_k^{(2)})$
    
    \STATE Sample $\{x_1, \ldots, x_{N} \}  \sim \lambda_k$.
    
    \STATE Query $x_1, \ldots, x_{N} $ and receive rewards $y_1, \ldots, y_{N}$.
    
    %Let $\widehat{\param}_{k+1} \longleftarrow  A(N \lambda) ^{-1}  X^\top y$ \;
    \STATE Let $\widehat{\param}_{k+1} = \frac{1}{N} \sum_{s=1}^N A(\lambda)^{-1} x_{s} y_s$.
\ENDFOR

\STATE \textbf{Return} $\argmax_{h \in \H} \widehat{\param}_{k+1}^\t h$

\end{algorithmic}
\caption{Fixed Budget ACED for Combinatorial Bandits (Computationally Efficient).}
\label{alg:fixed_budget_comp_combi_efficient}
\end{algorithm} 

In this section, we present Algorithm \ref{alg:fixed_budget_comp_combi_efficient}, which can be implemented in practice. Algorithm \ref{alg:fixed_budget_comp_combi_efficient} resembles Algorithm \ref{alg:fixed_budget_comp_combi} with two differences. First, it uses the IPS estimator instead of the theoretical estimator derived in the proof of Theorem \ref{thm:fixed_budget_combi}. Second, it mixes the design (given in  used in \eqref{eq:action_comp_2_combi}) with the design defined in \eqref{eq:action_efficient_2} in order to control the worst-case deviations of the IPS estimator.

Now, we briefly discuss why Algorithm \ref{alg:fixed_budget_comp_combi_efficient} can be efficiently implemented. As is standard in combinatorial bandits, we assume access to a linear maximization oracle: for any $v \in \R^d$, we can compute 
\begin{align*}
    \text{Oracle}(v) = \argmax_{z \in \mc{Z}} v^\t z
\end{align*}
efficiently.\footnote{In the setting of active classification, this is equivalent to assuming a weighted $0/1$ loss minimization oracle.} \cite{katz2020empirical} gave a procedure for efficiently solving \eqref{eq:action_efficient_1} up to a constant factor assuming a linear maximization oracle, and so we focus on \eqref{eq:action_efficient_2}. Define the function $g(\lambda) = \max_{i \in \tilde{h}_k \Delta h} \frac{\frac{1}{\lambda_i}}{2^{-k+1}n + \widehat{\param}_k^\t(\tilde{h}_k - h)}$. $g$ is convex since it is the maximum of a collection of convex functions. One can compute the subgradient of $g$ using the linear maximization oracle. For each $i \in \tilde{h}_k$, one can compute
\begin{align*}
\max_{h : i \not \in h} \widehat{\param}^\t_k h
\end{align*}
by defining 
\begin{align*}
\tilde{\param}_j^{(i)} = \begin{cases}
 \tilde{\param}_j & j \neq i \\
 -\infty & j = i
\end{cases}
\end{align*}
and use the linear maximization oracle to compute $\max_{h } h^\t \tilde{\param}^{(i)}$. Using a similar technique, for each $i \not \in \tilde{h}_k$, one can compute
\begin{align*}
\max_{h : i  \in h} \widehat{\param}^\t_k h.
\end{align*}
Then, using a standard result about subgradients, we have that $\partial g(\lambda)$ consists of $\frac{\partial }{\partial \lambda}  \frac{\frac{1}{\lambda_{\tilde{i}}}}{\widehat{\param}_k^\t(\tilde{h}_k - h)}$ where $\tilde{i}$ and $h$ attain the maximum in
\begin{align*}
\max_{h \in \H} \max_{i \in \tilde{h}_k \Delta h} \frac{\frac{1}{\lambda_i}}{2^{-k+1}n + \widehat{\param}_k^\t(\tilde{h}_k - h)}.
\end{align*}
Thus, using this trick for computing the subgradient and any optimization procedure for nonsmooth convex optimization, we can optimize \eqref{eq:action_efficient_2}. In Theorem \ref{thm:fixed_budget_efficient_combi}, for the sake of simplicity and focusing on the key ideas, we suppose that \eqref{eq:action_efficient_1} and \eqref{eq:action_efficient_2} are solved exactly, but using the optimization ideas laid out here would only affect the sample complexity up to constant factors.
%\begin{align*}
%\partial g(\lambda) & = \{ \frac{\partial }{\partial \lambda}  \frac{\frac{1}{\lambda_{\tilde{i}}}}{\widehat{\param}_k^\t(\tilde{h}_k - h)} \text{ where } \tilde{i} \in \tilde{h}_k \Delta h  : 
% \tilde{i}=\argmax_{i \in \tilde{h}_k \Delta h} \frac{\frac{1}{\lambda_i}}{\widehat{\param}_k^\t(\tilde{h}_k - h)}\}
%\end{align*}

The following complexity parameter is a key term in Theorem \ref{thm:fixed_budget_efficient_combi}:
\begin{align*}
\psi^*(\epsilon) : = \min_{\lambda \in \simp} \max_{h \in \H \setminus \{\hstar\}} \max_{i \in \hstar \Delta h} \frac{\frac{1}{\lambda_i}}{\max(\epsilon, \param^\t (\hstar - h))}.
\end{align*}

\begin{theorem}\label{thm:fixed_budget_efficient_combi}
Let $T \in \N$ and $\epsilon > 0$. If $T \geq c_1\log(n \epsilon^{-1}) [\gamma^*(\epsilon) + \rho^*(\epsilon) + \psi^*(\epsilon) \log(|\H|)\log(\log(|\H|))]$ , then Algorithm \ref{alg:fixed_budget_comp_combi_efficient} satisfies
\begin{align*}
\P( \param^\t\widehat{h}  +\epsilon < \param^\t \hstar ) \leq c_2\log(n \epsilon^{-1})  \exp(-\frac{T - \log(|\H|)\log(\log(|\H|)) \psi^*(\epsilon)}{\log(n \epsilon^{-1}) [\gamma^*(\epsilon) + \rho^*(\epsilon) + \psi^*(\epsilon)]})
\end{align*}
where $c_1,c_2 >0$ are universal constants.
\end{theorem}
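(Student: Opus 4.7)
\textbf{Proof plan for Theorem~\ref{thm:fixed_budget_efficient_combi}.}
The plan is to mirror the inductive scheme from the proof of Theorem~\ref{thm:fixed_budget_combi}, but replace the chaining-based estimator of Theorem~\ref{thm:exists_est_gamma_combinatorial} with the plain IPS estimator $\widehat\param_{k+1} = \tfrac{1}{N}\sum_{s=1}^N A(\lambda_k)^{-1} x_s y_s$, and track the extra cost incurred by its sub-Gamma tails. As before, I would define the level sets $S_k = \{h:\Delta_h\le n 2^{-k+1}\}$ and good events
\begin{align*}
\mc{E}_k = \bigl\{\forall h\in S_k^c: |(\hstar-h)^\t(\widehat\param_k-\param)|\le \tfrac{\Delta_h}{8}\bigr\}\cap \bigl\{\forall h\in S_k:|(\hstar-h)^\t(\widehat\param_k-\param)|\le \tfrac{2^{-k+1}n}{8}\bigr\},
\end{align*}
and prove inductively that $\Pr(\mc{E}_{k+1}^c\mid \mc{E}_k,\ldots,\mc{E}_1)$ decays like the exponential term in the theorem. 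The correctness step (that the empirical maximizer at the last round is $\epsilon$-good on event $\cap_k \mc{E}_k$) is identical to Step~2 of the proof of Theorem~\ref{thm:fixed_budget_combi}, so the work is entirely in the inductive step.

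For the inductive step, I would apply a uniform Bernstein / Bousquet-type inequality to the linear functionals $(h-h')^\t \widehat\param_{k+1}$ over $h,h'$ in the estimated level sets $\widehat S_l=\{h:(\wt h_k-h)^\t\widehat\param_k\le 2^{-l+1}n\}$, exactly as in Step~1.2 of the earlier proof. Because $\widehat\param_{k+1}$ is IPS rather than the chained estimator, the deviation bound splits into (i) a sub-Gaussian piece controlled by $\E_{\rad}[\sup_h (\wt h_k-h)^\t A(\lambda_k)^{-1/2}\rad]$ plus $\sqrt{\log(|\H|/\delta_{k+1})}\cdot \max_{h,h'}\|h-h'\|_{A(\lambda_k)^{-1}}$, and (ii) a sub-exponential ``range'' piece scaling as $\log(|\H|/\delta_{k+1})\cdot \max_{h,h'\in \widehat S_l}\max_{i\in h\Delta h'}1/(N\lambda_{k,i})$. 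The inductive invariants from Claim~\ref{cl:subset_rel} ($S_{l+1}\subset \widehat S_l\subset S_{l-1}$) still hold once we check \eqref{eq:comp_effic_claim_result_2}--\eqref{eq:comp_effic_claim_result_3} for the IPS estimator on event $\mc{E}_k$, so I can again divide each bound by $2^{-l}n$ and peel to the rescaled ratio $(h-h')/\max(\epsilon,\Delta_h)$.

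The mixed design $\lambda_k=\tfrac12(\lambda_k^{(1)}+\lambda_k^{(2)})$ is what couples the two complexity terms: since $A(\lambda_k)^{-1}\preceq 2A(\lambda_k^{(1)})^{-1}$ and $1/\lambda_{k,i}\le 2/\lambda_{k,i}^{(2)}$ coordinatewise, the sub-Gaussian piece is controlled (up to a factor of $2$) by the value of \eqref{eq:action_efficient_1} and therefore, as in \eqref{eq:ub_objective_fb}--\eqref{eq:ub_relate_to_gamma_rho}, by $\gamma^*(\epsilon)+\rho^*(\epsilon)$; meanwhile the sub-exponential piece is controlled by the value of \eqref{eq:action_efficient_2}, which by the same peeling lemma as in Lemma~\ref{lem:comp_effic_stat_claim}(4) and Lemma~\ref{lem:width_scale_set} is bounded by $c\,\psi^*(\epsilon)$. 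Setting
\begin{align*}
\delta_{k+1}=\exp\!\Bigl(-c'\,\frac{N - \log(|\H|)\log(\log |\H|)\,\psi^*(\epsilon)}{\gamma^*(\epsilon)+\rho^*(\epsilon)+\psi^*(\epsilon)}\Bigr)
\end{align*}
then balances the two pieces so that the required deviation bound $\tfrac{1}{16}$ (in the $2^{-l}n$ scaling) holds with the claimed probability.

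The main obstacle I expect is handling the sub-exponential term cleanly while still uniformizing over the hypothesis class. Concretely, the naive union bound over $\widehat S_l$ introduces a $\log(|\H|)$ factor multiplying $\psi^*(\epsilon)$; to avoid an unwanted $\log(|\H|)$ factor on $\rho^*$ and $\gamma^*$ as well, I would apply Bousquet's inequality so that the $\log(|\H|)$ multiplier attaches only to the range/variance of the worst coordinate (the $\psi^*$ term) and the empirical-process suprema appear with only the $\gamma^*$-type Gaussian width, as in the standard argument leading to \eqref{eq:bernstein_bound_motive}. Once this separation is done, the additive shift $\log(|\H|)\log(\log|\H|)\,\psi^*(\epsilon)$ in the exponent of the theorem is exactly the contribution of the range term, and the rest of the proof follows by the same computations as \eqref{eq:bound_prob_main}--\eqref{eq:ub_relate_to_gamma_rho} from Theorem~\ref{thm:fixed_budget_combi}.
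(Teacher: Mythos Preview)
Your overall scheme is right and matches the paper: the same inductive structure on level sets $S_k$ and good events $\mc{E}_k$, the mixed design $\lambda_k=\tfrac12(\lambda_k^{(1)}+\lambda_k^{(2)})$ to decouple the Gaussian-width and worst-coordinate contributions, and the reduction of the sub-Gaussian piece to $\gamma^*(\epsilon)+\rho^*(\epsilon)$ via Lemma~\ref{lem:comp_effic_stat_claim} and Sudakov--Fernique. (A cosmetic difference: the paper defines the events $\mc{E}_{k,l}$ directly over the deterministic $S_l$ rather than the data-dependent $\widehat S_l$, which slightly simplifies the conditioning, but your route through Claim~\ref{cl:subset_rel} would also work.)

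The genuine gap is in how you control the \emph{expected} supremum of the IPS process. Bousquet's inequality only concentrates $\sup_{h,h'}(h-h')^\t(\widehat\param-\param)$ around its mean; the deviation terms it produces---a $\sqrt{\sigma^2 u}$ term and a $(\text{range})\cdot u$ term---carry no $\log|\H|$ at all. The $\log|\H|\log\log|\H|$ factor in the theorem comes entirely from bounding $\E[\sup]$ itself, and your sketch implicitly equates that expectation with the Gaussian width $\E_\rad[\sup_h h^\t A(\lambda)^{-1/2}\rad]$. That identification is only valid for sub-Gaussian increments; the IPS increments have Bernstein tails, so the expected supremum is \emph{not} controlled by $\gamma_2$ alone. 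The paper handles this in Lemma~\ref{lem:sup_ips_bound}: after Bousquet and symmetrization it applies the mixed-tail generic chaining bound (Theorem~2.2.23 of \cite{talagrand2014upper}) to obtain $\E[\sup]\lesssim \gamma_2(\wt\G,d_2)+\gamma_1(\wt\G,d_1)$, bounds $\gamma_2$ by the Gaussian width via the majorizing-measure theorem, and bounds $\gamma_1$ trivially by $\log|\G|\log\log|\G|\cdot\max_{i\in h\Delta h'}\tfrac{1}{t\lambda_i}$. This $\gamma_1$ contribution is precisely the source of the additive $\log|\H|\log\log|\H|\,\psi^*(\epsilon)$ shift in the exponent. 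Without this step (or an equivalent mixed-tail chaining), your bound on $\E[\sup]$ is either incorrect, or---if you fall back on a Massart-type argument---places a $\log|\H|$ factor on $\gamma^*$ and $\rho^*$ as well, which the theorem statement rules out.
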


\begin{proof}[Proof of Theorem \ref{thm:fixed_budget_efficient_combi}]
Let $\alpha > 0$ be a universal constant that will be specified later. Let $\delta > 0$ such that 
\begin{align*}
T = \alpha \log(n \epsilon^{-1}) [\log(1/\delta)(\gamma^*(\epsilon) + \rho^*(\epsilon) + \psi^*(\epsilon)) + \psi^*(\epsilon) \log(|\H|)\log(\log(|\H|))],
\end{align*}
which exists since by assumption $T \geq c\log(n \epsilon^{-1}) [\gamma^*(\epsilon) + \rho^*(\epsilon) + \psi^*(\epsilon) \log(|\H|)\log(\log(|\H|))]$. Note that
\begin{align}
N =\log(1/\delta) \alpha (\gamma^*(\epsilon) + \rho^*(\epsilon) + \psi^*(\epsilon)) + \psi^*(\epsilon) \log(|\H|)\log(\log(|\H|)). \label{eq:value_N}
\end{align}
 Recall $\widehat{\param}_k = \frac{1}{N}A(\lambda)^{-1} \sum_{s=1}^N x_{I_s} y_s$. Recall 
\begin{align*}
S_k = \{h \in \H : \Delta_h \leq n 2^{-k+1}\}.
\end{align*} 
Note that $S_0 = \H$ since $\param \in [-1,1]$ by assumption. Define the event at round $k$ for $l \in [k] \cup \{0\}$,
\begin{align*}
\mc{E}_{k,l}  = & \{\sup_{h,h^\prime \in S_l}  | (h-h^\prime)^\t (\widehat{\param}_k  -\param)|)  \leq \\
&c[\E[\sup_{h \in S_l} h^\t A(N\lambda)^{-1/2} \rad]  
  +(\log(\log(|S_l|)) \log(|S_l|) +\log(1/\delta)) \frac{\max_{h,h^\prime \in S_l} \max_{i \in h \Delta h^\prime} \frac{1}{\lambda_i}}{N} \\
  & + \norm{h-h^\prime}_{A(N \lambda)^{-1}} \sqrt{\log(1/\delta)}) \} \\
\end{align*}
Define $  \mc{E}_k = \cap_{l = 0}^k \mc{E}_{k,l}$ and $\mc{E} = \cap_{k=1}^{\log_2(n \epsilon^{-1})}\mc{E}_k$. Now, using the law of total probability, the independence of samples between rounds, and Lemma \ref{lem:sup_ips_bound}
\begin{align*}
\P(\mc{E}^c) & \leq \sum_{k \geq 1} \sum_{l=0}^k \P(\mc{E}_{k,l}^c| \cap_{j=1}^{k-1} \mc{E}_j) \leq \log(n \epsilon^{-1})^2 \delta.
\end{align*} 
Suppose that $\mc{E}$ occurs for the remainder of the proof. Note that using the same series of inequalities as in \eqref{eq:budget_induct_1}-\eqref{eq:budget_induct_3}, we have that 
\begin{align*}
& \sup_{h,h^\prime \in S_l}  | (h-h^\prime)^\t (\widehat{\param}_k  -\param)|)  \leq \\
&c^\prime[\sqrt{\log(1/\delta)})\E[\sup_{h \in S_l} h^\t A(N\lambda)^{-1/2} \rad]  
  +(\log(\log(|S_l|)) \log(|S_l|) +\log(1/\delta)) \frac{\max_{h,h^\prime \in S_l} \max_{i \in h \Delta h^\prime} \frac{1}{\lambda_i}}{N} ]
\end{align*}

We argue inductively that at every round $k \geq 2$
 \begin{enumerate}
\item for all $h \in S_k^c$,
\begin{align*}
|(h_* -h)^\t(\widehat{\param}_k - \param)| \leq  \frac{\Delta_h}{8}
\end{align*}
\item for all $h \in S_k$,
\begin{align*}
|(h_* -h)^\t(\widehat{\param}_k - \param)| \leq  \frac{2^{-k+1} n}{8}.
\end{align*}
\end{enumerate}

\textbf{Step 1: Base Case.} Let $k = 2$. Then, the event $\mc{E}_{2,0}$ implies that
\begin{align}
& \sup_{h \in \H}  \frac{| (\hstar-h)^\t (\widehat{\param}_k  -\param)|) }{2^{-1} n} \nonumber \\ 
 & \leq \sup_{h,h^\prime \in \H}  \frac{| (h-h^\prime)^\t (\widehat{\param}_k  -\param)|) }{2^{-1} n} \nonumber \\
\leq &\frac{1}{2^{-1} n} c^\prime[\sqrt{\log(1/\delta)})\E[\sup_{h \in \H} h^\t A(N\lambda)^{-1/2} \rad]  
  +(\log(|\H|) +\log(1/\delta)) \frac{\max_{h,h^\prime \in \H} \max_{i \in h \Delta h^\prime} \frac{1}{\lambda_i}}{N} ] \nonumber \\
  & = \frac{1}{2^{-1} n} c^\prime[\sqrt{\frac{\log(1/\delta) \E[\sup_{h \in \H} h^\t A(\lambda)^{-1/2} \rad]^2}{\log(1/\delta)\alpha[\gamma^*(\epsilon)+\rho^*(\epsilon)]}}  
  +(\log(|\H|) +\log(1/\delta)) \frac{\max_{h,h^\prime \in \H} \max_{i \in h \Delta h^\prime} \frac{1}{\lambda_i}}{\alpha \psi^*(\epsilon)(\log(\log(\H) )\log(\H) + \log(1/\delta))} ] . \label{eq:base_case_bound}
\end{align}
where the last line follows by \eqref{eq:value_N}. Recall that $\lambda_1 = \frac{1}{2}(\lambda^{(1)}_1 + \lambda_2^{(1)})$ where $\lambda^{(1)}_1$ is the minimizer of \eqref{eq:action_efficient_1} and $\lambda^{(1)}_1$ is the minimizer of \eqref{eq:action_efficient_2}. We have that
\begin{align}
 \E[\sup_{h \in \H} \frac{h^\t A(\lambda_1)^{-1/2} \rad}{2^{-1} n}]^2 & \leq  4\E[\sup_{h \in \H} \frac{h^\t A(\lambda_1^{(1)})^{-1/2} \rad}{2^{-1} n}]^2 \label{eq:gamma_bound_sudakov_fernique} \\
 & = 4\min_{\lambda} \E[\sup_{h \in \H} \frac{h^\t A(\lambda)^{-1/2} \rad}{2^{-1} n}]^2 \label{eq:gamma_bound_minimizer} \\
 & \leq   c[\gamma^*(\epsilon) + \rho^*(\epsilon)] \label{eq:gamma_bound_previous_arg}
\end{align}
\eqref{eq:gamma_bound_sudakov_fernique} follows by sudakov-fernique inequality since $A(\lambda_1) \succeq  \frac{1}{2} A(\lambda_1^{(1)})$ implies that $\sqrt{2} A(\lambda_1^{(1)})^{-1/2} \succeq A(\lambda_1)^{-1/2}$. Line \eqref{eq:gamma_bound_minimizer} follows by definition of $\lambda_1^{(1)}$. Finally, \eqref{eq:gamma_bound_previous_arg} follows by the same series of inequalities that established \eqref{eq:ub_relate_to_gamma_rho}.

Fix $h, h^\prime \in \H$. Then, if $i \in h \Delta h^\prime$, then either $i \in h \Delta \hstar$ or $i \in h^\prime \Delta \hstar$. Thus, 
\begin{align}
\max_{h,h^\prime \in \H} \max_{i \in h \Delta h^\prime}\frac{\frac{1}{\lambda_{1,i}}}{2^{-1} n} & \leq 2 \max_{h \in \H \setminus \{\hstar\}} \max_{i \in h \Delta \hstar}\frac{\frac{1}{\lambda_{1,i}}}{2^{-1} n} \nonumber \\
& \leq  4 \max_{h \in \H \setminus \{\hstar\}} \max_{i \in h \Delta \hstar}\frac{\frac{1}{\lambda^{(2)}_{1,i}}}{2^{-1} n} \label{eq:psi_bound_def_lambda_1} \\
& = 8\min_{\lambda} \max_{h \in \H \setminus \{\hstar\}} \max_{i \in h \Delta \hstar}\frac{\frac{1}{\lambda_i}}{2n} \label{eq:psi_bound_def_lambda_2} \\
& \leq c \psi^*(\epsilon) \label{eq:psi_bound}
\end{align}
where \eqref{eq:psi_bound_def_lambda_1} follows by definition of $\lambda_1$, \eqref{eq:psi_bound_def_lambda_2} follows by definition of $\lambda_1^{(2)}$, and \eqref{eq:psi_bound} follows by definition of $\psi^*(\epsilon)$ and since for all $h \in \H \setminus \{\hstar\}$, $\Delta_h \leq 2n$. 

Then, putting together \eqref{eq:base_case_bound}, \eqref{eq:gamma_bound_previous_arg}, and \eqref{eq:psi_bound}, we have that if the universal constant $\alpha$ is large enough, then
\begin{align*}
\sup_{h,h^\prime \in \H}  \frac{| (h-h^\prime)^\t (\widehat{\param}_k  -\param)|) }{2^{-1} n} & \leq \frac{1}{8}
\end{align*}
this proves the base case.

\textbf{Step 2: Inductive Step.} Suppose that at round $k \geq 2$
 \begin{enumerate}
\item for all $h \in S_k^c$,
\begin{align*}
|(h_* -h)^\t(\widehat{\param}_k - \param)| \leq  \frac{\Delta_h}{8}
\end{align*}
\item for all $h \in S_k$,
\begin{align*}
|(h_* -h)^\t(\widehat{\param}_k - \param)| \leq  \frac{2^{-k+1} n}{8}.
\end{align*}
\end{enumerate}
Now, we prove the statement for round $k+1$. Let $l \in [k+1]$. Using $\mc{E}_{k+1,l}$, we have that
\begin{align}
\frac{\sup_{h,h^\prime \in S_{l} } |(h-h^\prime)^\t (\widehat{\param}_{k+1}-\param)| }{2^{-l }n} \leq c \sqrt{\log(1/\delta) \E[\sup_{h \in S_{l} } \frac{h^\t A(N\lambda_k)^{-1/2} \rad}{2^{-l} n}]^2} \\
+(\log(\log(|S_l|))\log(|S_l|) + \log(1/\delta)) \frac{\max_{h,h^\prime \in S_l} \max_{i \in h \Delta h^\prime}\frac{1/\lambda_{k,i}}{2^{-l} n}}{N} \label{eq:induct_step_bound}
\end{align}
Bounding the first term, we have that
\begin{align}
\sqrt{\log(1/\delta) \E[\sup_{h \in S_{l} } \frac{h^\t A(N\lambda_k)^{-1/2} \rad}{2^{-l} n}]^2} & \leq c^\prime \sqrt{\log(1/\delta) \frac{\E[\sup_{h \in S_{l}} \frac{(\tilde{h}_k-h)^\t A(\lambda_k)^{-1/2} \rad}{2^{-l+2} n}]^2}{N}} \nonumber \\
& \leq  c^{\prime \prime} \sqrt{ \log(1/\delta)\frac{\E[\sup_{h \in S_{l}} \frac{(\tilde{h}_k-h)^\t A(\lambda_k)^{-1/2} \rad}{\Delta_{h} + 2^{-k+1} n}]^2}{N}} \label{eq:ips_comp_ub_ind_step_1} \\
& \leq c^{\prime \prime } \sqrt{  \log(1/\delta)\frac{\E[\sup_{h \in \H} \frac{(\tilde{h}_k-h)^\t A(\lambda_k)^{-1/2} \rad}{\Delta_{h} + 2^{-k+1} n}]^2}{N}}  \\
& \leq c^{\prime \prime \prime} \sqrt{   \log(1/\delta)\frac{\E[\sup_{h \in \H} \frac{(\tilde{h}_k-h)^\t A(\lambda_k)^{-1/2} \rad}{(\tilde{h}_k-h)^\t\widehat{\param}_k + 2^{-k+1} n}]^2}{N}} \label{eq:ips_comp_ub_ind_step_2} 
\end{align}
Since $\tilde{h}_k \in S_{l}$ by Lemma \ref{lem:comp_effic_stat_claim} and for all $h \in S_{l-1}$, $2^{-l+2}n \geq \Delta_h + 2^{-k+1} n$,, we may apply Lemma \ref{lem:width_scale_set} to obtain line \eqref{eq:ips_comp_ub_ind_step_1}.  \eqref{eq:ips_comp_ub_ind_step_2} follows since we assumed $\mc{E}_k$ holds and Lemma \ref{lem:comp_effic_stat_claim}. Using a similar series of inequalities to Step 3 in the proof of Theorem \ref{thm:fixed_budget}, we have that
\begin{align}
\sqrt{\log(1/\delta) \E[\sup_{h \in S_{l} } \frac{h^\t A(N\lambda_k)^{-1/2} \rad}{2^{-l} n}]^2} & \leq c^{\prime \prime \prime} \sqrt{   \log(1/\delta)\frac{\E[\sup_{h \in \H} \frac{(\tilde{h}_k-h)^\t A(\lambda_k)^{-1/2} \rad}{(\tilde{h}_k-h)^\t\widehat{\param}_k + 2^{-k+1} n}]^2}{N}} \nonumber \\
& \leq c^{\prime \prime \prime \prime} \sqrt{   \log(1/\delta)\frac{\E[\sup_{h \in \H} \frac{(\tilde{h}_k-h)^\t A(\lambda_k^{(1)})^{-1/2} \rad}{(\tilde{h}_k-h)^\t\widehat{\param}_k + 2^{-k+1} n}]^2}{N}} \label{eq:ips_comp_ind_step_sudakov_fernique} \\
& = c^{\prime \prime \prime \prime} \min_{\lambda}  \sqrt{   \log(1/\delta)\frac{\E[\sup_{h \in \H} \frac{(\tilde{h}_k-h)^\t A(\lambda)^{-1/2} \rad}{(\tilde{h}_k-h)^\t\widehat{\param}_k + 2^{-k+1} n}]^2}{N}} \label{eq:ips_comp_ind_step_def_lambda} \\
& \leq c^{\prime \prime \prime \prime \prime} \sqrt{   \log(1/\delta)\frac{\rho^*(\epsilon) + \gamma^*(\epsilon)}{N}} \label{eq:ips_comp_prev_arg} \\
& \leq \frac{1}{32} \label{eq:gamma_term_bound}
\end{align}
where \eqref{eq:ips_comp_ind_step_sudakov_fernique} follows by Sudakov-Fernique inequality and the definition of $\lambda_k$, \eqref{eq:ips_comp_ind_step_def_lambda} follows by the definition of $\lambda_k^{(1)}$, \eqref{eq:ips_comp_prev_arg} follows by the same series of inequalities used to establish \eqref{eq:ub_relate_to_gamma_rho}, and the last line follows by plugging in \eqref{eq:value_N} and letting $\alpha$ be a large enough universal constant. 

Now, we bound the second term. We have that
\begin{align}
\max_{h,h^\prime \in S_l} \max_{i \in h \Delta h^\prime} \frac{1/\lambda_{k,i}}{2^{-l} n} & \leq 2 \max_{h \in S_l} \max_{i \in \wt{h}_k \Delta h} \frac{1/\lambda_{k,i}}{2^{-l} n} \nonumber \\
& \leq 4 \max_{h \in S_l} \max_{i \in \wt{h}_k \Delta h } \frac{1/\lambda_{k,i}}{2^{-k+1} n + \Delta_h} \label{eq:bernstein_gap} \\
& \leq c \max_{h \in S_l} \max_{i \in \tilde{h}_k \Delta h} \frac{1/\lambda_{k,i}}{2^{-k+1} n + \widehat{\param}_k^\t(\tilde{h}_k -h)} \label{eq:bernstein_objective} \\
& \leq  c^\prime \max_{h \in S_l} \max_{i \in \wt{h}_k \Delta h } \frac{1/\lambda_{k,i}}{2^{-k+1} n + \Delta_h} \label{eq:bernstein_gap_2} \\
& \leq  c^{\prime \prime} \max_{h \in S_l} \max_{i \in \wt{h}_k \Delta h } \frac{1/\lambda_{k,i}^{(2)}}{2^{-k+1} n + \Delta_h} \label{eq:bernstein_def_lambda} \\
& =  c^{\prime \prime} \min_{\lambda} \max_{h \in S_l} \max_{i \in \wt{h}_k \Delta h } \frac{1/\lambda_{i}}{2^{-k+1} n + \Delta_h} \label{eq:bernstein_def_lambda_2} \\
& \leq c^{\prime \prime \prime} \psi(\epsilon) \label{eq:bernstein_def_psi}
\end{align}
where in \eqref{eq:bernstein_gap} we used that since for all $h \in S_l$, $2^{-l} n \geq \frac{2^{-k+1} n + \Delta_h}{2}$ and and in \eqref{eq:bernstein_objective} we used the inductive hypothesis and Lemma \ref{lem:comp_effic_stat_claim} and in \eqref{eq:bernstein_gap_2}, we used the inductive hypothesis and Lemma \ref{lem:comp_effic_stat_claim} again, \eqref{eq:bernstein_def_lambda} follows by the definition of $\lambda_k$, \eqref{eq:bernstein_def_lambda_2} follows by the definition of $\lambda_k^{(2)}$, and \eqref{eq:bernstein_def_psi} follows by the definition of $\psi(\epsilon)$. Thus, 
\begin{align}
(\log(\log(|S_l|)) \log(|S_l|) + \log(1/\delta)) \frac{\max_{h,h^\prime \in S_l} \max_{i \in h \Delta h^\prime}\frac{1/\lambda_{k,i}}{2^{-l} n}}{N}  & \leq c^{\prime \prime \prime}  (\log(\log(|S_l|)) \log(|S_l|) + \log(1/\delta)) \frac{ \psi(\epsilon)}{N} \\
& \leq \frac{1}{32} \label{eq:berstein_term_bound}
\end{align}
where the last line follows by plugging in \eqref{eq:value_N} and letting $\alpha$ be a large enough universal constant. Then, putting together \eqref{eq:induct_step_bound}, \eqref{eq:gamma_term_bound}, and \eqref{eq:berstein_term_bound} yields for every $l \in [k+1] \cup \{0\}$
\begin{align*}
\frac{\sup_{h,h^\prime \in S_{l} } |(h-h^\prime)^\t (\widehat{\param}_{k+1}-\param)| }{2^{-l }n} \leq \frac{1}{16}
\end{align*}

Using the same series of inequalities used to establish \eqref{eq:ind_step_gap_result_1} and \eqref{eq:ind_step_gap_result_2} completes the inductive step. 

\textbf{Step: Correctness.} This argument is the same as in the correctness step in the proof of Theorem \ref{thm:fixed_budget_combi}
\end{proof}

\subsection{Lemmas}

In this Section, we prove the main concentration inequality, Lemma \ref{lem:sup_ips_bound}, that is used in the proof of Theorem \ref{thm:fixed_budget_efficient_combi}. We need the following definition for the proof of Lemma \ref{lem:sup_ips_bound}.
\begin{definition}
Fix a set $T \subset \R^n$. Let $d$ be a metric on $\R^n$. The $\gamma_1$-functional of $T$ is defined as
\begin{align*}
\gamma_1(T,d) & = \inf_{(T_k)} \sup_{t \in T} \sum_{k=0}^\infty 2^{k} d(t,T_k)
\end{align*}
where the infimum is taken over all admissible sequences of $T$.
\end{definition}

\begin{lemma}\label{lem:sup_ips_bound}
Let $\G \subset \H$. Fix some $\lambda \in \triangle_n$ and draw $\{ x_s \}_{i=1}^t \sim \lambda$ and then observe $y_s$ with mean $ x_s^\t \param$ and $|y_i| \leq 1$ with probability $1$ for $s=1,\ldots, t$. Then, there exists a universal constant $c > 0$ such that for any $u > 0$ with probability at most $\exp(-u)$
\begin{align*}
\sup_{h,h^\prime \in \G}  | (h-h^\prime)^\t & (\frac{1}{t}A(\lambda)^{-1} \sum_{s=1}^t x_{I_s} y_s -\param)|  \leq \\
&c[\E[\sup_{h \in \G} h^\t A(t\lambda)^{-1/2} \rad]  
  +(\log(\log(|\G|))\log(|\G|) +u) \frac{\max_{h,h^\prime \in \G} \max_{i \in h \Delta h^\prime} \frac{1}{\lambda_i}}{t} \\
  & + \norm{h-h^\prime}_{A(t\lambda)^{-1}} \sqrt{u} ]
\end{align*}
\end{lemma}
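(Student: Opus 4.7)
The plan is to apply generic chaining to the centered empirical process $Z_v := v^\top(\hat{\mu} - \mu)$ indexed by vectors $v = h - h'$ with $h, h' \in \G$, where $\hat{\mu} := \frac{1}{t} A(\lambda)^{-1}\sum_{s=1}^t x_{I_s} y_s$. First I would note that since $x_{I_s} = e_{I_s}$ and $|y_s| \le 1$, each summand $\tfrac{1}{t}\cdot\tfrac{v_{I_s} y_s}{\lambda_{I_s}}$ is bounded in magnitude by $\tfrac{1}{t}\max_{i \in \mathrm{supp}(v)} 1/\lambda_i$ and has variance at most $\tfrac{1}{t^2}\|v\|^2_{A(\lambda)^{-1}}$. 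Bernstein's inequality therefore yields, for each fixed pair $h, h'$ and each $u > 0$,
\[
\Pr\!\left(|Z_{h - h'}| > c_1\!\left(\|h - h'\|_{A(t\lambda)^{-1}}\sqrt{u} + \tfrac{M u}{t}\right)\right) \le 2e^{-u},
\]
where $M := \max_{h, h' \in \G}\max_{i \in h \Delta h'} 1/\lambda_i$. Equivalently, the process $(Z_v)$ has sub-Gaussian increments in the metric $d_2(h, h') := \|h-h'\|_{A(t\lambda)^{-1}}$ and sub-exponential increments in the metric $d_\infty(h, h') := M/t$.

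Next I would invoke a generic chaining tail bound for processes with mixed sub-Gaussian / sub-exponential Bernstein-type increments (e.g.\ Theorem~3.5 of Dirksen, \emph{Tail bounds via generic chaining}, or Theorem~2.2.23 of Talagrand's \emph{Upper and Lower Bounds for Stochastic Processes}). This yields, uniformly over $h, h' \in \G$, with probability at least $1 - e^{-u}$,
\[
|Z_{h - h'}| \;\le\; c\bigl(\gamma_2(\G, d_2) + \gamma_1(\G, d_\infty)\bigr) + c\!\left(\|h - h'\|_{A(t\lambda)^{-1}}\sqrt{u} + \tfrac{M u}{t}\right).
\]
The crucial feature I want from this theorem is that the high-probability deviation is pair-dependent (scaling with $\|h-h'\|_{A(t\lambda)^{-1}}$) and not merely driven by the worst-case diameter.

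I would then estimate the two chaining functionals separately. For $\gamma_2$, Talagrand's majorizing measure theorem delivers
\[
\gamma_2(\G, d_2) \;\le\; c\,\E_{\zeta \sim N(0, I)}\!\left[\sup_{h \in \G} h^\top A(t\lambda)^{-1/2}\zeta\right],
\]
which recovers the Gaussian-width term in the statement. For the $\gamma_1$-functional on the finite set $\G$, I would take an admissible sequence that terminates with $T_K = \G$ at the first level $K = \lceil \log_2\log_2|\G|\rceil$ at which $|T_K|\le 2^{2^K}$ is feasible, then bound the telescoped distances $\sum_{k \le K} 2^k \mathrm{dist}_{d_\infty}(h, T_k)$ crudely by the number of levels times $2^K\cdot\mathrm{diam}(\G, d_\infty)$ to obtain
\[
\gamma_1(\G, d_\infty) \;\le\; c\,\log(|\G|)\log\log(|\G|)\,\tfrac{M}{t}.
\]
Combining these three displays gives the claimed bound.

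The main obstacle will be identifying (and citing) a mixed-tail generic chaining theorem that produces precisely the pair-dependent Bernstein deviation term $\|h-h'\|_{A(t\lambda)^{-1}}\sqrt{u}$ rather than a worst-case diameter $\sqrt{u}$; once that statement is in hand, matching the metrics $d_2, d_\infty$ to the sub-Gaussian/sub-exponential regimes of Bernstein, and reducing $\gamma_2$ to the Gaussian width via majorizing measures, are both standard.
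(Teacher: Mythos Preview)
Your proposal is correct but takes a different route from the paper. The paper first applies a Talagrand/Bousquet-type concentration inequality for empirical processes (Corollary~7.9 of Ledoux, \emph{The Concentration of Measure Phenomenon}) to bound the supremum by its \emph{expectation} plus the Bernstein-type deviation terms $\sqrt{u}\,\|h-h'\|_{A(t\lambda)^{-1}}$ and $uM/t$; it then bounds the expectation by symmetrization followed by Talagrand's mixed-tail expectation bound (Theorem~2.2.23), obtaining $\gamma_2+\gamma_1$. You instead bypass both the Bousquet step and symmetrization by invoking a Dirksen-style \emph{tail}-bound generic chaining theorem directly on the process, which delivers $\gamma_2+\gamma_1$ together with the deviation terms in a single shot. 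After that point the two arguments coincide: $\gamma_2$ is reduced to the Gaussian width via majorizing measures, and $\gamma_1$ is bounded crudely by $\log|\G|\cdot\log\log|\G|\cdot M/t$.

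Two small remarks. First, of the two references you list, only Dirksen's theorem gives a tail bound; Talagrand~2.2.23 is an expectation bound and is in fact precisely what the paper uses \emph{after} reducing to an expectation via Bousquet. Second, the ``main obstacle'' you flag---obtaining a pair-dependent $\|h-h'\|_{A(t\lambda)^{-1}}\sqrt{u}$ rather than a diameter $\sqrt{u}$---is not actually needed: the lemma is only ever applied with the $d_2$-diameter on the right (which is then absorbed into the Gaussian width via Lemma~\ref{lem:rho_gamma}), so the standard diameter version from Dirksen already suffices. Your route is a bit more streamlined; the paper's route uses more classical ingredients.
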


\begin{proof}
Using Corollary 7.9 of \cite{ledoux2001concentration}, we have that with probability at least $1-\exp(-u)$,
\begin{align}
 \sup_{h,h^\prime \in \G} |  (h-h^\prime)^\t (\frac{1}{t}A(\lambda)^{-1}& \sum_{s=1}^t x_{I_s} y_s -\param)|   \leq \nonumber \\
&c[\E[\sup_{h,h^\prime \in \G} | (h-h^\prime)^\t (\frac{1}{t}A(\lambda)^{-1} \sum_{s=1}^t x_{I_s} y_s -\param)|] 
  +u \frac{\max_{h,h^\prime \in \G} \max_{i \in h \Delta h^\prime} \frac{1}{\lambda_i}}{t}  \nonumber \\
  & +  \norm{h-h^\prime}_{A(t\lambda)^{-1}} \sqrt{u}). \label{eq:bousquet}
\end{align}
Using the standard technique of symmetrization, we have that
\begin{align}
\E[\sup_{h,h^\prime \in \G} | (h-h^\prime)^\t (\frac{1}{t}A(\lambda)^{-1} \sum_{s=1}^t x_{I_s} y_s -\param)|]  \leq 2\E_{\epsilon_1,\ldots, \epsilon_t}[\sup_{h,h^\prime \in \G} | (h-h^\prime)^\t (\frac{1}{t}A(\lambda)^{-1} \sum_{s=1}^t x_{I_s} \epsilon_s|] \label{eq:symmetrization}
\end{align}
where $\epsilon_1, \ldots, \epsilon_t$ are Rademacher random variables. By Bernstein's inequality, we have that
\begin{align}
\P(| (h-h^\prime)^\t (\frac{1}{t}A(\lambda)^{-1} \sum_{s=1}^t x_{I_s} \epsilon_s| \geq u) \leq 2 \exp(-c\frac{u^2}{\norm{h-h^\prime}^2_{A(t\lambda)^{-1}}}, \frac{u}{\max_{i \in h \Delta h^\prime} \frac{1}{t\lambda_i}}). \label{eq:bernstein_bound}
\end{align}
Define the set $\wt{\G} = \{A(\lambda)^{-1} h : h \in \G\}$. Then, using \eqref{eq:bernstein_bound} and applying Theorem 2.2.23 of \cite{talagrand2014upper}, we have that
\begin{align}
\E[\sup_{h,h^\prime \in \G} | (h-h^\prime)^\t (\frac{1}{t}A(\lambda)^{-1} \sum_{s=1}^t x_{I_s} \epsilon_s|] & \leq c [\gamma_2(\wt{\G}, d_2) + \gamma_1(\wt{\G},d_1)] \label{eq:chaining}
\end{align}
where $d_2$ is the metric induced by $\norm{\cdot}_2$ and $d_1$ is the metric induced $\norm{\cdot}_{\infty}$. Using Talagrand's majorizing measure theorem (Theorem 8.6.1 in \cite{vershynin2019high}), we have that
\begin{align}
\gamma_2(\wt{\G},d_2) \leq c\E_{\rad \sim N(0,I)}[ \sup_{h \in \H} h^\t A(\lambda)^{-1} \rad]. \label{eq:majorizing_measure}
\end{align}
From the definition of $\gamma_1$, it follows trivially that
\begin{align}
\gamma_1(\wt{\G},d_1) & \leq \log(\log(|\G|))\log(|\G|) \frac{\max_{h,h^\prime \in \G} \max_{i \in h \Delta h^\prime} \frac{1}{\lambda_i}}{t}. \label{eq:gamma_1}
\end{align}
Finally, putting together \eqref{eq:bousquet}, \eqref{eq:symmetrization}, \eqref{eq:chaining}, \eqref{eq:majorizing_measure}, and \eqref{eq:gamma_1}, we obtain the result.

\end{proof}

\section{Thresholds}
Let $n$ be a power of $2$. 
Let $\H = \{ \mb{e}_{[k]} : k \in [n] \}$ where $[\mb{e}_A]_i = \mathbf{1}\{ i \in A \}$. 
Assume that $\param_* \in \epsilon (2 \mb{e}_{[k_\star]} - 1)$ for some $k_\star \in [n]$.
Then
\begin{align*}
\gamma_* &:= \inf_\lambda \E\left[  \sup_{h \in \H} \frac{ \langle h - h_\star, \widehat{\param}-\param_* \rangle}{\langle h - h_\star, \param_* \rangle}  \right]^2 \\
&= \inf_\lambda \E\left[  \sup_{h \in \H} \frac{ \langle h - h_\star, A(\lambda)^{-1/2} \rad \rangle}{\langle h - h_\star, \param_* \rangle}  \right]^2 \\
&= \inf_\lambda \E\left[  \sup_{k > k_\star} \frac{ \sum_{i=k_\star + 1}^k  \frac{\rad_i}{\sqrt{\lambda_i}} }{ (k-k_\star) h } \vee \sup_{k < k_\star} \frac{ \sum_{i=k+1}^{k_\star}  \frac{\rad_i}{\sqrt{\lambda_i}} }{ (k_\star-k) h }  \right]^2 \\
&\leq \inf_\lambda  \frac{4}{\epsilon^2} \E\left[ \max_{k=1,\dots,n} \frac{1}{k} \sum_{i=1}^k \frac{\rad_i}{\sqrt{\lambda_i}} \right]^2.
\end{align*}
and
\begin{align*}
    \max_{k=1,\dots,n} \frac{1}{k} \sum_{i=1}^k \frac{\rad_i}{\sqrt{\lambda_i}} &= \max_{j=0,\dots,\log_2(n)} \max_{k=[2^j, 2^{j+1})} \frac{1}{k} \sum_{i=1}^k \frac{\rad_i}{\sqrt{\lambda_i}}  \\
    &\leq \max_{j=0,\dots,\log_2(n)} \max_{k=[2^j, 2^{j+1})} \frac{1}{2^j} \sum_{i=1}^k \frac{\rad_i}{\sqrt{\lambda_i}} \\
    &= \max_{j=0,\dots,\log_2(n)} \left( \frac{1}{2^j} \sum_{i=1}^{2^j -1} \frac{\rad_i}{\sqrt{\lambda_i}}  + \max_{k=[2^j, 2^{j+1})} \frac{1}{2^j} \sum_{i=2^j}^k \frac{\rad_i}{\sqrt{\lambda_i}} \right) \\
    &\leq \max_{j=0,\dots,\log_2(n)} \left( \frac{1}{2^j} \sum_{i=1}^{2^j -1} \frac{\rad_i}{\sqrt{\lambda_i}} \right)  +  \max_{j=0,\dots,\log_2(n)} \left( \max_{k=[2^j, 2^{j+1})} \frac{1}{2^j} \sum_{i=2^j}^k \frac{\rad_i}{\sqrt{\lambda_i}} \right).
\end{align*}
We now take $\lambda_i = \frac{1}{i \log_2(n)}$. 
Note that $\sum_{i=1}^n \lambda_i \geq 1/2$ and for any $k>1$ we have $\sum_{i=1}^{k-1} \frac{1}{\lambda_i} \leq k^2 \log_2(n)/2$.

For any $\nu$ we have
\begin{align*}
    \E\left[ \exp\left( \nu \frac{1}{2^j} \sum_{i=1}^{2^j -1} \frac{\rad_i}{\sqrt{\lambda_i}} \right) \right] &=  \exp\left( \nu^2 \frac{1}{2^{2j}} \sum_{i=1}^{2^j -1} \frac{1}{\lambda_i} \right)   \leq \exp\left( \nu^2 \log_2(n)/2\right)
\end{align*}
so we apply Proposition~1 with $\sigma_t^2 = \log_2(n)$ and $\mc{T} = \{0,1,\dots,\log_2(n)\}$ to obtain
\begin{align*}
    \E\left[ \max_{j=0,\dots,\log_2(n)} \left( \frac{1}{2^j} \sum_{i=1}^{2^j -1} \frac{\rad_i}{\sqrt{\lambda_i}} \right) \right] \leq \sqrt{ 2 \log_2(n) \log(\log_2(2n))}.
\end{align*}

The second term deserves a bit more care. 
First, note that
\begin{align*}
    \max_{j=0,\dots,\log_2(n)} \left( \max_{k=[2^j, 2^{j+1})} \frac{1}{2^j} \sum_{i=2^j}^k \frac{\rad_i}{\sqrt{\lambda_i}} \right) &\leq  \max_{j=0,\dots,\log_2(n)} \left( \max_{k=[2^j, 2^{j+1})} \sqrt{\frac{2 \log_2(n)}{2^j}} \sum_{i=2^j}^k \rad_i \right) \\
    &=  \max_{j=0,\dots,\log_2(n)} \left( \max_{k=1,\dots,2^j} \sqrt{\frac{2 \log_2(n)}{2^j}} \sum_{i=1}^k \rad_i^{(j)} \right)
\end{align*}
where each $\{ \rad_i^{(j)} \}_{i=1}^{2^j}$ are i.i.d. sequences of $N(0,1)$.
Now
\begin{align*}
  \hspace{4cm} &  \hspace{-4cm}    \P\left( \max_{j=0,\dots,\log_2(n)} \left( \max_{k=1,\dots,2^j} \sqrt{\frac{2 \log_2(n)}{2^j}} \sum_{i=1}^k \rad_i^{(j)} \right) > t \right) \\ 
    & \leq \sum_{j=0,\dots,\log_2(n)} \P\left( \max_{k=1,\dots,2^j} \sqrt{\frac{2 \log_2(n)}{2^j}} \sum_{i=1}^k \rad_i^{(j)}  > t \right) \\
    &\leq \sum_{j=0,\dots,\log_2(n)} \exp( -t^2 /4 \log_2(n) ) \\
    &= \log_2(2n) \exp( -t^2 /4 \log_2(n) )
\end{align*}
where the last inequality follows from Doob's maximal inequality.
Thus, 
\begin{align*}
   \hspace{4cm}  &  \hspace{-4cm}  \E\left[ \max_{j=0,\dots,\log_2(n)} \left( \max_{k=1,\dots,2^j} \sqrt{\frac{2 \log_2(n)}{2^j}} \sum_{i=1}^k \rad_i^{(j)} \right)\right] \\
    &\leq \int_{t=0}^\infty \P\left( \max_{j=0,\dots,\log_2(n)} \left( \max_{k=1,\dots,2^j} \sqrt{\frac{2 \log_2(n)}{2^j}} \sum_{i=1}^k \rad_i^{(j)} \right) > t \right) dt \\
    &\leq \int_{t=0}^\infty \min\{ 1, \log_2(2n) \exp( -t^2 /4 \log_2(n) )\} dt \\
    &\leq a + \log_2(2n) \sqrt{4 \pi \log_2(n)}  \int_{t=a}^\infty \frac{1}{\sqrt{4 \pi \log_2(n)}} \exp( -t^2 /4 \log_2(n) )\} dt \\
    &\leq a + \log_2(2n) \sqrt{4 \pi \log_2(n)}  \exp( -a^2 /4 \log_2(n) )\} \\
    &\leq \sqrt{4 \log_2(n) \log( \log_2(2n))} + \sqrt{4 \pi \log_2(n)}
\end{align*}
for $a = \sqrt{4 \log_2(n) \log( \log_2(2n))}$.

% Similarly for the second term, for any $\nu$ we have
% \begin{align*}
%     \E\left[ \exp\left( \nu \max_{k=[2^j, 2^{j+1})} \frac{1}{2^j} \sum_{i=2^j}^k \frac{\rad_i}{\sqrt{\lambda_i}} \right) \right] &= \E\left[ \max_{k=[2^j, 2^{j+1})} \exp\left( \nu \frac{1}{2^j} \sum_{i=2^j}^k \frac{\rad_i}{\sqrt{\lambda_i}} \right) \right] \\
%     &\leq \E\left[ \max_{k=[2^j, 2^{j+1})} \exp\left( \nu \sqrt{\frac{2 \log_2(n)}{2^j}} \sum_{i=2^j}^k \rad_i \right) \right] \\
%     &= \E\left[ \max_{k=1,\dots,2^j} \exp\left( \nu \sqrt{\frac{2 \log_2(n)}{2^j}} \sum_{i=1}^k \rad_i \right) \right] \\
%     &\leq \E\left[ \exp\left( \nu \sqrt{\frac{2 \log_2(n)}{2^j}} \sum_{i=1}^{2^j} \rad_i \right) \right] \\
%     &= \exp( \nu^2 \log_2(n) )
% \end{align*}
% where the last inequality follows from Doob's maximal inequality. 
% Thus, we apply Proposition~1 with $\sigma_t^2 = 2\log_2(n)$ and $\mc{T} = \{0,1,\dots,\log_2(n)\}$ to obtain
% \begin{align*}
%     \E\left[ \max_{j=0,\dots,\log_2(n)} \left( \max_{k=[2^j, 2^{j+1})} \frac{1}{2^j} \sum_{i=2^j}^k \frac{\rad_i}{\sqrt{\lambda_i}} \right) \right] \leq \sqrt{ 4 \log_2(n) \log(\log_2(2n))}.
% \end{align*}
Putting it all together we have 
\begin{align*}
    \gamma_\star &\leq \frac{4}{\epsilon^2}\left( \sqrt{2 \log_2(n) \log( \log_2(2n))} + \sqrt{4 \log_2(n) \log( \log_2(2n))} + \sqrt{4 \pi \log_2(n)}  \right)^2\\
    &\leq \frac{194 \log_2(n) \log( \log_2(2n)) }{\epsilon^2}
\end{align*}

\section{Implementation Details} \label{sec:implement}
In this section we discuss the modifications and implementation details of Algorithm~\ref{alg:fixed_budget_comp}. We consider the slightly modified version present in Algorithm~\ref{alg:aced_mod}.

\subsection{Optimization}
First, we reiterate the following definition from Section~\ref{sec:efficient_optimization}: \[f(\lambda; h; \rad):= \frac{ \sum_{i \in [n]} (\tilde{h}_k(x_i)- h(x_i)) \frac{\rad_i}{n\lambda_i^{1/2}} }{2^{-k+1} +  \wt{\err}(h, \hat{\eta}_{k-1}) - \wt{\err}( \tilde{h}_k, \hat{\eta}_{k-1})}.\]

\subsubsection{Mirror Descent}
In Algorithm~\ref{alg:fixed_budget_comp}, we need to solve the optimization problem in~\eqref{eq:action_comp_2}, namely, \[\inf_{\lambda\in\Delta_{n}}\E_{\zeta\sim N(0,I)}[\max_{h\in \H} f(\lambda;h;\rad)].\] 
We use stochastic mirror descent method. 
Specifically, given a current iterate $\widehat{\lambda}$ and an i.i.d. sample $\zeta_1, \cdots, \zeta_{B}$ we computed an unbiased estimate of the sub-gradient 
\[ g(\widehat{\lambda}) = \frac{1}{B}\sum_{i=1}^B \nabla_{\lambda} \max_{h\in H} f(\lambda;h;\rad_i)\Big|_{\lambda = \widehat{\lambda}}\quad,\]
and then move to $\widehat{\lambda}_+ = \Pi( \exp( \log( \widehat{\lambda} ) - \eta g(\widehat{\lambda}) ) )$ where $\Pi(x) = x / \|x \|_1$ and $\log,\exp$ are applied element-wise.
The step size $\eta$ is chosen by back-tracking line search where two step sizes are equivalent if the difference in estimated function values is dominated by the the square root of the empirical variance, with respect to the finite batch size estimates.   

The batch size was grown adaptively. Let $\bar{f}(\lambda) = \E_{\zeta\sim N(0,I)}[ \max_{h \in \mc{H}} f(\lambda; h; \rad) ]$,  $\lambda_* \in \arg\min_{\lambda } \bar{f}(\lambda)$ and suppose the algorithm is at some current configuration $\widehat{\lambda}$. By convexity of $\bar{f}$ we have
% \begin{align*}
%     \bar{f}(\lambda^*) - \bar{f}(\widehat{\lambda}) \geq  \langle  \nabla \bar{f}(\widehat{\lambda}) , \lambda_* - \widehat{\lambda} \rangle 
% \end{align*}
\begin{align*}
    \bar{f}(\widehat{\lambda}) - \bar{f}(\lambda^*) &\leq  \langle  \nabla \bar{f}(\widehat{\lambda}) ,  \widehat{\lambda} - \lambda_*\rangle  \\
    &=  \langle  \nabla \bar{f}(\widehat{\lambda}) - g(\widehat{\lambda}),  \widehat{\lambda} - \lambda_*\rangle + \langle g(\widehat{\lambda}),  \widehat{\lambda} - \lambda_*\rangle \\
    &\leq 2 \max_k \big| [ \nabla \bar{f}(\widehat{\lambda}) - g(\widehat{\lambda}) ]_k \big| + \max_k \langle g(\widehat{\lambda}),  \widehat{\lambda} - \mb{e}_k \rangle \\
    &\approx 2 \max_k \widehat{\sigma}_k + \max_k \langle g(\widehat{\lambda}),  \widehat{\lambda} - \mb{e}_k \rangle 
\end{align*}
where $\widehat{\sigma}_k^2$ is the empirical variance of $[g(\widehat{\lambda}) ]_k$, namely, the sample variance of $\{ [ \nabla_{\lambda} \max_{h\in H} f(\lambda;h;\rad_i)\Big|_{\lambda = \widehat{\lambda}} ]_k \}_{i=1}^B$ divided by $B$.
Note that $\widehat{\sigma}_k = O(1/\sqrt{B})$.
Thus, we double the batch size $B \mapsto 2B$ whenever $2 \max_k \widehat{\sigma}_k \geq \max_k \langle g(\widehat{\lambda}),  \widehat{\lambda} - \mb{e}_k \rangle$. 
This also motivates our stopping condition: for input $\epsilon$, terminate when $2 \max_k \widehat{\sigma}_k + \max_k \langle g(\widehat{\lambda}),  \widehat{\lambda} - \mb{e}_k \rangle  \leq \epsilon$.
Because $B$ is increasing over time, this stopping condition will always, eventually, be met.

For a fixed $\zeta_i$, to compute the corresponding gradient, we computed $\bar{h} = \argmax_{h\in \mc{H}} f(\lambda;h;\rad)$ and used the fact that  $\nabla_{\lambda} \max_{h\in \H} f(\lambda;h;\rad) = \nabla_{\lambda} f(\lambda;\bar{h};\rad_i)$. As described in the next section, finding $\tilde{h}$ is difficult due to the use of surrogate loss functions. We provide a line search method, $\text{LineSearch}(\lambda, \rad)$, that finds an approximate value for it. Together, the full optimization is as follows:

\begin{algorithm}[H]\small
\begin{algorithmic}
\STATE {\bfseries Goal:} Solve for $\max_\lambda \E_{\zeta \sim N(0, I)}[f(\lambda; h; \zeta)]$, where $f$ inherently depends on $\tilde{h}_k$ and $\hat{\eta}_{k - 1}$.

\STATE {\bfseries Input:} Tolerance $\epsilon$ for stopping criteria.

\STATE {\bfseries Initialize:} $\widehat{\lambda} \longleftarrow \frac{1}{d}\1$.

\REPEAT

\STATE Sample $\rad_1, ..., \rad_B \sim N(0, I)$.

\STATE Compute $\bar{h}_i = \text{LineSearch}(\widehat{\lambda}, \zeta_i)$ for all $i \in [B]$ and their corresponding gradient estimates $\nabla_{\lambda} f(\widehat{\lambda}; \bar{h}_i, \rad_i)$.

\STATE $g(\widehat{\lambda}) \longleftarrow \frac{1}{B} \sum_{i = 1}^B \nabla_{\lambda} f(\widehat{\lambda}; \bar{h}_i, \rad_i)$.

\STATE $\widehat{\lambda} \longleftarrow \Pi( \exp( \log( \widehat{\lambda} ) - \eta g(\widehat{\lambda}) ) )$ where $\eta$ is chosen as described above.

\IF{$2 \max_k \widehat{\sigma}_k \geq \max_k \langle g(\widehat{\lambda}),  \widehat{\lambda} - \mb{e}_k \rangle$}

\STATE $B \longleftarrow 2B$.

\ENDIF

\UNTIL{$2 \max_k \widehat{\sigma}_k + \max_k \langle g(\widehat{\lambda}),  \widehat{\lambda} - \mb{e}_k \rangle  \leq \epsilon$}

\end{algorithmic}
\caption{$\text{SMD}(\tilde{h}_k, \hat{\eta}_{k - 1})$.}

\end{algorithm}

\subsubsection{Line Search}
 Then, computing $\max_{h \in \mc{H}} f(\lambda; h; \rad)$ is equivalent to
\begin{align*}
%&\max_{h \in \mc{H}} f(\lambda; h; \rad) \\
&\min_{r \in \R^+} r \quad \text{subject to} \enspace g(r)\leq 0
\end{align*}
where $a = -2^{-k+1} - \sum_{i\in [n]} (1 - 2\hat{\eta}_{k, i})\tilde{h}_k(x_i)$, $b = \sum_{i\in [n]} \frac{\rad}{n\lambda_i^{1/2}}\tilde{h}_k(x_i)$, $c_i=1 - 2\hat{\eta}_{k-1, i}$, $d_i=-\frac{\rad}{n\lambda_i^{1/2}}$ and 
\[g(r) = a r + b + \max_{h \in \H}\, \sum_{i \in [n]} (c_i r + d_i) h(x_i).\]
In particular at the optimal value of $r$, called $r^{\ast}$, $\argmax_{h \in \mc{H}} f(\lambda; h; \rad) = \argmax_{h \in \mc{H}} g(r^{\ast})$. 

As shown in Section~\ref{sec:efficient_optimization}, given access to a weighted classification oracle, computing $\max_{h \in \H}\, \sum_{i \in [n]} (c_i r + d_i) h(x_i)$ is equivalent to a $0/1$-loss minimization problem that is solvable using a weighted classification oracle. If we had such an oracle then since $g(r)$ is monotonically decreasing as a function of $r$ we can use a binary search procedure to solve this optimization problem. Indeed, for any given range $[r_{\min}, r_{\max}]$ where the optimal $r$ lies in, we are checking if $g(\frac{r_{\min} + r_{\max}}{2}) \leq 0$. If that is the case, we just set $r_{\min}\leftarrow \frac{r_{\min} + r_{\max}}{2}$, otherwise, we set $r_{\max}\leftarrow \frac{r_{\min} + r_{\max}}{2}$. We then check the sign of $g(\frac{r_{\min} + r_{\max}}{2})$ again and repeat this procedure until a sufficient tolerance is met.

However, in practice, we do not have access to a weighted $0/1$ loss oracle and must employ a convex surrogate loss that may not correctly solve the weighted classification problem. To be concrete, in all of our experiments we used Scikit-learn's \texttt{LogisticRegression} classifier. Given such a surrogate, which we denote as $\widetilde{\max}_{h\in H}$ to acknowledge that it may not find the optimal $h$, the resulting function
\[\tilde{g}(r) = a r + b +\widetilde{\max_{h \in \H}}\, \sum_{i \in [n]} (c_i r + d_i) h(x_i)\]
may no longer be monotonically decreasing in $r$ hence a binary search procedure would fail. However intuitively it suffices to look at a large enough set of $r$'s near a zero $\tilde{g}(r)$. To overcome this issue, we used the procedure in Algorithm~\ref{alg:linesearch}.

Algorithm~\ref{alg:linesearch} overcomes this issue by considering a large set of potential $r$ values and for each value computing the corresponding $h_r = \argmax \tilde{g}(r)$ and stores these in the array $S$. It then returns $\argmax_{h\in S} f(\lambda;h;\zeta)$. The set of $r$ values considered is chosen by a multi-scale procedure that looks at $r$ values on a finer and finer geometric grid given a budget $N_{\max}$.
%for the are relaxing the $0/1$-loss in the weighted classification oracle, $a r + b + \max_{h \in \H}\, \sum_{i \in [n]} (c_i r + d_i) h(x_i) \leq 0$ becomes locally non-convex but still globally decreasing. Therefore, we propose the following search algorithm to account for the non-convexity.
\begin{algorithm}[H]\small
\begin{algorithmic}
\STATE {\bfseries Goal:} Solve for $\widetilde{\max}_{h \in \H} a r + b + \sum_{i \in [n]} (c_i r + d_i) h(x_i)$.
\STATE Compute $\bar{h} = \widetilde{\max}_{h \in \H}\, \sum_{i \in [n]} (c_i r + d_i) h(x_i)$ by the relaxed weighted classification oracle.
\STATE Set $\widehat{g} \leftarrow a r + b + \sum_{i \in [n]} (c_i r + d_i) \bar{h}(x_i)$.
\STATE $S \leftarrow S \cup \{\bar{h}\}$.
\STATE {\bfseries Return:} $\widehat{g}, S$.
\end{algorithmic}
\caption{Oracle(r, S).}
\end{algorithm}

\begin{algorithm}[H]\small
\begin{algorithmic}
\STATE {\bfseries Goal:} Solve for $\max_{h \in \mc{H}} f(\lambda; h; \rad)$.
\STATE {\bfseries Input:} fixed $\lambda$ and $\rad$; maximum number of iterations $N_{\max}$; tolerance $\epsilon$.
\STATE {\bfseries Initialize:} $S \leftarrow \{\}$, $r \leftarrow 100$, $\gamma \leftarrow 10$, $\delta \leftarrow \sqrt{2}$, and $t \leftarrow 0$.
\STATE $\widehat{g}, S \leftarrow Oracle(r, S)$.
\WHILE{$\widehat{g} < 0$ and $t < N_{\max}$} 
    \STATE $r\leftarrow r / 2$. \COMMENT{$r$ is too large}
    \STATE $\widehat{g}, S \leftarrow Oracle(r, S)$.
    \STATE $t \leftarrow t + 1$.
\ENDWHILE
\FOR{$j = t, ..., N_{\max} - 1$}
    \STATE $\widehat{g}, S \leftarrow Oracle(r, S)$.
    \IF{$\widehat{g} > 0$}
    \STATE $r \leftarrow \gamma \cdot r$. \COMMENT{$r$ is too small, need to increase $r$ so that $\widehat{g}$ decreases.}
    \ELSE
    \STATE $r \leftarrow r / \gamma^2$. \COMMENT{Reaches a point where $r$ is too large, scale back.}
    \STATE $\gamma \leftarrow \gamma / \delta$. \COMMENT{Start searching with a finer grid scale.}
    \ENDIF
\ENDFOR
\STATE {\bfseries Return:} $\argmax_{h \in S} f(\lambda; h; \rad)$.
\end{algorithmic}
\caption{$\text{LineSearch}(\lambda, \rad)$.}
\label{alg:linesearch}
\end{algorithm}

\subsection{Sampling}
The last portion of our algorithm is the sampling scheme. The algorithm described in Algorithm~\ref{alg:fixed_budget_comp} does not reuse samples between rounds to compute an estimate for $\hat{\eta}_k$. In practice, this can be very wasteful and we instead want an estimator based on all samples up to and including those taken in round $k$. In each round the algorithm computes $\lambda_k$ with the goal of $\lambda_k\approx \lambda_{\ast}$ for large enough values of $k$ to ensure that we are sampling from the optimal distribution in that round. Hence, since we are recycling samples, we need to ensure that the distribution of \emph{all} samples taken by the end of round $k$, including those in previous rounds, match $\lambda_k$.

To ensure those, we use a waterfulling technique. We set $p_1=\lambda_1$ and then for each $k\geq 1$ we set %Suppose at round $k$, we sample queries from $p_k$. As proposed in Algorithm~\ref{alg:fixed_budget_comp}, we would set $p_k = \lambda_k$. However, since our previous queries are sampled from $p_1, ..., p_{k-1} \neq \lambda_k$, we use a waterfilling method and sample from:
\begin{align}
    p_k = \argmin_{q\in \Delta_n} \max_{j\leq n} \max \{0, k \cdot \lambda_{k, j} - (\sum_{i=1}^{k-1} p_{i, j}) - q_{j} \}.
\end{align}
%Intuitively, we are trying to combine $p_k$ with $p_1, ..., p_{k-1}$, so that $\frac{1}{k}\sum_{i=1}^k p_i \approx \lambda_k$. 
Our algorithm being implemented is shown below:

\begin{algorithm}[H]\small
\begin{algorithmic}
\STATE {\bfseries Input:} Budget $T$, tolerance $\epsilon > 0$, batch size $N$ (default 250).
\STATE $\hat\eta_{0} \longleftarrow 0$ 
\FOR{$k=1,2,\ldots, \floor{\log_2(\epsilon^{-1})}$}
    \STATE $\tilde{h}_{k} \longleftarrow \argmin_{h \in \H} \wt{\err}(h, \hat{\eta}_{k-1})$.
    \STATE \textbf{Optimization}
    \STATE $\lambda_k \longleftarrow \text{SMD}(\tilde{h}_k, \hat{\eta}_{k - 1})$.
    \STATE \textbf{Sampling}
    %\STATE Compute the waterfilled distribution $p_k = \argmin_p \max_j \max \{0, k \cdot \lambda_{k, j} - (\sum_{i=1}^{k-1} p_{i, j}) - p_{j} \}$.
    
    \STATE Sample till observing $N$ unique $\{x_1^{(k)}, \ldots, x_{N}^{(k)} \}  \sim p_k$ that has not been queried before (from rounds $1, ..., k-1$).
    \STATE Query $x_1, \ldots, x_{N} $ and observe $y_1, \ldots, y_{N}$.
    \STATE Compute an estimate $\hat\eta_k$ with the naive estimator $\hat\eta_k^{(\text{Naive})}$ as defined in Section~\ref{sec:experiment}.
\ENDFOR
\STATE {\bfseries Return:} $ \argmin_{h \in \H} \wt{\err}^{(k)}(h)$
\end{algorithmic}

\caption{Fixed Budget ACED with Waterfilling.}
\label{alg:aced_mod}
\end{algorithm}

\subsection{Batched IWAL}

In this section we explain our implementation of the IWAL algorithm\cite{beygelzimer2010agnostic} (and variants such as IWAL1 and oracular variants) for streaming active algorithms. In round $k$ we assume access to a (labeled) dataset $S_{k-1}=\{(x_i, y_i,p_i)_{i=1}^{n_k}\}\subset \mc{X}\times \{\pm 1\}\times [0,1]$, with $n_k\leq k$. Given a new point from a stream, $x_k$, the decision to label $x_k$ is made by computing two hypothesis. Firstly we compute \[h_k=\argmin_{h\in \H} \sum_{i=1}^{n_k} \frac{\1\{h(x_i)\neq y_i\}}{p_i}\] and second we compute \[h'_k=\argmin_{\substack{h\in \H\\ h_k(x_k)\neq h'_k(x_k)} } \sum_{i=1}^{n_k} \frac{\1\{h(x_i)\neq y_i\}}{p_i}.\] Thus to compute $h'_k$ we need access to a weighted classification oracle for 0/1 loss that can handle a single constraint.

In general including the constraint $h_k(x_k)\neq h_k'(x_k)$ is not easy for arbitrary classes and past methods have considered the optimization 
\[\argmin_{h\in \H } \sum_{i=1}^{n_k} \frac{\1\{h(x_i)\neq y_i\}}{p_i} + q \1\{h(x_k)\neq h_k(x_k)\} \]
for a sufficiently large weight $q\in \R$ to ensure the constraint\cite{karampatziakis2010online} . 

However in the case of linear classes under the surrogate convex logistic loss, the main focus of experiments in this paper, we take a different approach.
Assume that $\mc{X}\subset \mathbb{R}^p$ and that $\H = \{h(x) = w^{\top}x+b:w\in \R^p, b\in \R\}$. W.l.o.g. assume that $h_k(x_k)=-1$. Under this convex relaxation, we seek a classifier where $w^{\top}x_k+b = \epsilon$, where $\epsilon\geq 0$ i.e. the linear predictor flips the predicted sign of $x_k$ and has margin $\epsilon$. Thus we learn (an approximate) $h_k'$ by solving
\begin{align*}
    \min_{\substack{w\in \R^p,b\in \R\\ w^{\top}x+b=\epsilon}} \sum_{i=1}^{n_k} \frac{1}{p_i} \log(1+\exp(-y_i(w^{\top}x+b)))
    &= \min_{w\in\R^p} \sum_{i=1}^{n_k} \frac{1}{p_i} \log(1+\exp(-y_i(w^{\top}x_i-w^{\top}x_k+\epsilon)))\\
    &= \min_{w\in\R^p} \sum_{i=1}^{n_k} \frac{1}{p_i} \log(1+\exp(-y_i(w^{\top}(x_i-x_k)+\epsilon)))
\end{align*}
This is a convex optimization problem with no constraints that is easily solvable using a procedure for logistic regression where we assume that the intercept is some $\epsilon$ with really small magnitude.

\section{Full Scale Plots for Performances on the Pool} \label{sec:full_scale}
\begin{figure}[H]
    \centering
    \includegraphics[width=.7\linewidth]{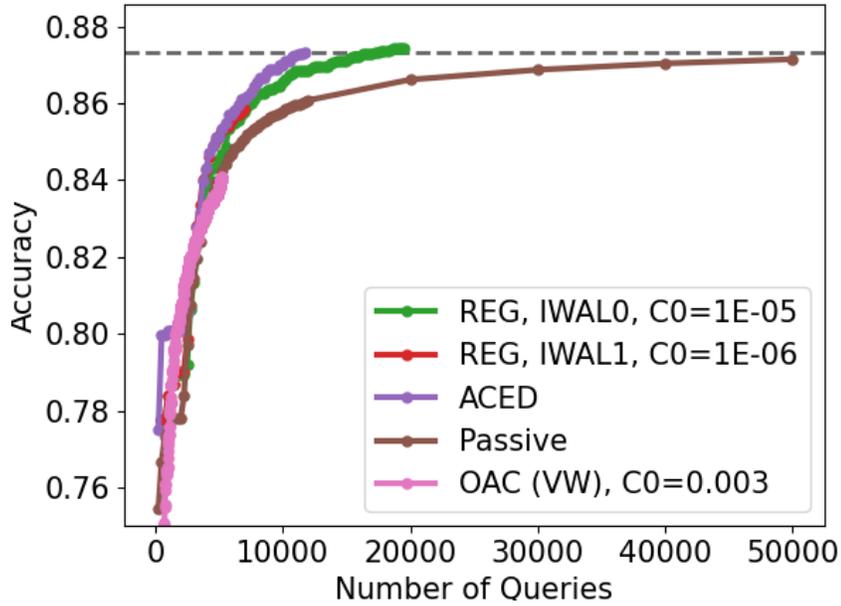}
    \caption{Full scale of Figure~\ref{fig:mnist}}
\end{figure}
\begin{figure}[H]
    \centering
    \includegraphics[width=.7\linewidth]{fig/svhn_plot.png}
    \caption{Full scale of Figure~\ref{fig:svhn}}
\end{figure}
\begin{figure}[H]
    \centering
    \includegraphics[width=.7\linewidth]{fig/fashion_plot.png}
    \caption{Full scale of Figure~\ref{fig:fashion}}
\end{figure}
\begin{figure}[H]
    \centering
    \includegraphics[width=.7\linewidth]{fig/cifar_plot.png}
    \caption{Full scale of Figure~\ref{fig:cifar}}
\end{figure}

\section{Hyperparameters for Baselines} \label{sec:base_hparam}
We searched in the following $C_0$ that uses the same grid fineness as \citep{huang2015efficient}:
\begin{table}[H]
    \centering
    \begin{tabular}{ccccc}
        Baseline & MNIST & SVHN & FashionMNIST & CIFAR \\
        \hline
        IWAL-0\&1 & $10^{-7}, 10^{-6}, ..., 1$ & $10^{-7}, 10^{-6}, ..., 1$ & $10^{-8}, 10^{-7}, ..., 1$ & $10^{-7}, 10^{-6}, ..., 1$\\
        ORA-IWAL-0\&1 & N/A & $10^{-4}, 10^{-3}, ..., 10^{-1}$ & $10^{-8}, 10^{-7}, ..., 1$ & $10^{-4}, 10^{-3}, ..., 10^{-1}$\\
        OAC & $.001, .003, .01, .03$ & $.01, .03, .1, .3$ & $.001, .003, ..., 1$ & $.001, .003, .01, .03$
    \end{tabular}
    \caption{Ranges of $C_0$ searched for different experiments.}
\end{table}

\begin{figure}[H]
    \centering
    \includegraphics[width=.7\linewidth]{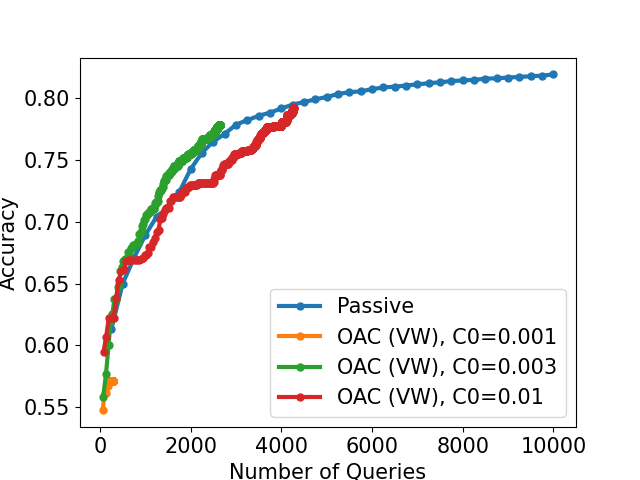}
    \caption{Sensitivity of OAC to $C_0$ on the CIFAR 2 vs 7 dataset (training accuracy).}
    \label{fig:cifar_sensitivity}
\end{figure}

We used the following amount of passes over dataset for the following baselines. For OAC, we made sure the number of passes is sufficient enough so that the algorithm is no longer taking more queries.
\begin{table}[H]
    \centering
    \begin{tabular}{ccccc}
        Baseline & MNIST & SVHN & FashionMNIST & CIFAR \\
        \hline
        IWAL-0\&1 and ORA-IWAL-0\&1 & $1$ (N/A for ORA variants) & $2$ & $2$ & $2$\\
        OAC & $5$ & $10$ & $10$ & $10$
    \end{tabular}
    \caption{Ranges of $C_0$ searched for different experiments.}
\end{table}
For Vowpal Wabbit, we used an initial learning rate of $0.5$ for CIFAR, and $1$ for every other experiments.

\section{Generalization Performance on Holdout Set} \label{sec:test_perf}
In the following figures, we show performances of the algorithms on a holdout test set. We note that there's no algorithm that is consistently the best among all four experiments, but ACED is consistently among the top two algorithms, and is the best in two of the four experiments. We also reiterate that the OAC curves have stopped taking more queries at the end, so their final accuracies are inferior than other algorithms in most cases.

\begin{figure}[H]
    \centering
    \includegraphics[width=.7\linewidth]{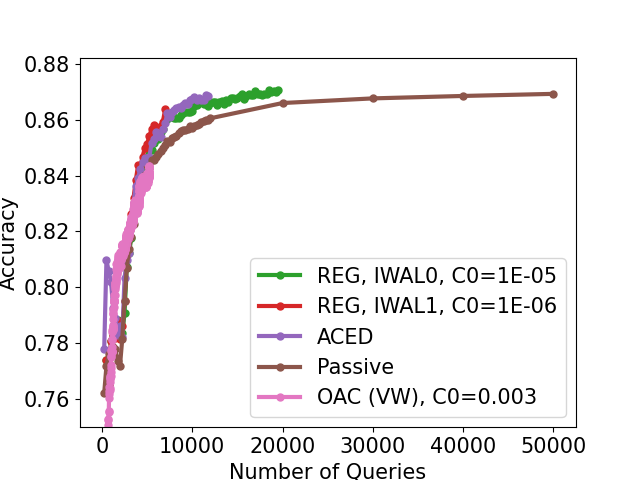}
    \caption{MNIST performance on test set}
\end{figure}
\begin{figure}[H]
    \centering
    \includegraphics[width=.7\linewidth]{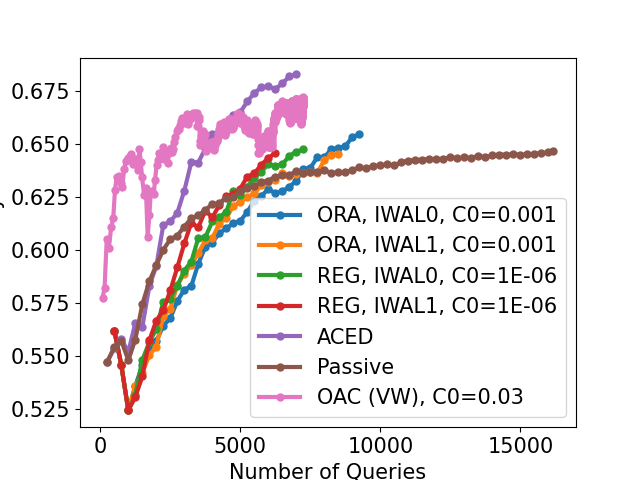}
    \caption{SVHN performance on test set}
\end{figure}
\begin{figure}[H]
    \centering
    \includegraphics[width=.7\linewidth]{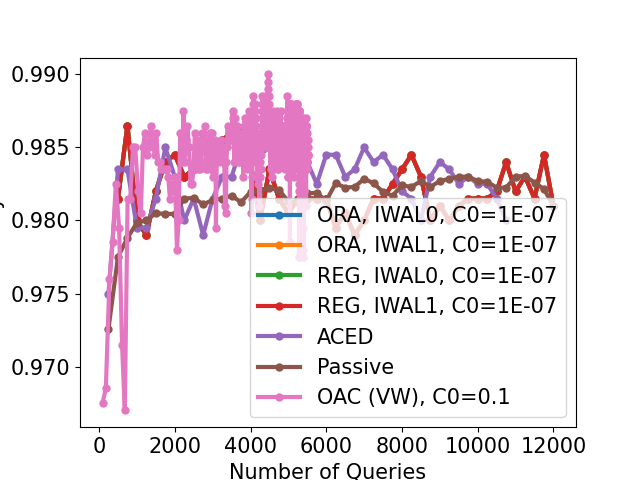}
    \caption{FashionMNIST performance on test set}
\end{figure}
\begin{figure}[H]
    \centering
    \includegraphics[width=.7\linewidth]{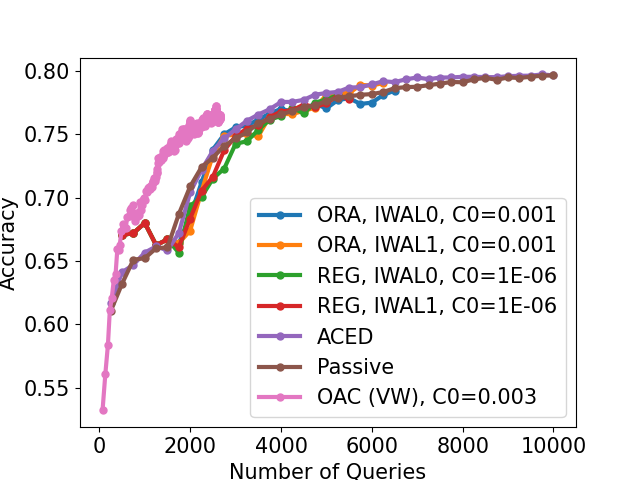}
    \caption{CIFAR performance on test set}
\end{figure}

\clearpage

\end{document}